\documentclass{article} 
\usepackage{iclr2027_conference,times}

\usepackage{ifxetex}\ifxetex\usepackage{fontspec}\fi

\usepackage{amsmath,amsfonts,bm}

\def\eqref#1{equation~\ref{#1}}

\def\1{\bm{1}}

\DeclareMathAlphabet{\mathsfit}{\encodingdefault}{\sfdefault}{m}{sl}
\SetMathAlphabet{\mathsfit}{bold}{\encodingdefault}{\sfdefault}{bx}{n}

\usepackage{hyperref}
\usepackage{url}
\usepackage{booktabs}
\usepackage{graphicx}
\usepackage{amsmath,amssymb,amsthm}
\usepackage{multirow}
\usepackage{xcolor}

\newtheorem{proposition}{Proposition}
\newtheorem{lemma}{Lemma}
\newtheorem{observation}{Observation}

\newcommand{\Vth}{V_\theta}
\newcommand{\Vbar}{\bar{V}}
\newcommand{\Dpref}{\mathcal{D}_{\mathrm{pref}}}
\newcommand{\Doff}{\mathcal{D}}
\newcommand{\segA}{\sigma^{+}}
\newcommand{\segB}{\sigma^{-}}


\title{Certified Safety Curation: Distribution-Free Guarantees for Safe Offline Reinforcement Learning}

\author{Adam Haroon\thanks{Corresponding author: \texttt{aharoon@iastate.edu}}, \ Cody Fleming \\
Iowa State University\\
Ames, IA, USA\\
\texttt{\{aharoon,flemingc\}@iastate.edu}}

\iclrfinalcopy

\begin{document}

\maketitle
\lhead{Preprint. Under review.}

\begin{abstract}
Safe offline reinforcement learning assumes a cost function on every transition. We ask what remains possible when safety can be judged only by comparing short clips and occasionally asking whether an episode exceeded its budget. Certified safety curation answers with a filter-then-clone pipeline: a state-only value trained from segment comparisons scores whole trajectories, Learn-then-Test calibration certifies a selection threshold under a distribution-free $(\alpha, \delta)$ bound on the unsafe fraction of the selection, and behavior cloning follows. What is certified is the training set, not the policy, whose cost we report rather than bound. Where no threshold attains the target, the procedure refuses. We are not aware of prior work certifying the composition of a training set for offline RL or imitation. Oracle controls justify the design: reweighting individual transitions fails even with an exact value, so the value selects whole trajectories. The policies satisfy the cost budget on twelve of fifteen DSRL tasks, matching a clone of the ground-truth safe subset, which needs a label on every trajectory. Retrained on the certified selection, the strongest full-label method becomes safe where no setting of its own cost target rescues it. Refusal is predictable: the certificate's probability has a closed form in the purity of the pool's top quantile, which the calibration sample estimates and through which the scorer enters.
\end{abstract}

\section{Introduction}
\label{sec:intro}

Offline reinforcement learning promises policy improvement from logged data alone; its safety-constrained variant additionally requires the learned policy to keep expected cost below a budget \citep{liu2024dsrl}. Existing safe offline RL methods require a cost function evaluated on every transition of the dataset \citep{liu2023cdt, zheng2024fisor, gong2025trac}. On benchmarks it is given; the difficulty is upstream, in writing that function down, which is the problem reward engineering has proven to be and why preference-based learning exists \citep{christiano2017deep}. Where no trustworthy cost function can be specified, a practitioner may still be able to say which of two short clips looks safer, and to judge occasionally whether an episode went over budget. This paper asks what safe offline RL can still deliver from those two signals alone. Recent work has begun importing preferences into safe offline RL \citep{gong2026presa, burnwal2026osil}. None of it, however, carries a statistical guarantee.

\begin{figure}[t]
\begin{center}
\includegraphics[width=\linewidth]{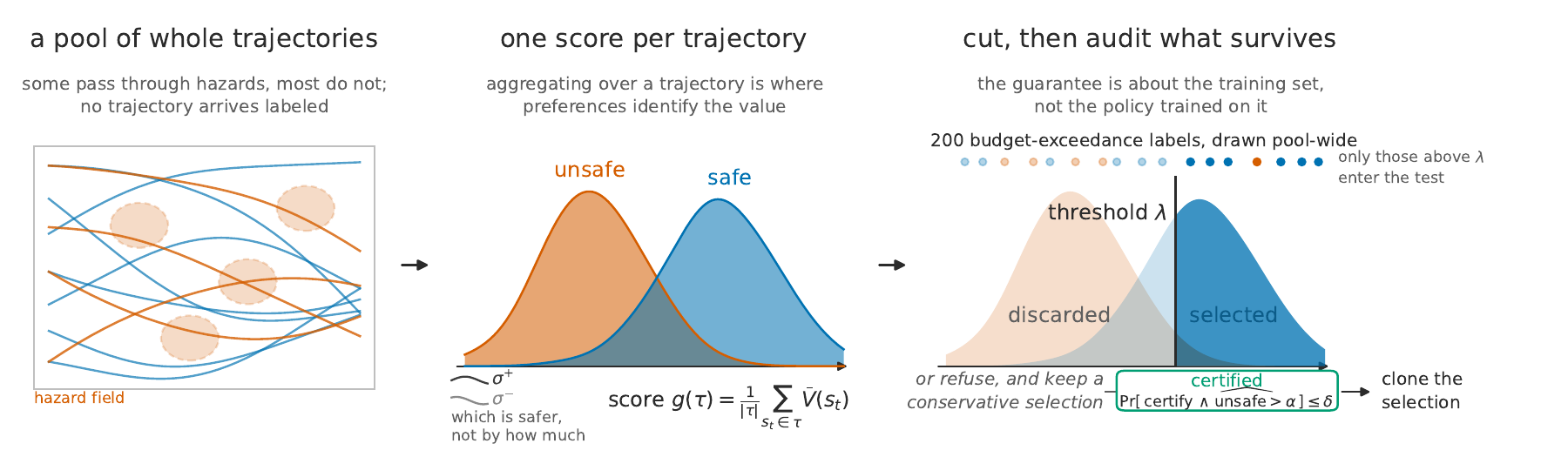}
\end{center}
\caption{Certified safety curation. A state-only value trained from segment
comparisons, which carry an ordering and no magnitudes, scores whole
trajectories. A threshold cuts the score axis and $200$ episodic budget labels
audit what survives, certifying the selection at $(\alpha, \delta)$ or refusing
and returning a conservative selection reported as uncertified; either way the
selection is cloned, and a certified selection may first be curated for reward
(Section~\ref{sec:gating}). Paths and densities are illustrative, not measured.}
\label{fig:concept}
\end{figure}

The question we study is whether a \emph{state-only} safety value, one that never sees actions and is trained only from comparisons, can identify safe training data well enough to be certified. Our answer is certified safety curation (Figure~\ref{fig:concept}), a filter-then-clone method whose deliverable is a statistical guarantee on the data it trains on: the value scores whole trajectories, a threshold selects, and the threshold is chosen by Learn-then-Test calibration \citep{angelopoulos2025ltt} against a small labeled sample. With probability at least $1-\delta$ over that draw, the unsafe-trajectory fraction of the selected training set is at most $\alpha$. The guarantee consumes 200 budget-exceedance judgments, one bit per sampled trajectory, where competing methods require a cost value on every transition.

Our contributions are three. First, a granularity analysis with oracle controls establishing that the readout scale, not the supervision, is the binding choice (Section~\ref{sec:analysis}); the intermediate segment scale fails even with full labels, BC-Safe-Seg being safe on two of fifteen tasks against twelve at trajectory scale (Section~\ref{sec:results}). Second, certified curation: Learn-then-Test calibration of the selection threshold against 200 budget-exceedance labels, with an $(\alpha, \delta)$ guarantee on training-set composition that refuses wherever the score cannot concentrate safety below $\alpha$, and whose certification probability has a closed form the calibration sample estimates (Section~\ref{sec:calibration-method}). Third, a measured account of what the supervision buys. CDT violates on eight Goal and Button tasks at every cost target down to a fifth of the budget, yet is safe on the certified selection. A classifier trained on the calibration labels alone comes within one safe task of the calibrated filter, eleven against twelve at matched budget, and replacing its bits with numeric costs does not improve it. What separates methods is certification, not scorer accuracy or supervision richness (Section~\ref{sec:experiments}).

\section{Related Work}
\label{sec:related}

\paragraph{Filtered and weighted behavior cloning.}
Selecting or reweighting data before cloning is well established for reward: percentile behavior cloning ranks trajectories by return \citep{chen2021dt, emmons2022rvs}, BAIL selects transitions near a return envelope \citep{chen2020bail}, CRR and AWR weight transitions by learned advantages \citep{wang2020crr, peng2019awr}, and concurrent work scores demonstrations with learned feedback models before cloning \citep{liang2026robometer, beck2025sfbc}. This line targets task success; none addresses a safety constraint, and every threshold in it is hand-tuned.

\paragraph{Safe offline RL.}
Methods with full cost labels include constrained sequence modeling \citep{liu2023cdt}, feasibility-guided diffusion \citep{zheng2024fisor}, and trajectory classification \citep{gong2025trac}. The DSRL benchmark's BC-Safe baseline clones trajectories under the cost budget, a label on every trajectory \citep{liu2024dsrl}. Preference and weak-label variants are emerging: PReSa consumes binary segment safety labels inside a Lagrangian objective \citep{gong2026presa}; OSIL learns a per-step cost model from dataset-level preferences and applies it as a cloning penalty \citep{burnwal2026osil}; PREFINE fine-tunes implicit reward and cost models from preferences for safety alignment \citep{prefine2025}. None selects training data with a preference-learned value, and none carries a statistical guarantee.

\paragraph{Distribution-free risk control.}
Learn-then-Test and risk-controlling prediction sets calibrate model configurations with finite-sample guarantees \citep{angelopoulos2025ltt, bates2021rcps, angelopoulos2024crc}. In RL and control, published applications attach to a policy: conformal off-policy intervals \citep{taufiq2022copp}, runtime monitors \citep{feldman2025conformal}, and high-confidence policy improvement \citep{thomas2015hcpi, laroche2019spibb}. The nearest statistical relative is conformal selection \citep{jin2023conformal}, which screens candidate units under false discovery rate control; FDR is an expectation bound, weaker than the $(\alpha, \delta)$ high-probability bound we calibrate. Our calibration takes Learn-then-Test's fixed sequence but supplies an exact finite-population p-value, proved super-uniform here (Lemma~\ref{lem:superunif}), and a closed form for how often it certifies, which validity alone does not give (Proposition~\ref{prop:rate}); the contribution is that and the object certified, the composition of a training set.

\section{Preliminaries}
\label{sec:prelim}

A constrained MDP $(\mathcal{S}, \mathcal{A}, P, r, c, b)$ maximizes return subject to $J_c(\pi) \le b$; the offline setting provides a static dataset $\Doff = \{\tau_i\}_{i=1}^{N}$. We call a trajectory unsafe when its episodic cost, the sum of its per-step
costs, exceeds the budget and write $u(\tau) \in \{0, 1\}$ for this indicator. A constrained MDP supplies $c$ by definition and benchmarks apply it to every transition; our method never reads it, using two weaker signals instead. The first is a preference dataset $\Dpref$ of pairs of length-$H$ segments $(\segA, \segB)$, each labeled only with which is safer, an ordering rather than a magnitude. The second, used only for calibration, is a \emph{cost label}: the binary indicator $u(\tau)$ on each of $n$ trajectories drawn uniformly without replacement from $\Doff$, which makes the finite-population test of Section~\ref{sec:calibration-method} exact. We call whoever supplies these signals the \emph{labeler}; in our experiments it is the benchmark's own cost function, which lets us study the algorithm without confounding it with labeler error. We report raw reward and episodic cost against each task's budget; a policy is safe when its expected cost is within budget (Appendix~\ref{app:baselines}).

\section{Method: Certified Safety Curation}
\label{sec:method}

The method has four stages: value learning, trajectory scoring, calibrated selection, and cloning; pseudocode is in Appendix~\ref{app:algo}.

\subsection{Safety value from segment preferences}
\label{sec:value}
We learn a state-only value $\Vth: \mathcal{S} \to \mathbb{R}$ by maximum likelihood under a Bradley-Terry model \citep{bradley1952rank} over segment sums,
\begin{equation}
\label{eq:bt}
\mathcal{L}(\theta) = -\mathbb{E}_{(\segA, \segB) \sim \Dpref}\Big[\log \mathrm{logistic}\Big(\textstyle\sum_{t=1}^{H} \Vth(s_t^{+}) - \sum_{t=1}^{H} \Vth(s_t^{-})\Big)\Big],
\end{equation}
where $\mathrm{logistic}(x) = 1/(1+e^{-x})$. We train an ensemble of $K$ independently initialized networks and write $\Vbar$ for their mean (Appendix~\ref{app:hyper} fixes $K$ and reports the sensitivity). The ensemble is not what makes selection work, and single members match its filter precision to within $0.006$ (Appendix~\ref{app:hyper}); we keep it because member disagreement is the diagnostic Section~\ref{sec:analysis} uses to locate where the value is and is not identified. Equation~\ref{eq:bt} supervises only segment sums: any perturbation whose sums agree across every preference pair is invisible to the loss, so per-state values are underdetermined and aggregation is the value's reliable regime, which Section~\ref{sec:analysis} quantifies.

\subsection{Trajectory scoring and selection}
\label{sec:scoring}
Each trajectory receives the score $g(\tau) = \frac{1}{|\tau|} \sum_{s_t \in \tau} \Vbar(s_t)$, and the selection at threshold $\lambda^{*}$ is $S(\lambda^{*}) = \{\tau_i : g(\tau_i) \ge \lambda^{*}\}$. The policy is obtained by behavior cloning on $S(\lambda^{*})$. Averaging over hundreds of states cancels the per-state noise that Equation~\ref{eq:bt} leaves unconstrained, and selection at the trajectory level preserves whole multi-step avoidance behaviors that per-transition reweighting fragments.

\subsection{Calibrated threshold with a composition guarantee}
\label{sec:calibration-method}
The threshold is the method's only free parameter, selected by Learn-then-Test (LTT) \citep{angelopoulos2025ltt}. Let $\mathcal{C}$ be a calibration sample of $n$ trajectories drawn uniformly without replacement from $\Doff$, each carrying its binary label $u(\tau)$. For a decreasing grid of candidate thresholds $\lambda_1 > \lambda_2 > \cdots$ (score quantiles from $0.85$ down to $0.30$), let $N_j = |S(\lambda_j)|$ be the selection size, and let $m_j$ and $k_j$ be the number of calibration trajectories above $\lambda_j$ and the number of those that are unsafe. Because the pool is finite and the selection sizes are known, the null hypothesis that the selection contains more than $\alpha N_j$ unsafe trajectories admits an exact test: under the least-favorable null the unsafe calibration count is hypergeometric, and $p_j = F_{\mathrm{hyp}}\big(k_j;\, N_j,\, \lfloor \alpha N_j \rfloor + 1,\, m_j\big)$ is a super-uniform p-value, where $F_{\mathrm{hyp}}$ is the hypergeometric CDF. Strict fixed-sequence testing tests the hypotheses in order, each at level $\delta$, stops at the first failure to reject, and returns the last rejected threshold; if the first test fails, no threshold is certified.

\begin{proposition}
\label{prop:guarantee}
The returned threshold $\hat\lambda$ satisfies
\begin{equation}
\label{eq:guarantee}
\Pr\Big[\,\textnormal{a threshold is certified}\ \wedge\ \mathrm{unsafe}\big(S(\hat\lambda)\big) > \alpha\,\Big] \;\le\; \delta,
\end{equation}
where $\mathrm{unsafe}(S) = \frac{1}{|S|}\sum_{\tau \in S} u(\tau)$ is the realized unsafe fraction of the selection under the true labels on $\Doff$, and the probability is over the uniform calibration draw. A proof is given in Appendix~\ref{app:proof}.
\end{proposition}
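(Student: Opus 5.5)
The plan is the standard Learn-then-Test argument, specialized to the finite-population hypergeometric p-value. It has three parts: fix the threshold grid and the selections, establish validity of each p-value, and then control family-wise error under fixed-sequence testing.

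\textbf{Step 1: the grid and selections are fixed.} I would first argue that, conditional on $\Doff$, the scorer $\Vbar$ is fixed. It is trained on $\Dpref$ only, which is independent of the calibration draw. Because the grid thresholds are score quantiles of the pool, the $\lambda_j$, the selections $S(\lambda_j)$, the sizes $N_j$, and the true unsafe counts $U_j=\sum_{\tau\in S(\lambda_j)}u(\tau)$ are all deterministic. The only randomness is the uniform draw $\mathcal{C}$ of size $n$.

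For each $j$, define the null $H_j:\ U_j>\alpha N_j$, equivalently $U_j\ge\lfloor\alpha N_j\rfloor+1$. The target event "a threshold is certified and $\mathrm{unsafe}(S(\hat\lambda))>\alpha$" is exactly "some true null $H_j$ is rejected". This holds because $\hat\lambda=\lambda_j$ is returned only if $H_j$ was rejected. So it suffices to bound the family-wise error rate of fixed-sequence testing by $\delta$.

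\textbf{Step 2: super-uniformity of $p_j$ under $H_j$.} I would invoke Lemma~\ref{lem:superunif}, stated earlier, which gives exactly this. If it needs rederiving, the argument goes as follows.

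Conditional on $m_j$, the $m_j$ calibration trajectories landing in $S(\lambda_j)$ form a uniform without-replacement sample of size $m_j$ from $S(\lambda_j)$. This is the key symmetry of uniform sampling without replacement. Hence $k_j\mid m_j\sim\mathrm{Hyp}(N_j,U_j,m_j)$.

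The hypergeometric CDF $F_{\mathrm{hyp}}(k;N,K,m)$ is nonincreasing in the success count $K$, by a standard coupling: adding a success state can only increase the draw count. Therefore under $H_j$ we have
\[
\Pr\bigl(p_j\le\delta\mid m_j\bigr)\le \Pr\bigl(F_{\mathrm{hyp}}(k_j;N_j,U_j,m_j)\le\delta\mid m_j\bigr)\le\delta ,
\]
where the last inequality is the usual super-uniformity of a discrete CDF evaluated at its own random variable. Integrating over $m_j$ removes the conditioning.

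\textbf{Step 3: fixed-sequence FWER.} Let $j_0$ be the first index in the fixed order whose null is true. If no null is true, the target event is empty and the bound holds trivially. Otherwise, any rejection of a true null requires reaching and rejecting $H_{j_0}$, because testing stops at the first failure and $H_{j_0}$ precedes every later true null. Thus
\[
\Pr[\text{some true null rejected}]\le\Pr[p_{j_0}\le\delta]\le\delta .
\]
Note that $j_0$ is deterministic given $\Doff$, because the ordering and the truth of each null do not depend on $\mathcal{C}$. This is what makes the argument work without any monotonicity of risk across the grid. Finally, the refusal case certifies nothing and contributes no error, which gives \eqref{eq:guarantee}.

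\textbf{Main obstacle.} The delicate step is Step 2: justifying the conditional hypergeometric law given the random $m_j$, and the monotonicity in $K$ that lets the least-favorable value $\lfloor\alpha N_j\rfloor+1$ stand in for the unknown $U_j$. I would handle the first by exchangeability of the without-replacement draw restricted to a fixed subset. For the second, I would take a coupling that adds unsafe labels one at a time.
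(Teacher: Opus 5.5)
Your proposal is correct and follows the paper's proof essentially step for step. Both arguments use the conditional hypergeometric law of $k_j$ given $m_j$, then stochastic monotonicity in the unsafe count together with super-uniformity of a discrete CDF, then the first-true-null argument for the strict fixed sequence. One small correction: the target event is contained in, not equal to, the event that some true null is rejected, since the walk can continue past a rejected true null and return a deeper threshold whose null is false. Inclusion is all your argument uses, so the proof is unaffected.
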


Validity needs no assumption that the unsafe fraction decreases as the threshold rises: a true null is never tested after a rejection, and monotonicity affects only \emph{power}, that is, how deep into the grid the walk can reach before it stops. The grid and the score are fixed before the calibration sample is drawn. The guarantee is transductive over the finite pool and holds per calibration run, one pass of the Appendix~\ref{app:algo}
procedure, with no joint claim across tasks or seeds. It is also unconditional: it bounds the joint probability of certifying and violating, not the violation probability given a certificate, which can exceed $\delta$ where certification is rare (Appendix~\ref{app:proof}, Section~\ref{sec:guarantee-validation}). The grid starts at the $0.85$ quantile because deeper thresholds are statistically uncertifiable at practical $n$ and sit in the score tail the Bradley-Terry null space leaves unidentified (Section~\ref{sec:analysis}); when no threshold certifies, the procedure returns an explicitly uncertified fallback: the top fraction given by a one-sided Clopper-Pearson lower bound on pool safe mass, from the calibration labels already spent.

\subsection{Certification-gated reward-aware curation}
\label{sec:gating}
A safety-only filter discards the dataset's reward information, so within a selection we optionally retain the top half by episodic return before cloning. Because return and cost correlate on contaminated selections, this is safe exactly when the selection is clean, so we gate it on the certificate, which is both a guarantee on composition and a decision signal for how aggressively to curate. The hard rule is the extreme point of a family we also instantiate softly: \emph{certified return-weighted cloning} resamples the certified selection with probability proportional to $\exp(\mathrm{clip}(z(R_i), \pm \kappa))$, where $z(R_i)$ standardizes episodic return within the selection and the clip holds it in $[-\kappa, \kappa]$, setting the return-seeking intensity. This is the operator the granularity analysis prescribes: the exponentiated weighting of advantage-weighted regression, moved to the trajectory scale where selection composes, meaning the selection can be handed to a different learner and still keep it within budget, and run only behind the certificate. Its safety on certified selections is robust to $\kappa$, and beyond the certified composition it is not (Section~\ref{sec:results}, Appendix~\ref{app:composea40}).

\section{Why Trajectory Scale: an Oracle-Controlled Analysis}
\label{sec:analysis}

The algorithm of Appendix~\ref{app:algo} clones a \emph{selection}: the value picks trajectories and behavior cloning does the rest. The standard alternative reweights individual transitions instead; everything analyzed in this section is that alternative, not a component of our method. It is advantage-weighted regression \citep{peng2019awr}: form a per-transition advantage from the value, weight each transition by $\exp(A/\beta)$, and fit a policy to the reweighted data. The value is a scoring signal in both cases; only the granularity of its use differs. The analysis uses nine DSRL tasks \citep{liu2024dsrl}, the five velocity and four Goal-family tasks, called the \emph{analysis tasks} throughout; Section~\ref{sec:experiments} evaluates the full fifteen. Labels are exact, isolating algorithmic questions from labeler noise, and segment-level labeling makes the value's supervision genuinely sub-trajectory (Appendix~\ref{app:hyper}).

The analysis isolates one obstacle, which concerns how the value is used and holds even when the value is exact, and separates it from an identification caveat. Throughout, \emph{per-transition} use means weighting each state-action pair by its own advantage, while \emph{trajectory-scale} use means aggregating the value over a whole trajectory and selecting on that aggregate.

\paragraph{The obstacle: per-transition reweighting.}
We substitute two ground-truth values into the identical advantage-weighted extraction on all nine tasks: the per-step oracle $V = -c$, the exact generator of the preferences, and the discounted cost-to-go oracle ($\gamma = 0.99$). Neither improves on the learned value in any cell (Table~\ref{tab:oracle}), and on all five velocity tasks the learned value is safe while at least one oracle violates, reaching cost $235$ and $169$ against budget $20$ on HalfCheetah and Ant. The learned value's generalization, spreading sparse cost events onto neighboring states, is therefore essential signal rather than approximation error; widening the advantage window degrades safety monotonically, and the same reweighting destroys BC-Safe's safety (Appendix~\ref{app:extraction}). The mechanism is compositional: exponentiated per-step weights reweight actions the dataset already contains at each state, while safety is carried by whole multi-step avoidance behaviors that reweighting fragments, and Appendix~\ref{app:proof} proves a case where the failure is unavoidable (Proposition~\ref{prop:corridor}). The obstacle is scoped rather than absolute: on PointGoal1 five of the eleven per-transition configurations reach the budget, at higher reward than our clone but far thinner margin, while on CarGoal2 and PointGoal2 none of nine does (Appendices~\ref{app:extraction} and~\ref{app:t32t26}).

\paragraph{Identification at per-transition scale.}
Preference training additionally underdetermines the learned value at per-transition scale, which Table~\ref{tab:obstacle2} quantifies at both scales. Trajectory-level ranking is reliable everywhere, per-transition agreement is not: the top-one-percent overlap varies widely across tasks and seeds ($0.39$ to $0.92$), so the tail exponential weighting concentrates on is seed-specific, consistent with the Bradley-Terry null space. This is a reason the per-transition scale is unprincipled, not a predictor of its failure: a matched control with dense per-step supervision disagrees comparably, and neither held-out accuracy nor per-transition reliability predicts the policy. The two can even invert, the most seed-consistent tasks being among those that fail (Appendix~\ref{app:extraction}). What limits the per-transition route is the operator, not the value it is given.

\begin{observation}[Granularity principle]
\label{obs:granularity}
Reweighting transitions does not produce trajectory-level safety even under exact values (the obstacle), and the preference-trained value is in any case identified only at trajectory scale. Trajectory-scale selection uses the value where preference training identifies it, and it keeps whole behaviors intact rather than reweighting their parts.
\end{observation}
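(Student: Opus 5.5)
Because the statement is an observation and not a theorem, I would prove it as two separate formal claims and leave the empirical parts (Table~\ref{tab:oracle}, Table~\ref{tab:obstacle2}) as illustrations. \textbf{Claim A (the obstacle)} is existential. There is a constrained MDP and a dataset on which advantage-weighted extraction with an \emph{exact} value, either $V=-c$ or the discounted cost-to-go, returns a policy that violates the budget. Trajectory-scale selection followed by behavior cloning on the same data returns a safe policy. \textbf{Claim B (identification)} is structural. The loss in Equation~\ref{eq:bt} has a nontrivial null space in per-state values, and perturbations in that null space are suppressed by the trajectory average $g(\tau)$.

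For Claim A I would use a corridor construction, the one Proposition~\ref{prop:corridor} in Appendix~\ref{app:proof} formalizes, and invoke that result. The corridor has $L$ states. At each state the dataset contains a ``stay'' action, taken by the safe trajectories, and a ``drift'' action, taken by the unsafe ones. The drift action incurs its cost only later, at the corridor exit, after the trajectories are no longer distinguishable.

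Under $V=-c$, every interior advantage is zero. Exponentiated weighting then reproduces the behavior marginal $p$ of ``stay'' at each state, so the cloned policy completes the corridor safely with probability $p^{L}$, which tends to $0$. For the cost-to-go oracle I would arrange the drift branch to rejoin states shared with safe trajectories. The per-state advantage gap is then bounded by some $\Delta$ independent of $L$. The weight ratio $e^{\Delta/\beta}$ therefore tilts each per-state choice only by a bounded amount, and the tilted probability $p'<1$ still yields a geometrically vanishing safe-completion probability. Choosing $L$ large then pushes expected cost above $b$.

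In contrast, selecting whole trajectories by $g$ keeps only stay-everywhere trajectories. Behavior cloning on that selection reproduces them exactly, so the policy is safe.

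For Claim B I would argue as follows.
\begin{enumerate}
\item Take any $\delta:\mathcal{S}\to\mathbb{R}$ whose sum over every preference segment is zero. Replacing $\Vth$ by $\Vth+\delta$ leaves every logit in Equation~\ref{eq:bt} unchanged.
\item Such a $\delta$ exists and is nonzero whenever the number of distinct states exceeds the number of segments. An explicit example alternates signs along states that always appear in matched positions within segments. This shows per-state values, and hence per-transition advantages, are not identified.
\item Now take a trajectory tiled by segments that appear in $\Dpref$. The segment sums of $\delta$ vanish, so $\frac{1}{|\tau|}\sum_t\delta(s_t)$ reduces to the uncovered remainder of length less than $H$. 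The average is therefore $O(H\|\delta\|_\infty/|\tau|)$.
\end{enumerate}
This bound gives the sense in which the value is identified at trajectory scale.

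The main obstacle is the cost-to-go case of Claim A. I must keep the advantage gap bounded uniformly in $L$ while the expected cost still grows with $L$, and that tension requires a careful transition design. I would try first a ``rejoin'' structure in which drift returns to the corridor with high probability. That structure makes the cost-to-go difference small per step while the accumulated drift still exits unsafely.

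A secondary caveat is that Claim B's averaging bound requires trajectories to be tiled by labeled segments. For trajectories that are not tiled, I would state the bound only in expectation over a segment sampling scheme and would not claim it uniformly.
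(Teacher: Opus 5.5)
Your split into an operator claim and an identification claim matches how the paper supports the observation. The problem is in Claim A: you say you will invoke Proposition~\ref{prop:corridor}, but the corridor you describe is not that one, and the change breaks the argument.

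In a Markov CMDP, a cost paid at the exit can depend on an earlier drift only if the state carries the drift forward. The same is needed for $g$ to separate trajectories in your selection step, and a ``rejoin'' erases it. Once the state records whether a trajectory is still clean, your step ``weighting reproduces the behavior marginal $p$ at each state'' is false. Take expert mass $q$ and per-step stay probability $p$ otherwise. The stay frequency at the unflagged state $i$ is then $\frac{q+(1-q)p^{i}}{q+(1-q)p^{i-1}}$, and the product over $i \le L$ telescopes to $q+(1-q)p^{L} \to q$, not $p^{L}$. More generally, by the chain rule, whenever clean-so-far is a function of state, unweighted cloning reproduces the pool's own clean fraction exactly.

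Your zero-advantage case therefore just recovers BC-All. That policy is safe whenever $(1-q)$ times the exit cost is at most $b$, and no fragmentation occurs. With $q = 0$ your $p^{L}$ is right, but then for large $L$ the dataset contains no all-stay trajectory, so selection fails too.

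The paper's design removes both this problem and your ``main obstacle.'' Costs are immediate and the state is the time index, so the state forgets all history. Every state-only weighting then has safe-to-risky ratio $W = 1$. A per-step-cost advantage has the fixed ratio $e^{1/\beta}$ regardless of $T$. The closed form $p'_{\mathrm{s}} = \bar p W/(\bar p W + 1 - \bar p)$ sends the all-safe probability $(p'_{\mathrm{s}})^{T}$ to zero for every $q$, while zero-cost selection succeeds with probability at least $1-(1-q)^{N}$. The price, which the paper states, is that every trajectory there visits the same states, so no state-only score can select. That corridor separates operators, not scorers, and its selection arm uses the zero-cost label.

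In Claim B, steps 1 and 2 are fine and restate the remark after Equation~\ref{eq:bt}, but step 3 does not give trajectory-scale identification. The null space is a subspace, so $\|\delta\|_\infty$ is unbounded and your bound is only relative. More importantly, any $\delta$ supported on states that lie in no preference segment is invisible to the loss yet moves $g(\tau)$ arbitrarily. The paper counts roughly 97 percent of visited states in that position (Appendix~\ref{app:extraction}). So tiling essentially never holds, and averaging over a segment-sampling scheme cannot repair it: the loss alone does not separate per-state from per-trajectory identification.

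What makes $g$ reliable is the network's generalization. The paper establishes this half of the observation empirically: trajectory-scale held-out accuracy is $0.798$ to $1.000$, against per-transition top-one-percent overlap of $0.39$ to $0.92$ (Table~\ref{tab:obstacle2}), and dense supervision disagrees comparably per transition. You should state that half as a measured property. Only the operator half admits a proof, and it needs the paper's corridor rather than yours.
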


The positive half has a direct witness: behavior cloning on the ground-truth safe subset satisfies the budget on eight of nine tasks, including PointGoal1, which no per-transition operator moved. Our method replaces that full labeling with the preference-learned score and adds the certificate.

\paragraph{What the value encodes.}
Inspection corroborates the principle: the input dimension the value is most sensitive to, ranked by the gradient of $\Vbar$ with respect to each observation coordinate, is the cost-defining one on four of five velocity tasks. On every task $\Vbar$ declines as violations approach, which explains its advantage over the exact but sparse per-step oracle, and on navigation $\Vbar$ binned over the workspace recovers a hazard field it was never given. The diagnostics flatten precisely
where the method trails the strongest baseline (Figures~\ref{fig:interp} and~\ref{fig:landscape-layouts}, Appendix~\ref{app:landscape}).

\section{Experiments}
\label{sec:experiments}

\subsection{Setup}
We evaluate on fifteen DSRL tasks \citep{liu2024dsrl}: five velocity-constrained locomotion tasks and ten SafetyGymnasium navigation tasks spanning the Goal, Button, and Circle families for the Point and Car robots, the union of our comparators' task sets at their budgets (20 velocity, 25 navigation). Datasets are the standard DSRL releases, one dataset and environment pair per task; all methods share architectures, the DSRL evaluation protocol, and identical preference data (Appendix~\ref{app:hyper}). We compare against BC-All, the full-label filters BC-Safe and BC-Safe-Seg \citep{liu2024dsrl, gong2026presa}, random-, return-, and bottom-return-selection controls at matched selection size (the pool's ground-truth safe count, so selection signal is compared at equal budget), the full-label methods CDT \citep{liu2023cdt}, CPQ \citep{xu2022cpq} and COptiDICE \citep{lee2022coptidice} trained in-codebase at matched cost limits (Appendix~\ref{app:baselines}), and CPL \citep{hejna2024cpl} consuming exactly our preference data. Our own configurations are the V-filter at the calibration-estimated fraction (ground-truth-matched and fixed top-quartile fractions retained as diagnostics), the calibrated filter at $\alpha = 0.25$, $\delta = 0.1$, $n = 200$, its certification-gated reward-aware variant, and certified return-weighted cloning. Headline configurations and controls report mean and 95 percent bootstrap CIs over five seeds; sweeps use three. A five-task BulletSafetyGym suite, held out as a transfer test with every hyperparameter frozen, is reported in Section~\ref{sec:results} and Appendix~\ref{app:bullet}.

\subsection{Main results}
\label{sec:results}
\begin{table}[t]
\caption{Raw reward and cost on fifteen DSRL tasks (five seeds, mean $\pm$ 95\% CI; budgets 20/25; bold cost within budget, where within budget means mean cost at most the budget, the convention used throughout; BC-All is a single training with 100-episode evaluation; V-filter is the uncertified filter at the calibration-estimated fraction; Calibrated is the full procedure of Section~\ref{sec:calibration-method}, certifying where it can and otherwise returning the conservative selection with an explicit refusal). $^\dagger$: on the four certified tasks the calibrated column applies the certification-gated sub-selection; elsewhere the gated and ungated variants coincide.}
\label{tab:main}
\begin{center}
\scriptsize
\setlength{\tabcolsep}{2.5pt}
\resizebox{\textwidth}{!}{%
\begin{tabular}{l rr rr rr rr}
\toprule
& \multicolumn{2}{c}{BC-All} & \multicolumn{2}{c}{BC-Safe} & \multicolumn{2}{c}{V-filter} & \multicolumn{2}{c}{Calibrated} \\
\cmidrule(lr){2-3}\cmidrule(lr){4-5}\cmidrule(lr){6-7}\cmidrule(lr){8-9}
Task & $R$ & $C$ & $R$ & $C$ & $R$ & $C$ & $R$ & $C$ \\
\midrule
HalfCheetah & 2596 & 106 & 2170\,\scriptsize$\pm$265 & \textbf{3\,\scriptsize$\pm$3} & 1866\,\scriptsize$\pm$385 & \textbf{1\,\scriptsize$\pm$1} & 2124\,\scriptsize$\pm$99$^\dagger$ & \textbf{3\,\scriptsize$\pm$3} \\
Walker2d & 2698 & \textbf{10} & 2700\,\scriptsize$\pm$36 & \textbf{5\,\scriptsize$\pm$6} & 2658\,\scriptsize$\pm$51 & \textbf{4\,\scriptsize$\pm$5} & 2661\,\scriptsize$\pm$57$^\dagger$ & \textbf{1\,\scriptsize$\pm$1} \\
Ant & 2876 & 175 & 2871\,\scriptsize$\pm$29 & \textbf{6\,\scriptsize$\pm$4} & 2465\,\scriptsize$\pm$38 & \textbf{2\,\scriptsize$\pm$1} & 2499\,\scriptsize$\pm$23 & \textbf{1\,\scriptsize$\pm$0} \\
Hopper & 993 & 131 & 365\,\scriptsize$\pm$152 & \textbf{18\,\scriptsize$\pm$14} & 1238\,\scriptsize$\pm$257 & 33\,\scriptsize$\pm$31 & 998\,\scriptsize$\pm$323 & \textbf{3\,\scriptsize$\pm$3} \\
Swimmer & 120 & 92 & 119\,\scriptsize$\pm$10 & 37\,\scriptsize$\pm$31 & 106\,\scriptsize$\pm$8 & \textbf{15\,\scriptsize$\pm$8} & 116\,\scriptsize$\pm$11 & 59\,\scriptsize$\pm$62 \\
CarGoal1 & 15.59 & \textbf{22.44} & 10.12\,\scriptsize$\pm$0.35 & \textbf{8.98\,\scriptsize$\pm$1.34} & 8.64\,\scriptsize$\pm$0.74 & \textbf{8.44\,\scriptsize$\pm$0.85} & 8.45\,\scriptsize$\pm$0.56$^\dagger$ & \textbf{7.85\,\scriptsize$\pm$1.04} \\
CarGoal2 & 6.00 & 38.49 & 3.71\,\scriptsize$\pm$0.32 & \textbf{16.40\,\scriptsize$\pm$2.45} & 3.77\,\scriptsize$\pm$0.33 & \textbf{17.38\,\scriptsize$\pm$2.17} & 3.65\,\scriptsize$\pm$0.36 & \textbf{19.35\,\scriptsize$\pm$5.36} \\
PointGoal1 & 17.46 & 36.94 & 10.44\,\scriptsize$\pm$0.36 & \textbf{12.60\,\scriptsize$\pm$1.24} & 8.27\,\scriptsize$\pm$0.25 & \textbf{11.55\,\scriptsize$\pm$0.31} & 7.19\,\scriptsize$\pm$0.71$^\dagger$ & \textbf{8.70\,\scriptsize$\pm$1.84} \\
PointGoal2 & 14.97 & 76.58 & 7.09\,\scriptsize$\pm$0.65 & \textbf{21.36\,\scriptsize$\pm$2.24} & 6.51\,\scriptsize$\pm$0.54 & \textbf{19.73\,\scriptsize$\pm$1.33} & 6.17\,\scriptsize$\pm$0.55 & \textbf{20.78\,\scriptsize$\pm$1.36} \\
PointButton1 & 7.45 & 50.90 & 1.59\,\scriptsize$\pm$0.41 & \textbf{19.92\,\scriptsize$\pm$4.30} & 0.68\,\scriptsize$\pm$0.50 & \textbf{18.79\,\scriptsize$\pm$5.17} & -0.10\,\scriptsize$\pm$0.56 & \textbf{14.26\,\scriptsize$\pm$3.47} \\
PointButton2 & 11.68 & 66.56 & 5.95\,\scriptsize$\pm$0.54 & 32.89\,\scriptsize$\pm$2.56 & 4.67\,\scriptsize$\pm$0.42 & 30.23\,\scriptsize$\pm$2.92 & 4.38\,\scriptsize$\pm$0.66 & 28.36\,\scriptsize$\pm$3.04 \\
CarButton1 & -3.11 & 43.41 & 2.10\,\scriptsize$\pm$0.27 & \textbf{22.33\,\scriptsize$\pm$2.76} & 0.88\,\scriptsize$\pm$0.26 & \textbf{13.60\,\scriptsize$\pm$2.17} & 0.99\,\scriptsize$\pm$0.39 & \textbf{14.61\,\scriptsize$\pm$2.01} \\
CarButton2 & -4.10 & 45.15 & -1.09\,\scriptsize$\pm$0.41 & \textbf{22.26\,\scriptsize$\pm$1.79} & -0.89\,\scriptsize$\pm$0.15 & \textbf{18.50\,\scriptsize$\pm$3.21} & -0.91\,\scriptsize$\pm$0.18 & \textbf{19.07\,\scriptsize$\pm$1.60} \\
PointCircle1 & 52.5 & 156.0 & 36.0\,\scriptsize$\pm$2.8 & \textbf{4.7\,\scriptsize$\pm$3.9} & 34.9\,\scriptsize$\pm$2.3 & \textbf{10.2\,\scriptsize$\pm$2.8} & 33.2\,\scriptsize$\pm$4.5 & \textbf{17.5\,\scriptsize$\pm$17.0} \\
PointCircle2 & 42.0 & 158.8 & 36.8\,\scriptsize$\pm$1.1 & 44.2\,\scriptsize$\pm$20.9 & 35.3\,\scriptsize$\pm$1.1 & 82.4\,\scriptsize$\pm$31.1 & 35.8\,\scriptsize$\pm$0.9 & 87.2\,\scriptsize$\pm$30.5 \\
 
\bottomrule
\end{tabular}}
\end{center}
\end{table}

We judge methods by the constrained objective itself: credit first for satisfying the cost budget, and only then for the reward achieved while doing so. Bold cells in every table mark budget satisfaction. Table~\ref{tab:main} presents the main comparison. Raw cost is primary, since thresholds and budgets are stated in it; Table~\ref{tab:normcost} restates cost in normalized units for comparison with the literature.

\paragraph{The calibrated filter.} One fixed procedure across tasks, consuming preferences plus 200 calibration labels, is safe on twelve of fifteen, a count that includes its uncertified fallback. The certificate itself fires on four tasks at $\alpha = 0.25$ and on eight of the nine analysis tasks at $\alpha = 0.40$ (Section~\ref{sec:ablations}), and on uncertified tasks the deliverable is the conservative selection with an explicit no-certificate report. Its failures are Swimmer, where only the estimated-fraction V-filter is within budget, and PointButton2 and PointCircle2, which have no selectable safe subset, where every method fails and the calibration refuses to certify.

\paragraph{The value against ground truth.} The V-filter at the calibration-estimated fraction, which reads no ground-truth quantity, is safe on twelve of fifteen, as is BC-Safe, which needs a budget label on every trajectory. Their failure sets differ rather than nest: ours violates on Hopper, which BC-Safe handles; BC-Safe violates on Swimmer, which ours handles; both violate on PointButton2 and PointCircle2. On the eleven where both are safe ours runs at lower mean cost on nine, mean normalized cost $0.46$ against BC-Safe's $0.53$ (Appendix~\ref{app:baselines}). The mechanism is graded versus binary signal: the value ranks trajectories by distance from the budget while the label admits everything up to it, and the clone inherits that boundary.

\paragraph{The certificate as a gate.} The gated reward-aware variant is certification-safe. On the four certified tasks it keeps cost within budget in every cell with reward held within confidence intervals, and uncertified tasks keep their conservative selections; ungated, the same sub-selection on Hopper's contaminated selection multiplies mean cost to 82, and the gate refuses there (Appendix~\ref{app:extended}). Sub-selection discards trajectories but never adds them, so reward rises only where the certified selection is reward-rich.

\paragraph{Transfer.} The unchanged pipeline transfers to five BulletSafetyGym tasks with new embodiments and a harder budget, satisfying it on all five with the guarantee holding in every validation cell (Appendix~\ref{app:bullet}). Across both suites the certificate fires on 6 of 20 tasks at $\alpha = 0.25$, and yield is governed by the purity margin of Section~\ref{sec:guarantee-validation} rather than raw safe mass.

\paragraph{External baselines.} Against full-label offline safe RL and direct preference-based policy learning (Table~\ref{tab:baselines}, Figure~\ref{fig:pareto}, Appendix~\ref{app:baselines}), CDT is safe on four of the five velocity tasks and on PointCircle1, with reward above ours wherever it is safe, but violates at \emph{every} cost target on the eight Goal and Button tasks, its realized cost barely responding to the cost target. Supervision does not substitute for readout scale: at their budget-matched cost limits CDT, CPQ and COptiDICE read a cost on every transition and keep five, four and two of fifteen within budget, and CDT's safest per-task target reaches seven, while a filter consuming $200$ bits keeps twelve (Appendix~\ref{app:cdtsweep}). CPL, consuming exactly our preference data, keeps four: the same supervision that curates safely fails as a direct policy-optimization signal. Every \emph{trajectory-scale} filter-then-clone arm in our study keeps more tasks within budget than every optimization-based method, eleven to thirteen against two to five; the segment-scale filter, safe on two of fifteen despite full labels, and the no-information selection controls do not, which is the granularity principle at benchmark scale rather than an exception to it.

\paragraph{Composability.} Because the certificate describes the data rather than the policy, the certified selection can be handed to any offline learner. Retraining CDT on the certified selections of the four tasks certifying at $\alpha = 0.25$ makes every run safe, 36 of 36 across three seeds and three distinct selections per task, three of them out of specification, meaning a realized unsafe fraction above $\alpha$ (Tables~\ref{tab:compose} and~\ref{tab:a25draws}): on both navigation tasks, where full-data CDT violates by nearly 50 percent, it runs within budget, and on HalfCheetah it reaches CDT's own full-data reward on the deepest selection at cost below 1. Only four tasks appear because only four certify, which is refusal rather than a choice of favorable cases. Attribution is controlled twice: full-data CDT without augmentation remains severely unsafe on navigation, and no cost target down to a fifth of the budget rescues it (Appendix~\ref{app:cdtsweep}). CPL, a weaker learner, follows on three of the four, within budget on all 27 of their runs, so curation is not tuned to one consumer. Walker2d is the exception and the informative one: full-data CPL already sits at the budget there, and the certified selections, a fifth the size of the pool, push it beyond. Curation cannot rescue a learner whose violations are not data-driven, and it cannot supply the volume a weak learner needs. At the looser $\alpha = 0.40$ the three diverge sharply: our clone stays safe on 22 of 24 task-selections, CPL on 17, and CDT violates on every navigation task on at least one (Appendix~\ref{app:composea40}). Averaging learners outvote the unsafe tail while return-conditioned learners seek exactly it, so composing with them requires the tighter certificate, giving $\alpha = 0.25$ an empirical justification beyond conservatism. Certified curation is thus a supervision-agnostic front end for stronger learners, and one transfer task marks the boundary where an operator's own violations put it out of reach (Appendix~\ref{app:bullet}).

\paragraph{The certified return-weighted operator.} The certificate's value depends on its consumer, so we derive one from the pipeline itself: certified return-weighted cloning (Section~\ref{sec:gating}), run only on certified selections. Across the four certifying tasks' twelve selections, the CarRun echo and four intensities, 39 of the 40 within-specification cells are safe; the exception, CarRun's hard top-half rule, shows that resampling rather than return-seeking carries the safety. The operator converts certification into reward: on HalfCheetah's deployed selection it lifts the clone's 2104 to 2271 at cost 0.03, above CDT retrained on that same selection. Beyond the certified composition the same knob is not safe, the worst tail selection violating once the clip is loosened to $\kappa = 3$, so the certificate is what decouples the operator's safety from its aggressiveness (Table~\ref{tab:operator}).

\subsection{Guarantee validation}
\label{sec:guarantee-validation}
If safety can be decoded from state, when can the decoding be trusted? We validate Proposition~\ref{prop:guarantee} on the nine analysis tasks and the five transfer tasks across 2000 fresh calibration draws per task, seed, and size $n \in \{50, 100, 200, 400\}$ (Figure~\ref{fig:coverage}, Appendix~\ref{app:certprops}). The unconditional false-certification rate, the probability of certifying a threshold whose selection is out of specification, stays below $\delta = 0.1$ in every cell, the worst $0.078$ (Figure~\ref{fig:coverage}, left, and Appendix~\ref{app:certprops}). Certification rates increase with $n$ and with achievable purity, and Hopper, Ant and PointGoal2 certify never or almost never, correctly, since their top-quantile selections genuinely exceed $\alpha$. Refusal is the guarantee working, and predictable: a pool's \emph{purity margin} is $\alpha$ minus the smallest unsafe fraction any threshold on the grid can achieve. Proposition~\ref{prop:rate} below makes the rate a deterministic function of this margin, and the resampled rates verify it. Rate tracks margin almost exactly across all fourteen (Spearman $0.996$), and again when segment length rather than task is varied (Spearman $0.994$ over 27 cells). It behaves as a threshold rather than a trend: tasks with a negative margin certify essentially never, positive margins yield rates rising with the margin, and varying the margin continuously on constructed pools reproduces the transition at zero (Figure~\ref{fig:contamination}, Appendix~\ref{app:proof}).

\begin{proposition}[Certification dichotomy]
\label{prop:dichotomy}
Write $u_j = U_j / N_j$ for the unsafe fraction of the selection at grid threshold $\lambda_j$, and let $u_{\min} = \min_j u_j$.
\begin{enumerate}
\item If $u_{\min} > \alpha$, then $\Pr[\textnormal{certify}] \le \delta$ for every calibration size $n$.
\item If $u_1 < \alpha$, then $\Pr[\textnormal{certify}] \to 1$ as $n \to N$.
\end{enumerate}
Proof in Appendix~\ref{app:proof}.
\end{proposition}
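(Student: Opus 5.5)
Both parts follow from the structure of strict fixed-sequence testing: a threshold is certified if and only if the first hypothesis $H_1$ (the selection $S(\lambda_1)$ contains more than $\alpha N_1$ unsafe trajectories) is rejected, i.e.\ $p_1 \le \delta$. Certification is therefore the event $\{p_1 \le \delta\}$, and each part reduces to controlling this single p-value. We use the super-uniformity of Lemma~\ref{lem:superunif} for part 1 and a direct finite-population concentration argument for part 2.

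\textbf{Part 1.} If $u_{\min} > \alpha$, then in particular $u_1 > \alpha$, so $U_1 > \alpha N_1$. Since $U_1$ is an integer, $U_1 \ge \lfloor \alpha N_1 \rfloor + 1$, so $H_1$ is a true null. By Lemma~\ref{lem:superunif}, $\Pr[p_1 \le \delta] \le \delta$ under any true null. This uses that the hypergeometric CDF $F_{\mathrm{hyp}}(k; N_1, K, m)$ is decreasing in the success count $K$, so the least-favorable $K = \lfloor \alpha N_1\rfloor + 1$ dominates the actual $K = U_1$. This holds conditionally on $m_1$, and hence for every $n$. Hence $\Pr[\text{certify}] \le \delta$. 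Only $u_1 > \alpha$ is actually needed; the hypothesis on $u_{\min}$ is stronger. One could also note that every $H_j$ is a true null, but the first step already suffices.

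\textbf{Part 2.} Suppose $u_1 < \alpha$, so $U_1 < \alpha N_1$ and therefore $U_1 \le \lfloor \alpha N_1 \rfloor$. When $n = N$, the calibration sample is the whole pool. Then $m_1 = N_1$ and $k_1 = U_1$, and the hypergeometric with population $N_1$, $\lfloor\alpha N_1\rfloor + 1$ successes and $N_1$ draws is degenerate at $\lfloor \alpha N_1\rfloor + 1 > U_1$. Hence $p_1 = F_{\mathrm{hyp}}(U_1;\cdot) = 0 \le \delta$, and certification is deterministic. For the limit as $n \to N$ (with $N$ fixed the index is discrete, so the statement is just that the probability reaches 1 at $n=N$), we also give a quantitative version. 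Conditionally on $m_1$, $k_1$ is hypergeometric with mean $u_1 m_1$ and variance at most $m_1(N_1-m_1)/(4N_1)$ (the finite-population correction). The null distribution has mean $(\lfloor\alpha N_1\rfloor+1)m_1/N_1 > \alpha m_1$ and a similar variance. The gap between the two means is $(\alpha - u_1)m_1$. We choose $n$ large enough that $m_1 \approx (1-q_1)n$ (here $q_1 = 0.85$, so $m_1 \approx 0.15n$) and that the gap exceeds a constant times both standard deviations. A Hoeffding/Serfling bound for sampling without replacement then shows that $k_1$ lies below the null's $\delta$-quantile with probability tending to one. Both variances vanish as $m_1 \to N_1$.

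The main obstacle is Part 2's handling of the random $m_1$ together with integer effects in $\lfloor\alpha N_1\rfloor$. I would condition on $m_1$ and use Serfling's inequality for the $k_1$ deviation and for the null quantile separately. The $n=N$ endpoint then serves as the clean witness that the limit is exactly 1.
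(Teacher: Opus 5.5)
Your proof is correct, and it departs from the paper's in two places.

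For (i), the paper does not go through $\mathcal{H}_1$ directly. It observes that under $u_{\min} > \alpha$ every grid threshold is out of specification, so certification implies the violation event, and Proposition~\ref{prop:guarantee} bounds its probability by $\delta$; refusal thereby becomes a corollary of validity. Your reduction to $\{p_1 \le \delta\}$ plus Lemma~\ref{lem:superunif} is that argument unwrapped, since here the first true null is $\mathcal{H}_1$. It has the advantage of showing that $u_1 > \alpha$ alone suffices, a sharpening the paper records only later, in the proof of Proposition~\ref{prop:rate}.

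For (ii), your $n = N$ computation is the paper's. For $n < N$, the paper uses the one-line containment bound $\Pr[\textnormal{certify}] \ge \Pr[m_1 = N_1] = \binom{N-N_1}{n-N_1}\big/\binom{N}{n}$ instead of a concentration argument. Your concentration sketch is not rigorous as written:
the hypergeometric variance bound needs $N_1 - 1$, not $N_1$, in the denominator;
$m_1 \approx 0.15\,n$ holds only in expectation;
and the quantile constants are left open.
Nothing in the statement depends on it, however.

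Your sketch is also the more informative argument when $m_1$ is large. The containment bound equals $\binom{n}{N_1}/\binom{N}{N_1} \le (n/N)^{N_1}$, so it says little unless $n$ is within a handful of trajectories of $N$. Your separation of means is essentially the Hoeffding relaxation the paper states after Proposition~\ref{prop:rate}: with $\epsilon = \alpha - u_1$, $\Pr[\textnormal{certify} \mid m] \ge 1 - \exp(-m\epsilon^{2}/2)$ whenever $m \ge 2\ln(1/\delta)/\epsilon^{2}$.
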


\begin{proposition}[Certification rate]
\label{prop:rate}
Let $N_1 = |S(\lambda_1)|$ contain $U_1$ unsafe trajectories, $k^{*} = \lfloor \alpha N_1 \rfloor + 1$, $m \sim \mathrm{Hyp}(N, N_1, n)$ the number of calibration trajectories falling in $S(\lambda_1)$, and $k \mid m \sim \mathrm{Hyp}(N_1, U_1, m)$ the unsafe count among them. Then, with $F_{\mathrm{hyp}}$ the hypergeometric CDF,
$\Pr[\textnormal{certify}] = \sum_{m} \Pr[m] \sum_{k : F_{\mathrm{hyp}}(k; N_1, k^{*}, m) \le \delta} \Pr[k \mid m]$.
Proof in Appendix~\ref{app:proof}.
\end{proposition}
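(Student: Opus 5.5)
The plan is to reduce the event ``a threshold is certified'' to a single event about the first hypothesis in the fixed sequence, and then to compute the probability of that event exactly by conditioning on how many calibration draws land in the top selection.

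\textbf{Step 1: certification is the first rejection.} Under strict fixed-sequence testing, the procedure certifies some threshold if and only if the first hypothesis, at $\lambda_1$, is rejected. If that test fails, the walk stops and returns nothing. If it rejects, $\lambda_1$ at least is returned, regardless of what happens later in the grid. So
\[
\{\textnormal{certify}\} = \{p_1 \le \delta\},
\qquad
p_1 = F_{\mathrm{hyp}}(k_1; N_1, k^{*}, m_1),
\qquad
k^{*} = \lfloor \alpha N_1 \rfloor + 1 .
\]
Here $N_1$ and $k^{*}$ are deterministic, because the score and the grid are fixed before the calibration sample is drawn. The certification probability is therefore $\Pr[F_{\mathrm{hyp}}(k_1; N_1, k^{*}, m_1) \le \delta]$, and what remains is the joint law of $(m_1, k_1)$.

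\textbf{Step 2: the joint law of $(m_1, k_1)$.} The calibration set $\mathcal{C}$ is a uniform size-$n$ subset of the $N$ pool trajectories. Its intersection with the fixed subset $S(\lambda_1)$, of size $N_1$, has size $m_1 \sim \mathrm{Hyp}(N, N_1, n)$. The key fact is exchangeability of a uniform draw without replacement. Conditional on $m_1 = m$, the set $\mathcal{C} \cap S(\lambda_1)$ is a uniform size-$m$ subset of $S(\lambda_1)$. To see this, note that every pair (size-$m$ subset $A$ of $S(\lambda_1)$, size-$(n-m)$ subset of the complement) is equally likely, and for each $A$ the number of completions is $\binom{N-N_1}{n-m}$, which does not depend on $A$. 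It follows that the unsafe count among these $m$ trajectories is $k_1 \mid m \sim \mathrm{Hyp}(N_1, U_1, m)$, since $S(\lambda_1)$ contains exactly $U_1$ unsafe trajectories under the true labels.

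\textbf{Step 3: assemble the formula.} Apply the law of total probability over $m$, and then sum over those $k$ for which the deterministic condition $F_{\mathrm{hyp}}(k; N_1, k^{*}, m) \le \delta$ holds. This gives exactly the displayed double sum. The degenerate case $m = 0$ causes no trouble. There $k = 0$ and $F_{\mathrm{hyp}} = 1 > \delta$, so the term contributes nothing, which matches the fact that an empty calibration slice cannot reject.

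\textbf{Main obstacle.} The only nonroutine point is the conditional-uniformity argument in Step 2. It is what makes the least-favorable hypergeometric in $p_1$ and the true sampling law of $k_1$ share the same structure, differing only in the unsafe count ($k^{*}$ versus $U_1$). I would state it as a short counting lemma before assembling the sum. Everything else is bookkeeping, provided Step 1 is stated carefully. In particular, later tests in the sequence never affect whether certification occurs, so no monotonicity assumption is needed for the formula.
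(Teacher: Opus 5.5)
Your proposal is correct and follows the paper's proof: it reduces certification to rejecting the first hypothesis under the strict fixed sequence and takes $m \sim \mathrm{Hyp}(N, N_1, n)$. It then uses the same completion-counting argument (the paper's Lemma~\ref{lem:conddist}) to get $k \mid m \sim \mathrm{Hyp}(N_1, U_1, m)$, and sums the rejection indicator by total probability; your check of the $m = 0$ term is a small addition the paper leaves implicit.
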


Part (i) is the formal content of refusal: on such a pool no threshold can purify below $\alpha$, and no labeling budget changes that. Part (ii) makes the label cost forecastable, and Proposition~\ref{prop:rate} makes it exact: the strict fixed sequence certifies if and only if it rejects at $\lambda_1$, so the rate is one hypergeometric event in which the scorer enters only through $U_1$, and the expression inverts to the budget $n$ that reaches a target rate. It matches a $200$-draw resample's rates within $0.019$ on average (Appendix~\ref{app:proof}). The $2000$-draw resample yields the deployer-facing conditional rate $\Pr[\text{violate} \mid \text{certified}]$, small where certification is common and large where it is rare (Ant $1.00$, certifying in under one percent of draws), so the certification rate is the trust signal to read alongside the certificate (Table~\ref{tab:condviol}). A deployer need not run that resample: the calibration counts already spent estimate the rate through Proposition~\ref{prop:rate}, and $\delta / \Pr[\text{certify}]$ turns it into a bound on the conditional rate, so the procedure returns a triple: threshold, estimated rate, and bound, validated in Appendix~\ref{app:certprops} with its one failure mode.

Two boundaries qualify the claim. Certification is stricter than policy safety requires: the binary indicator counts cost 26 against budget 25 as fully unsafe, so selections can exceed $\alpha$ while their clones stay within budget; the gap between twelve safe tasks and four certifying is this conservatism. Composition is also not sufficient: BC-Safe's training set has unsafe fraction exactly zero, an implicit certificate at $\alpha = 0$, yet its clone violates on three of twenty tasks, from the label's boundary near budget and from cloning error in the large ones (Swimmer at 36.8), visible per task in Appendix~\ref{app:baselines}. The certificate audits the data; the favorable mapping to policy behavior is carried by margin-ranked selection, and we report the two separately. Because that mapping is empirical, we also certify the deployed policy from the same one bit per episode: Clopper-Pearson on episodes pooled across the fifteen tasks puts $\Pr[\text{cost} > \text{budget}]$ at $0.224$ for the calibrated selection against $0.627$ for cloning everything and $0.243$ for the full-label oracle. An empirical-Bernstein bound on $\mathbb{E}[\text{cost}]$ certifies ten of fifteen tasks on every seed, the shortfall against twelve safe tasks being the price of a finite-sample bound. The deployment certificate is the stronger statement about one policy; the offline certificate is the one worth having first, because it is computable before any policy exists, needs no environment interaction, and audits a reusable artifact every downstream learner inherits (Appendix~\ref{app:policycert}).

The three tasks the procedure does not solve divide cleanly. On PointButton2 and PointCircle2 no selection works at all: cloning the ground-truth safe subset already violates ($32.9$ and $44.2$ against budget $25$), so the failure is in what the pool contains rather than in which part of it we keep, and the certificate refuses on every seed. Swimmer is the opposite case and the one that locates the data-to-policy gap. Its deployed selection is $14.5$ percent unsafe, comfortably inside $\alpha = 0.25$, and the clone still runs at $58.9$ against budget $20$; the uncertified V-filter, which keeps the larger selection the estimated fraction allows rather than the fallback's lower bound on it, lands at $14.6$ (Appendix~\ref{app:certprops}). A clean training set is therefore not a safe policy, and on this benchmark that gap is visible on exactly one task. Neither mode is a failure of the guarantee: the first is a pool with no safe subset to certify, reported as a refusal; the second is a certified composition whose clone we measure rather than promise.

\subsection{Ablations}
\label{sec:ablations}

\paragraph{Selection signal.} Matched-size random- and return-selection controls isolate the safety value's contribution (Table~\ref{tab:controls}). Return selection, the percentile-BC analogue, is unsafe on all fifteen tasks; random selection on fourteen. Bottom-return selection, the strongest form of the concern that the score merely anti-ranks return, is safe on eight of fifteen yet catastrophically unsafe on four of five velocity tasks even though its selected trajectories are low-cost. The score tracks cost on every task (Spearman $-0.55$ to $-0.93$) and anti-correlates with return in proportion to how tightly cost
and return are coupled (Pearson $0.72$), as any safety-tracking score must (Table~\ref{tab:scorecorr}).

\paragraph{Calibration size and guarantee level.} Certification rate falls gracefully with $n$: at $n = 50$ the procedure certifies rarely and the guarantee still holds, so a small budget costs conservatism, never silent failure. Sweeping $\alpha$ maps the operating curve: $0.05$ certifies nowhere, $0.10$ almost nowhere, $0.25$ is the practical operating point, and $0.40$ certifies broadly except on Hopper, whose rate is $0.002$ there and zero at every tighter level, correctly (Figures~\ref{fig:paretotargets}, \ref{fig:coverage} and~\ref{fig:alphacurve}).

\paragraph{Labels-only filtering.} The calibration sample could train a filter instead of auditing one. A supervised classifier on only the $n$ labels, with our architecture, scoring rule and fractions, is safe on 11 to 13 of fifteen across budgets $n \in \{50, 100, 200, 400\}$, and splitting the budget between training and calibration certifies at a rate comparable to ours. At matched budget preferences buy one further safe task, twelve against eleven, and no additional certification; what they buy is the audit, since labels-only spends its judgments on training while the preference route holds all 200 back. Proposition~\ref{prop:rate} explains the certification parity: the scorer enters the certification rate only through the purity its top quantile attains, so scorers of equal precision certify at equal rates (Appendices~\ref{app:extended} and~\ref{app:labelsonly}).

\paragraph{Further controls.} Five controls in Appendix~\ref{app:extended} bound what the calibration sample and the score contribute: held-out preferences in place of labeled calibration are safe on four of fifteen; label noise, a Boltzmann labeler and a halved pair budget degrade precision without ever certifying falsely; numeric costs in place of the $200$ bits buy nothing; and $Q(s,a)$ forfeits policy-agnostic scoring.

\section{Limitations}
\label{sec:limits}
The certificate covers training-set composition, not the resulting policy's cost; a starved selection can underfit into an unsafe policy, mitigated but not eliminated by the fallback's minimum size and addressed from the other side in Section~\ref{sec:guarantee-validation}. The guarantee consumes 200 labeled trajectories, which the preference-only variant removes at the cost of the certificate. All preferences are synthesized from ground-truth episodic costs, so the weak-oracle regime is modeled, under label noise and a Boltzmann labeler, rather than entered with human labelers. The extreme tail of the trajectory score is unreliable, which the capped grid works around; tail-robust scoring remains open. The return-weighted operator must not be run on uncertified selections: on the worst observed out-of-specification selection it stays within budget at both milder clips and under the hard rule, and violates once the clip is loosened to $\kappa = 3$. Three scoping notes close: a labels-only classifier comes within one safe task of the comparison route, so that route's advantage is mostly the held-out audit rather than coverage; the action-conditioned ablation covers five tasks; and composability spans four tasks at $\alpha = 0.25$ and eight at $\alpha = 0.40$.

\section{Conclusion}
\label{sec:conclusion}
We presented certified safety curation: a state-only value learned from segment comparisons selects whole trajectories, Learn-then-Test calibrates the threshold, and the certified selection is cloned under a distribution-free guarantee on its composition. Safety in these domains is decodable at trajectory scale from supervision far weaker than a cost function; what separates methods is the certificate and its refusals. The selection also transfers: the strongest full-label baseline retrained on it becomes safe where it otherwise violates. Offline, the data bounds what any method can achieve, and certifying it is a guarantee available before a policy is trained or run.

\subsubsection*{AI use statement}
We used generative AI tools as a coding and editing assistant: for writing and
refactoring experiment-orchestration and analysis scripts, for regenerating
tables and figures from archived records, and for copy-editing prose. We did not
use generative AI to originate the research question, the method, the
theoretical results, or their proofs, and no experimental result was produced or
altered by an AI tool outside the logged pipeline described in the paper. All
AI-assisted code was reviewed by the authors and its outputs cross-checked
against the archived evaluation records; every numerical claim
in the text is either read from a generated table or bound to its producing expression by the
auditing scripts accompanying the paper.
We take responsibility for the final content of this work, including all text,
claims, and artifacts.

\subsubsection*{Ethics statement}
This work studies safety constraint satisfaction in offline reinforcement
learning using public simulated benchmarks; it involves no human subjects, no
personal data, and no new data collection. The preference labels are synthesized
from benchmark cost functions rather than elicited from people, which we state
as a limitation rather than a feature. One dual-use consideration is specific to this paper: the certificate bounds the
composition of a training set, not the behavior of the resulting policy. We
report certification and policy outcomes separately throughout, and
Section~\ref{sec:limits} states that boundary explicitly.

\subsubsection*{Reproducibility statement}
All experiments use public DSRL datasets. Every table regenerates from archived evaluation records via the scripts accompanying the paper, and the guarantee figures regenerate from archived calibration
statistics; the one exception is the CarRun composability echo, whose original certified selection was lost and re-derived (Appendix~\ref{app:bullet}). Hyperparameters are listed in Appendix~\ref{app:hyper}; no per-task tuning is used for the calibrated method. Predictions and decision rules the text calls registered, including the transfer suite's do-not-filter test and its composability echo (Appendix~\ref{app:bullet}), were committed in writing before the runs they govern, and are reported whether or not the prediction survived.

\bibliography{references}
\bibliographystyle{iclr2027_conference}

\appendix

\section{Algorithm}
\label{app:algo}
\begin{center}
\fbox{\parbox{0.92\linewidth}{
\textbf{Certified safety curation.}
\begin{enumerate}
\item Train $K$ value networks on $\Dpref$ by Equation~\ref{eq:bt}; set $\Vbar$ to the ensemble mean.
\item Score every trajectory: $g(\tau) = \frac{1}{|\tau|}\sum_{s_t \in \tau} \Vbar(s_t)$.
\item Draw $n$ calibration trajectories uniformly; label episodic costs.
\item Fixed-sequence LTT over score quantiles $0.85$ down to $0.30$ with exact hypergeometric tests at level $\delta$ against risk target $\alpha$; return the last certified threshold, or an uncertified selection at the calibration-estimated safe-mass fraction with an explicit no-certificate report.
\item If certified, optionally curate for reward, retaining the top half by return or resampling trajectories with probability proportional to $\exp(\mathrm{clip}(z(R), \pm \kappa))$.
\item Behavior-clone the selection.
\end{enumerate}}}
\end{center}

\section{Proofs}
\label{app:proof}

Propositions~\ref{prop:guarantee}, \ref{prop:dichotomy} and~\ref{prop:rate} are proved in
turn, followed by one result stated only here: a construction separating the two operator
classes the analysis contrasts, per-transition reweighting and whole-trajectory selection,
of Section~\ref{sec:analysis}.

Fix the threshold grid $\lambda_1 > \lambda_2 > \cdots > \lambda_J$ in advance. For each $j$ write $S_j = S(\lambda_j) \subseteq \Doff$, $N_j = |S_j|$, and let $U_j$ denote the number of unsafe trajectories in $S_j$. The null hypothesis at step $j$ is $\mathcal{H}_j: U_j > \alpha N_j$, equivalently $U_j \ge U_j^{*}$ with $U_j^{*} = \lfloor \alpha N_j \rfloor + 1$. Throughout the proofs,
$\binom{a}{b}$ denotes the binomial coefficient, the number of $b$-element subsets of an
$a$-element set, not a column vector. $\mathrm{Hyp}(\cdot, \cdot, \cdot)$ is the hypergeometric
distribution, its arguments the population size, the number of successes in it, and the
number drawn.
The composition guarantee of Proposition~\ref{prop:guarantee} rests on two lemmas.

\begin{lemma}[Conditional distribution]
\label{lem:conddist}
Let $\mathcal{C}$ be drawn uniformly without replacement from $\Doff$, and condition on $m_j = |\mathcal{C} \cap S_j|$. Then $\mathcal{C} \cap S_j$ is a uniform without-replacement sample of size $m_j$ from $S_j$, so $k_j \mid m_j \sim \mathrm{Hyp}(N_j, U_j, m_j)$.
\end{lemma}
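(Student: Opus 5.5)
The plan is a direct counting argument. Write $\mathcal{C}$ for a uniformly random $n$-subset of the $N$ trajectories in $\Doff$, so every $n$-subset has probability $1/\binom{N}{n}$.

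First, fix a value $m$ with $\Pr[m_j = m] > 0$. Fix also a particular $m$-subset $A \subseteq S_j$. The event $\{\mathcal{C} \cap S_j = A\}$ occurs exactly when $\mathcal{C} = A \cup B$ for some $(n-m)$-subset $B$ of $\Doff \setminus S_j$. It therefore contains $\binom{N - N_j}{n - m}$ calibration sets, a number that does not depend on which $A$ was chosen.

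Next, the event $\{m_j = m\}$ is the disjoint union of these events over all $\binom{N_j}{m}$ choices of $A$. Its probability is $\binom{N_j}{m}\binom{N-N_j}{n-m}/\binom{N}{n}$, which recovers $m_j \sim \mathrm{Hyp}(N, N_j, n)$. Dividing, we get
\[
\Pr[\mathcal{C} \cap S_j = A \mid m_j = m] = 1/\binom{N_j}{m}
\]
for every $A$. Conditionally on $m_j = m$, the set $\mathcal{C} \cap S_j$ is thus a uniform $m$-subset of $S_j$, which is exactly a uniform without-replacement sample of size $m$ from $S_j$.

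Finally, $k_j$ counts the unsafe elements of that sample. The labels $u(\tau)$ are fixed, non-random properties of the pool, and $S_j$ is determined by the score and grid fixed before the draw. So $S_j$ is a deterministic set containing $U_j$ unsafe trajectories. Counting the $m$-subsets with exactly $k$ unsafe elements gives $\binom{U_j}{k}\binom{N_j-U_j}{m-k}/\binom{N_j}{m}$, which is $\mathrm{Hyp}(N_j, U_j, m)$.

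The main point needing care is that $S_j$ must not depend on $\mathcal{C}$. The text guarantees this, since the score and thresholds are set before calibration. The threshold is a score quantile of the full pool, not of the calibration sample, so $N_j$ and $U_j$ are constants and the combinatorial count above goes through as written.
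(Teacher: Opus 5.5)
Your proposal is correct and uses essentially the same counting argument as the paper. Both fix $A \subseteq S_j$, count its $\binom{N-N_j}{n-m}$ completions in $\Doff \setminus S_j$, and normalize to obtain the uniform conditional law $\binom{N_j}{m}^{-1}$, then read off $\mathrm{Hyp}(N_j, U_j, m)$. Your remark that $S_j$, $N_j$ and $U_j$ are fixed before the calibration draw is the same point the paper makes in the paragraph following the proof of Proposition~\ref{prop:guarantee}.
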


\begin{proof}
Let $A \subseteq S_j$ with $|A| = m_j$ and $B \subseteq \Doff \setminus S_j$ with
$|B| = n - m_j$. Since $\mathcal{C}$ is a uniformly random $n$-subset of $\Doff$,
$\Pr[\mathcal{C} = A \cup B] = \binom{N}{n}^{-1}$, which does not depend on $A$ or $B$.
Summing over the admissible $B$ gives
\[
  \Pr\big[\mathcal{C} \cap S_j = A \mid m_j\big]
  \;=\; \binom{N - N_j}{n - m_j} \Big/ \Big(\binom{N_j}{m_j}\binom{N - N_j}{n - m_j}\Big)
  \;=\; \binom{N_j}{m_j}^{-1},
\]
the same value for every such $A$. Hence $\mathcal{C} \cap S_j$
is a uniform $m_j$-subset of $S_j$, and counting its unsafe elements gives $k_j \mid m_j
\sim \mathrm{Hyp}(N_j, U_j, m_j)$.
\end{proof}

\begin{lemma}[Super-uniformity]
\label{lem:superunif}
Define $p_j = F_{\mathrm{hyp}}(k_j; N_j, U_j^{*}, m_j)$. If $\mathcal{H}_j$ holds, then $\Pr[p_j \le \delta \mid m_j] \le \delta$ for every $m_j$, and hence marginally.
\end{lemma}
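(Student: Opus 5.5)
We prove Lemma~\ref{lem:superunif} by conditioning on $m_j$ and using Lemma~\ref{lem:conddist}. Given $m_j$, that lemma says $k_j \sim \mathrm{Hyp}(N_j, U_j, m_j)$. The plan has three steps: treat the least-favorable case $U_j = U_j^{*}$ exactly, extend to every $U_j \ge U_j^{*}$ by a stochastic-ordering argument, and then average over $m_j$.

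\textbf{Least-favorable case.} Fix $m_j$ and write $G(x) = F_{\mathrm{hyp}}(x; N_j, U_j^{*}, m_j)$, the CDF of a random variable $K^{*} \sim \mathrm{Hyp}(N_j, U_j^{*}, m_j)$. The claim $\Pr[G(K^{*}) \le \delta] \le \delta$ is the standard property that a discrete random variable evaluated in its own CDF is super-uniform. To see it, let $x_0$ be the largest support point with $G(x_0) \le \delta$. If no such point exists, the probability is $0$. Otherwise $G$ is nondecreasing, so $\{G(K^{*}) \le \delta\} = \{K^{*} \le x_0\}$. This event has probability $G(x_0) \le \delta$.

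\textbf{Monotonicity in $U_j$.} Under $\mathcal{H}_j$ the true unsafe count satisfies $U_j \ge U_j^{*}$, so $k_j$ has law $\mathrm{Hyp}(N_j, U_j, m_j)$ rather than the least-favorable law. The key step is that $\mathrm{Hyp}(N_j, U, m_j)$ is stochastically increasing in $U$. I would show this by coupling. Mark $U_j^{*}$ of the $U_j$ unsafe items as ``designated'' and draw a single uniform $m_j$-subset of $S_j$. The number of designated items in the subset, call it $K^{*}$, is distributed as $\mathrm{Hyp}(N_j, U_j^{*}, m_j)$. The number of unsafe items in the same subset is $k_j$, with the true law, and $K^{*} \le k_j$ pointwise. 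Since $G$ is nondecreasing, $G(k_j) \ge G(K^{*})$ pointwise. Hence
\[
\Pr[p_j \le \delta \mid m_j] = \Pr[G(k_j) \le \delta] \le \Pr[G(K^{*}) \le \delta] \le \delta .
\]

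\textbf{Marginalization.} The bound holds for every value of $m_j$, including degenerate ones. If $m_j = 0$, then $k_j = 0$ and $G(0) = 1$, so the event is empty unless $\delta \ge 1$. Averaging the conditional bound over the distribution of $m_j$ then gives $\Pr[p_j \le \delta] \le \delta$ marginally.

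The main obstacle is the monotonicity step. The null is composite ($U_j$ is unknown and only bounded below), while the p-value is computed at the boundary value $U_j^{*}$. The coupling handles this cleanly and avoids any likelihood-ratio computation. One detail needs checking: the coupling requires $U_j \le N_j$ so that the designated subset exists. This is automatic, because $U_j$ counts items of $S_j$.
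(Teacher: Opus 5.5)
Your proposal is correct and follows essentially the same route as the paper's proof. Both use a coupling to show that $\mathrm{Hyp}(N_j, U, m_j)$ is stochastically increasing in $U$, monotonicity of $F_{\mathrm{hyp}}$ in $k$ to carry that ordering over to $p_j$, the largest-support-point argument for super-uniformity of a variable evaluated in its own CDF, and averaging over $m_j$. The differences are cosmetic: you treat the least-favorable case $U_j = U_j^{*}$ first, and you build the coupling by designating $U_j^{*}$ of the actual unsafe items of $S_j$, whereas the paper marks the first $U$ versus the first $U'$ indices of $\{1, \dots, N\}$.
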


\begin{proof}
The argument has three steps.

\emph{Stochastic monotonicity.} If $U \le U'$ then $\mathrm{Hyp}(N, U, m) \preceq_{\mathrm{st}}
\mathrm{Hyp}(N, U', m)$, where $\preceq_{\mathrm{st}}$ is stochastic dominance. Couple the two by drawing a single uniform $m$-subset $M$ of
$\{1, \dots, N\}$ and declaring the first $U$ indices successes in one marginal and the
first $U'$ in the other. Both marginals are correct, and $|M \cap \{1, \dots, U\}| \le
|M \cap \{1, \dots, U'\}|$ pointwise, which is stochastic dominance.

\emph{Dominance of the $p$-value.} Under $\mathcal{H}_j$ we have $U_j \ge U_j^{*}$, so by
Lemma~\ref{lem:conddist} and the previous step $k_j \succeq_{\mathrm{st}} X$ for $X \sim
\mathrm{Hyp}(N_j, U_j^{*}, m_j)$. The map $k \mapsto F_{\mathrm{hyp}}(k; N_j, U_j^{*},
m_j)$ is non-decreasing, and a non-decreasing map preserves stochastic order, so $p_j
\succeq_{\mathrm{st}} F_{\mathrm{hyp}}(X; N_j, U_j^{*}, m_j)$.

\emph{Super-uniformity.} For any random variable $X$ with CDF $F$ and any $\delta \in
(0,1)$, $\Pr[F(X) \le \delta] \le \delta$: if no point of the support satisfies $F(x)
\le \delta$ the probability is zero, and otherwise, writing $x^{*}$ for the largest such
point, $F$ non-decreasing gives $\{F(X) \le \delta\} = \{X \le x^{*}\}$, whose
probability is $F(x^{*}) \le \delta$. Applying this to $X$ above and combining with the
previous step, 
\[
  \Pr[p_j \le \delta \mid m_j] \;\le\; \Pr[F_{\mathrm{hyp}}(X) \le \delta] \;\le\; \delta
\]
 for every $m_j$, and averaging over $m_j$ gives the marginal statement.
\end{proof}

\begin{proof}[Proof of Proposition~\ref{prop:guarantee}]
Let $t$ be the index of the first true null in the fixed order, if any exists; if none exists, no certification can be false and the claim is vacuous. The strict stopping rule tests $\mathcal{H}_1, \mathcal{H}_2, \ldots$ in order and stops at the first failure to reject. Reaching $\mathcal{H}_t$ requires rejecting $\mathcal{H}_1, \ldots, \mathcal{H}_{t-1}$, all of which are false nulls, so no error is committed before step $t$. Suppose the procedure certifies and the returned threshold is $\hat\lambda =
\lambda_{\hat\jmath}$ with $\mathrm{unsafe}(S(\lambda_{\hat\jmath})) > \alpha$, the
violation event of Equation~\ref{eq:guarantee}; that is, $\mathcal{H}_{\hat\jmath}$ is true. Since $t$ is the least index with a true
null, $\hat\jmath \ge t$. The strict stopping rule returns $\lambda_{\hat\jmath}$ only if
it rejected $\mathcal{H}_1, \ldots, \mathcal{H}_{\hat\jmath}$ in order, so in particular
it tested and rejected $\mathcal{H}_t$. The false-certification event is therefore contained in
\[
  \{\mathcal{H}_t \text{ tested and rejected}\} \;\subseteq\; \{p_t \le \delta\},
\]
whose probability is at most $\delta$ by Lemma~\ref{lem:superunif}. The bound holds without any monotonicity assumption on $j \mapsto U_j / N_j$; monotonicity only affects which hypotheses are reachable, that is, power.
\end{proof}

One dependence deserves an explicit statement. The score $g$ is trained on preferences synthesized from the episodic costs of the same pool $\Doff$, so $g$, and hence every $S_j$, is a function of the pool's labels. Validity is unaffected: $g$ and the sets $S_j$ are fixed before $\mathcal{C}$ is drawn and do not depend on which trajectories land in $\mathcal{C}$, so the exchangeability argument of Lemma~\ref{lem:conddist} applies verbatim. What the construction does forfeit is independent test data for the score itself, which is why the guarantee is on the selection's composition and not on any property of $g$. As a further remark, calibration trajectories whose labels reveal them unsafe could be excluded from the selection, which can only improve its composition; the reported results do not apply this and are conservative in that respect.

A corollary transfers the composition bound to the return-weighted operator of Section~\ref{sec:gating}. Resampling the certified selection with weights $w_i \in [e^{-\kappa}, e^{\kappa}]$ multiplies its unsafe-to-safe odds by at most $e^{2\kappa}$, so the certified bound $\alpha$ on the unsafe fraction becomes a bound of $e^{2\kappa}\alpha / (e^{2\kappa}\alpha + 1 - \alpha)$ on the unsafe mass of the resampling distribution. The bound is informative only at small $\kappa$ (at $\alpha = 0.25$ it already exceeds one half by $\kappa = 1$), so the intensity-robustness observed in Section~\ref{sec:results} is an empirical property of the score-return geometry of certified selections rather than a consequence of the guarantee; the guarantee's role is to pin the composition the operator starts from. Certifying the operator's weighted risk directly, rather than through this corollary, does not help at our label budget. The resampling weights are a function of returns and of the selection, so the normalizer is known exactly and only the weighted unsafe mass needs estimating, but that quantity ranges over $[e^{-\kappa}, e^{\kappa}]$, and at $\kappa = 2$ a distribution-free bound needs roughly $10^3$ labeled trajectories \emph{inside} the selection, about $6700$ calibration labels at a 15 percent selection, against the 200 we spend. The composition risk is certifiable at this budget precisely because it is a bounded count rather than a weighted average.

\begin{proof}[Proof of Proposition~\ref{prop:dichotomy}]
(i) If $u_{\min} > \alpha$ then $u_j > \alpha$ for every $j$, so any returned threshold has $\mathrm{unsafe}(S(\hat\lambda)) > \alpha$. Certification therefore implies the violation event, so for any $n$
\[
  \Pr[\textnormal{certify}]
  \;=\; \Pr\big[\textnormal{certify} \wedge \mathrm{unsafe} > \alpha\big]
  \;\le\; \delta
\]
by Proposition~\ref{prop:guarantee}. (ii) At $n = N$ the calibration sample is the whole pool, so $m_1 = N_1$ and $k_1 = U_1$ deterministically, and $p_1 = F_{\mathrm{hyp}}(U_1; N_1, U_1^{*}, N_1)$ is $0$ when $U_1 < U_1^{*} = \lfloor \alpha N_1 \rfloor + 1$, which is implied by $u_1 < \alpha$. The first hypothesis is then rejected with certainty and a threshold is returned. For $n < N$ the same event gives a lower bound: whenever the calibration draw happens to contain all of $S(\lambda_1)$ we have $m_1 = N_1$ and the argument above applies verbatim, so
\[
  \Pr[\textnormal{certify}] \;\ge\; \Pr[m_1 = N_1] \;=\; \binom{N - N_1}{n - N_1} \Big/ \binom{N}{n},
\]
which tends to one as $n \to N$.
\end{proof}

\paragraph{Discussion of Proposition~\ref{prop:dichotomy}.}
The two halves are not symmetric: part (i) is a corollary of Proposition~\ref{prop:guarantee} rather than new machinery, while part (ii) concerns the first tested threshold, since the fixed-sequence walk stops at its first failure; when the score ranks well the two coincide, because $u_j$ is smallest at the top quantile.

Refusal is therefore a correctness property, not a conservative heuristic. The second half is a question of labels, and its cost is forecastable: the budget at which a task first certifies in half of draws falls off as a power of the margin, shallower than a fixed-threshold test would give because the walk reaches deeper thresholds as $n$ grows (Appendix~\ref{app:certprops}). Since the margin is estimable from the calibration sample, a practitioner can forecast before spending labels whether a certificate is attainable and what it would cost. A rare certificate on a mostly-refusing task is weak evidence; the conditional rates the same resample yields are reported in Section~\ref{sec:guarantee-validation} and Table~\ref{tab:condviol}. The validation resamples calibration draws with the score fixed, the theorem's own setting, so it checks the implementation and finite-sample constants rather than testing an assumption.

\paragraph{A construction separating the operator classes.} The following minimal CMDP makes the operator content of the obstacle provable rather than only measured. Consider a deterministic $T$-step corridor whose state is the time index, so successor states are action-independent and the state records no action history; at every step two actions are available, one safe (per-step cost 0) and one risky (cost 1), and the budget is 0. The behavior distribution is a mixture: with probability $q \in [0, 1)$ a trajectory comes from an all-safe expert, and otherwise from a policy that takes the safe action with probability $p_{\mathrm{s}} \in (0, 1)$ independently at each step, so the per-step marginal safe frequency at every state is $\bar p = q + (1 - q) p_{\mathrm{s}}$. We state the population version; empirical frequencies concentrate around $\bar p$ at each step by Hoeffding's inequality.

\begin{proposition}
\label{prop:corridor}
In the corridor CMDP, per-transition weighted cloning with weights $w(s, a)$ and safe-to-risky weight ratio $W$ at every state yields the product policy with per-step safe probability
\[
  p'_{\mathrm{s}} \;=\; \frac{\bar p W}{\bar p W + 1 - \bar p},
\]
whatever the expert mass $q$. Consequently: (i) any state-only weighting gives $W = 1$, because successors are action-independent, so the cloned policy has all-safe probability $\bar p^{\,T}$, which tends to zero even though a fraction $q$ of the data is all-safe; (ii) reaching all-safe probability $1 - \epsilon$ requires
\[
  W \;\ge\; \frac{1-\bar p}{\bar p}\,\big((1-\epsilon)^{-1/T} - 1\big)^{-1}
  \;\longrightarrow\; \frac{1-\bar p}{\bar p} \cdot \frac{T}{\ln\frac{1}{1-\epsilon}},
\]
linear in the horizon, so weights bounded by any fixed clip fail beyond a horizon threshold; (iii) trajectory-level selection of the zero-cost trajectories followed by unweighted cloning yields the exactly safe policy with probability at least $1 - (1 - q)^{N}$ over a dataset of $N$ trajectories, hence with probability one as $N \to \infty$ for any $q > 0$.
\end{proposition}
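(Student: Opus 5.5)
The plan is to compute the population limit of weighted behavior cloning state by state and then read off the three consequences. Because the state is the time index, cloning fits at each step $t$ a distribution over the two actions proportional to the weighted empirical action frequencies at that state; in the population version these are $\bar p$ for the safe action and $1-\bar p$ for the risky one. First I will verify this claim, since the mixture makes actions correlated across steps within a trajectory: the per-step marginal at step $t$ is still $q\cdot 1 + (1-q)p_{\mathrm{s}} = \bar p$, and weighted cloning with a state-indexed policy class only sees these per-state marginals. Multiplying by weights $W$ (safe) and $1$ (risky, after normalizing the ratio) and renormalizing gives $p'_{\mathrm{s}} = \bar p W/(\bar p W + 1 - \bar p)$. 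The expert mass $q$ enters only through $\bar p$, which is the sense of ``whatever the expert mass'': the cross-step correlation that makes expert trajectories all-safe is discarded. Since the cloned policy acts independently at each step given the time index, it is a product policy, and its all-safe probability is $(p'_{\mathrm{s}})^T$.

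For (i), a state-only weighting $w(s,a)=f(s)$, for example $\exp(A/\beta)$ with $A = V(s')-V(s)$, gives equal weight to both actions at a state, because the successor $s'=t+1$ does not depend on the action. Hence $W=1$ and $p'_{\mathrm{s}}=\bar p<1$. Here $\bar p<1$ because $q<1$ and $p_{\mathrm{s}}<1$, so $\bar p^{T}\to 0$.

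For (ii), I will solve $(p'_{\mathrm{s}})^T\ge 1-\epsilon$, that is, $p'_{\mathrm{s}}\ge r:=(1-\epsilon)^{1/T}$. Rearranging $\bar pW \ge r(\bar pW+1-\bar p)$ gives $W\ge \frac{1-\bar p}{\bar p}\cdot\frac{r}{1-r}$, and $r/(1-r) = (r^{-1}-1)^{-1}$, which is exactly the displayed bound. For the asymptotic form, $r^{-1}-1 = e^{\ln(1/(1-\epsilon))/T}-1 \sim \ln(1/(1-\epsilon))/T$, which is linear in $T$. A clip $W\le e^{2\kappa}$ therefore fails once $T$ exceeds a threshold.

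For (iii), trajectories are drawn i.i.d. from the mixture, and each is all-safe with probability at least $q$ (the expert component, plus the behavior policy's $p_{\mathrm{s}}^T$). Thus the probability that some zero-cost trajectory exists is at least $1-(1-q)^N$. On that event, the selected set consists only of trajectories whose actions are all safe, so the empirical action frequency of the risky action at every state is zero. Unweighted cloning then puts probability one on the safe action at each step, giving the exactly safe policy.

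The only delicate step is the first: making precise that the per-state cloning objective depends only on weighted marginals, so that within-trajectory correlation from the mixture cannot be recovered. I would handle it by writing the weighted log-likelihood as a sum over $t$ of cross-entropies between $\pi(\cdot\mid t)$ and the weighted marginal at $t$. Each term is minimized by that weighted marginal, and the resulting marginal is well defined, with a nonzero normalizer, because $\bar p\in(0,1)$.
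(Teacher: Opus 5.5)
Your proposal is correct and follows essentially the same route as the paper. Per-state weighted maximum likelihood gives $\pi(a\mid s)\propto \mu(a)\,w(s,a)$: you obtain this by minimizing per-state cross-entropies, the paper by Lagrangian stationarity, which is the same fact. The three consequences are then read off just as the paper does, with the same product-policy observation, the same $W=1$ argument from action-independent successors, the same rearrangement of $p'_{\mathrm{s}}\ge(1-\epsilon)^{1/T}$, and the same expert-trajectory bound $1-(1-q)^N$.
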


\begin{proof}
Weighted cloning at state $s$ maximizes $\sum_a c_a \log \pi(a \mid s)$ over the
simplex, where $c_a = \mu(a) w(s, a) > 0$ is the population weight of action $a$ and $\mu$ is the mixture's per-step action marginal, $\mu(\text{safe}) = \bar p$. The
objective is strictly concave on the relative interior and tends to $-\infty$ at the
boundary, so it has a unique maximizer there; the stationarity condition of
\[
  \sum_a c_a \log \pi_a - \eta\Big(\sum_a \pi_a - 1\Big)
\]
is $c_a / \pi_a = \eta$ for every $a$, and normalizing gives
\[
  \pi_a \;=\; c_a \Big/ \sum_b c_b,
  \qquad\text{that is}\qquad
  \pi(a \mid s) \;\propto\; \mu(a)\, w(s, a).
\] With two actions of weights in ratio $W$ this is $p'_{\mathrm{s}}$ as
stated; rollout steps are independent because the state carries no action history, so the mixture's across-step correlation is not representable by any per-state policy and enters only through $\bar p$. For (i), a weight that depends on the state alone takes the same value on both actions at each state, since both lead to the same successor, so $W = 1$ and $p'_{\mathrm{s}} = \bar p$. For (ii), $(p'_{\mathrm{s}})^{T} \ge 1 - \epsilon$ forces $p'_{\mathrm{s}} \ge r$ with $r = (1-\epsilon)^{1/T}$, and $\bar p W/(\bar p W + 1 - \bar p) \ge r$ rearranges to
\[
  W \;\ge\; \frac{1-\bar p}{\bar p} \cdot \frac{r}{1-r}
  \;=\; \frac{1-\bar p}{\bar p}\big((1-\epsilon)^{-1/T} - 1\big)^{-1}
  \;\longrightarrow\; \frac{1-\bar p}{\bar p} \cdot \frac{T}{\ln\frac{1}{1-\epsilon}}.
\] For (iii), every transition in a zero-cost trajectory takes the safe action, so cloning the selection reproduces it deterministically; the selection is nonempty unless no trajectory is zero-cost, and each expert trajectory is zero-cost, which bounds the failure probability by $(1-q)^{N}$.
\end{proof}

The construction is faithful to the paper's setting twice over. Segment-sum preference training receives zero signal here, since every trajectory visits the identical state sequence, so a state-only value necessarily lands in case (i), exactly where action-independent successors leave segment preferences uninformative as well; and advantage weights built from observed per-step costs have the fixed ratio $W = e^{1/\beta}$, which case (ii) defeats at long horizons, the deployed clip of 20 included. The two operator classes thus pay different prices. Reweighting needs a weight range growing with the horizon regardless of how much all-safe data the mixture contains, while selection needs only that the safe subset exist, which any positive expert mass guarantees. The benchmarks supply safe trajectories at 9 to 52 percent of each pool precisely because real behavior is correlated across steps rather than independent, and that correlation is what whole-trajectory selection preserves and per-transition reweighting fragments. The corridor isolates the operator's structural limitation; the benchmark-scale claim rests on the empirical sweep (Appendix~\ref{app:t32t26}), since real states partially record consequences, which is why one navigation task escapes at a sharp enough temperature. Figure~\ref{fig:corridor} simulates the construction at $q = 0.2$: unweighted cloning collapses immediately, the deployed clip of 20 collapses by the benchmarks' horizon of 1000, a clip of 400 is already bending, and selection holds at one throughout because the expert subset always exists, with theory (dashed) matching simulation (markers).

\begin{figure}[h]
\begin{center}
\includegraphics[width=0.6\linewidth]{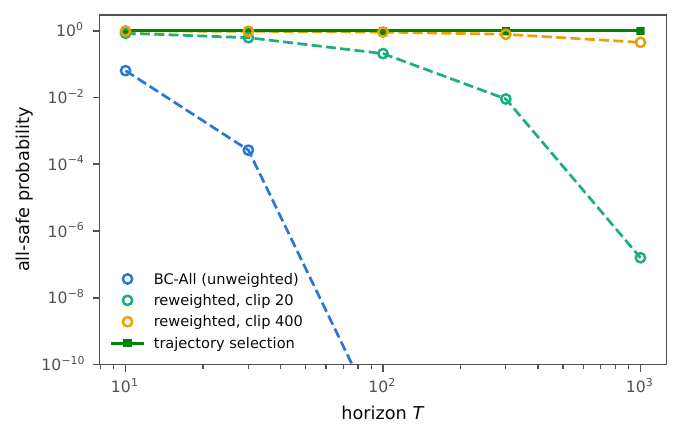}
\end{center}
\caption{The corridor construction, simulated ($p_{\mathrm{s}} = 0.7$, expert mass $q = 0.2$, $N = 10^4$, 50 replicates): all-safe probability of the cloned policy against horizon. Open markers are simulation; dashed lines are the closed forms of Proposition~\ref{prop:corridor} for the three per-transition weight clips, which the simulation tracks closely. Trajectory selection (solid, filled) sits at its closed form $1 - (1-q)^{N}$, indistinguishable from one at this $N$.}
\label{fig:corridor}
\end{figure}

\paragraph{Certification probability in closed form.} Proposition~\ref{prop:dichotomy}
separates the two regimes but leaves the rate between them asymptotic. Because the walk
is a strict fixed sequence, stopping at the first failure to reject, a certificate is
returned if and only if the first and most selective threshold $\lambda_1$ is rejected.
That collapses the whole procedure to one hypergeometric event and makes the rate exact.

\begin{proof}[Proof of Proposition~\ref{prop:rate}]
In the notation of the statement, the claim is
\begin{equation}
\label{eq:rate}
\Pr[\textnormal{certify}]
 = \sum_{m} \Pr[m] \sum_{k \,:\, F_{\mathrm{hyp}}(k; N_1, k^{*}, m) \le \delta} \Pr[k \mid m].
\end{equation}
The walk tests $H_1, H_2, \dots$ in order and halts at the first hypothesis it fails to
reject, so it returns a certificate if and only if it rejects $H_1$; every deeper
threshold is reached only after $H_1$ has already been rejected. Rejection at
$\lambda_1$ occurs exactly when the exact test's $p$-value $F_{\mathrm{hyp}}(k; N_1, k^{*}, m)$ is at
most $\delta$. The calibration sample is a uniform draw of $n$ from the $N$
trajectories of $\Doff$, of which $N_1$ lie in $S(\lambda_1)$, so the number $m$ falling
inside the selection is $\mathrm{Hyp}(N, N_1, n)$. Conditioned on $m$, those $m$
trajectories are a uniform draw from $S(\lambda_1)$, which contains $U_1$ unsafe ones,
so $k \sim \mathrm{Hyp}(N_1, U_1, m)$. Summing the rejection indicator against
these two distributions gives Equation~\ref{eq:rate}. Both halves of
Proposition~\ref{prop:dichotomy} follow: if $u_1 > \alpha$ then $U_1 \ge k^{*}$ and the
inner sum is the level of the test, at most $\delta$; and if $u_1 < \alpha$ the rejection
event at $\lambda_1$ has probability tending to one, as shown in the proof of
Proposition~\ref{prop:dichotomy}. The first half in fact needs only $u_1 > \alpha$ rather
than $u_{\min} > \alpha$, since certification turns on $\mathcal{H}_1$ alone.
\end{proof}

Three things follow, the first two already stated in the main text. Proposition~\ref{prop:dichotomy} is recovered as a corollary: when
$u_1 > \alpha$ the inner sum is bounded by $\delta$, and when $u_1 < \alpha$ it tends to
one as $n \to N$. The scorer enters Equation~\ref{eq:rate} only through $U_1$. And the expression inverts: given a target rate, it returns the
calibration budget that achieves it, so a practitioner can price the labels before
spending them. On PointGoal1 it asks for $n \ge 360$ to reach a rate of $0.9$, where the
deployed $n = 200$ gives $0.66$.

Table~\ref{tab:certratepred} compares Equation~\ref{eq:rate} against the resampled
rates. Across the $100$ cells the mean absolute error is $0.019$ and the correlation is
$0.994$, with the residuals consistent with the Monte-Carlo error of a $200$-draw
estimate. Equation~\ref{eq:rate} is the operative form; a readable relaxation of it is not.
Writing $\epsilon = \alpha - u_1$ for the purity margin, three Hoeffding steps for
sampling without replacement give

\[
  \Pr[\textnormal{certify} \mid m] \;\ge\; 1 - \exp(-m\epsilon^{2}/2)
\]
 whenever $m \ge 2\ln(1/\delta)/\epsilon^{2}$.
That bound is vacuous on every cell here, because the first grid threshold keeps only
$15$ percent of the pool and so receives about $30$ of the $200$ calibration
trajectories, far short of what the condition asks. It is reported because it names the
governing quantity, $m\epsilon^{2}$: a wide margin on a thin selection certifies rarely,
which is why deepening the grid start does not help.

\begin{table}[h]
\caption{Certification rate, Equation~\ref{eq:rate} against the $200$-draw resample
(predicted/observed, averaged over three pipeline seeds). Across the $100$ cells with a first-threshold record
(of $108$; Hopper's seeds 1 and 2 have none at any budget) the mean absolute error is
$0.019$ and the Pearson correlation $0.994$.}
\label{tab:certratepred}
\begin{center}
\scriptsize
\begin{tabular}{l rrrr}
\toprule
Task & $n = 50$ & $n = 100$ & $n = 200$ & $n = 400$ \\
\midrule
HalfCheetah & 0.07/0.05 & 0.14/0.13 & 0.23/0.18 & 0.40/0.38 \\
Walker2d & 0.12/0.14 & 0.28/0.27 & 0.50/0.48 & 0.81/0.77 \\
Ant & 0.01/0.02 & 0.01/0.01 & 0.00/0.00 & 0.00/0.00 \\
Hopper & 0.00/0.00 & 0.00/0.00 & 0.00/0.00 & 0.00/0.00 \\
Swimmer & 0.04/0.05 & 0.07/0.10 & 0.10/0.08 & 0.13/0.14 \\
CarGoal1 & 0.08/0.09 & 0.18/0.18 & 0.34/0.31 & 0.59/0.67 \\
CarGoal2 & 0.05/0.05 & 0.10/0.08 & 0.15/0.16 & 0.24/0.27 \\
PointGoal1 & 0.14/0.14 & 0.36/0.36 & 0.65/0.73 & 0.93/0.94 \\
PointGoal2 & 0.03/0.01 & 0.05/0.04 & 0.05/0.04 & 0.05/0.03 \\
 
\bottomrule
\end{tabular}
\end{center}
\end{table}

\paragraph{Controlled contamination.} The nine pools offer one purity margin each, and
none sits near zero, so they cannot show whether the dichotomy is a threshold or merely
a trend. Figure~\ref{fig:contamination} varies the margin continuously by subsampling
each pool to eight target unsafe fractions, dropping unsafe trajectories to purify it or
safe ones to dirty it, recomputing the grid quantiles on the constructed pool as the
procedure would on a dataset of that shape, and running the identical calibration walk
on it with scores fixed. Across $198$ constructed pools spanning margins from $-0.35$ to
$+0.25$, the certification rate is essentially zero at every non-positive margin, with a
maximum of $0.08$ against $\delta = 0.1$, and rises through a sigmoid to one by a margin
of about $0.2$. Equation~\ref{eq:rate} tracks every point, with mean absolute error
$0.010$ and correlation $0.999$, and residuals consistent with the Monte-Carlo error of
the $200$-draw estimate. Eight of the nine tasks cross the threshold in both directions;
Swimmer's pool is too clean to be dirtied below zero without falling under the minimum
pool size and contributes positive margins only. Nothing is retrained: the sweep reuses
the deployed scorers and changes only what is in the pool.

\begin{figure}[h]
\begin{center}
\includegraphics[width=0.72\linewidth]{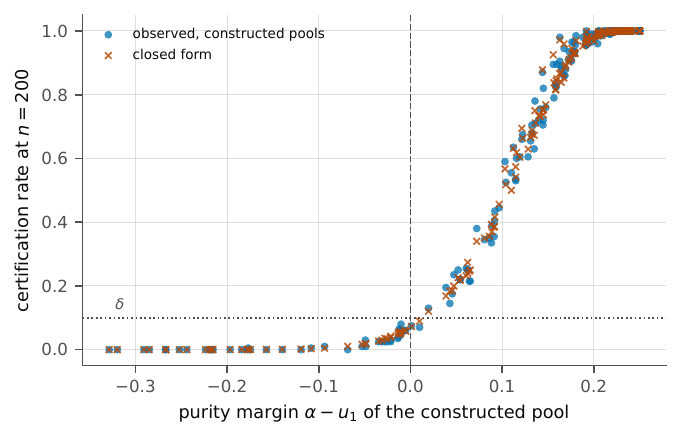}
\end{center}
\caption{Certification against controlled contamination. Each pool is subsampled to eight
target unsafe fractions and the full calibration walk is rerun on the result ($200$
draws, $n = 200$, scores fixed); the horizontal axis is the constructed pool's purity
margin $\alpha - u_1$. Crosses are Equation~\ref{eq:rate} evaluated on that pool's own
$(N, N_1, U_1)$. The rate is at most $0.08$ at every non-positive margin (dotted line at
$\delta$) and rises to one by a margin of about $0.2$; $198$ pools over nine tasks.}
\label{fig:contamination}
\end{figure}

\section{Full extraction sweep}
\label{app:extraction}

This appendix reports the sweeps behind Section~\ref{sec:analysis} in full: the value's
agreement with held-out preferences at two scales, the oracle-value substitutions, and
the per-transition configurations tested.

\begin{table}[h]
\caption{The value at two scales on the nine analysis tasks, three seeds each.
Accuracy ranges over tasks and is the fraction of held-out pairs the value
orders the same way as their ground-truth summed costs, on pairs drawn from the
bottom and top cost quartiles. The per-transition rows range over every
seed pair of every task, on 5000 matched transitions.}
\label{tab:obstacle2}
\begin{center}
\small
\begin{tabular}{llr}
\toprule
Scale & Quantity & Range \\
\midrule
Trajectory & held-out segment-pair accuracy & $0.798$ to $1.000$ \\
Per-transition & cross-seed advantage rank correlation $\rho$ & $0.79$ to $0.99$ \\
Per-transition & top-1-percent transition overlap (Jaccard) & $0.39$ to $0.92$ \\
 
\bottomrule
\end{tabular}
\end{center}
\end{table}

\begin{table}[t]
\caption{Oracle-value ablation on all nine analysis tasks with the BC-All reference (oracles: three seeds; learned value: five seeds; bold marks mean cost within budget).}
\label{tab:oracle}
\begin{center}
\scriptsize
\setlength{\tabcolsep}{3pt}
\resizebox{\textwidth}{!}{%
\begin{tabular}{l rr rr rr rr}
\toprule
& \multicolumn{2}{c}{BC-All} & \multicolumn{2}{c}{Learned $\Vbar$} & \multicolumn{2}{c}{Oracle, per-step} & \multicolumn{2}{c}{Oracle, cost-to-go} \\
\cmidrule(lr){2-3}\cmidrule(lr){4-5}\cmidrule(lr){6-7}\cmidrule(lr){8-9}
Task & $R$ & $C$ & $R$ & $C$ & $R$ & $C$ & $R$ & $C$ \\
\midrule
HalfCheetah & 2596 & 106 & 544\,\scriptsize$\pm$685 & \textbf{0\,\scriptsize$\pm$0} & 2721\,\scriptsize$\pm$68 & 235\,\scriptsize$\pm$192 & 2734\,\scriptsize$\pm$73 & 297\,\scriptsize$\pm$201 \\
Walker2d & 2698 & \textbf{10} & 2724\,\scriptsize$\pm$65 & \textbf{7\,\scriptsize$\pm$5} & 2699\,\scriptsize$\pm$52 & 24\,\scriptsize$\pm$11 & 2684\,\scriptsize$\pm$52 & 50\,\scriptsize$\pm$59 \\
Ant & 2876 & 175 & 2293\,\scriptsize$\pm$84 & \textbf{0\,\scriptsize$\pm$0} & 2929\,\scriptsize$\pm$30 & 169\,\scriptsize$\pm$67 & 2948\,\scriptsize$\pm$28 & 251\,\scriptsize$\pm$77 \\
Hopper & 993 & 131 & 76\,\scriptsize$\pm$1 & \textbf{18\,\scriptsize$\pm$1} & 817\,\scriptsize$\pm$267 & 85\,\scriptsize$\pm$3 & 968\,\scriptsize$\pm$166 & 64\,\scriptsize$\pm$32 \\
Swimmer & 120 & 92 & 15\,\scriptsize$\pm$9 & \textbf{1\,\scriptsize$\pm$1} & 99\,\scriptsize$\pm$1 & \textbf{19\,\scriptsize$\pm$1} & 92\,\scriptsize$\pm$14 & 56\,\scriptsize$\pm$34 \\
CarGoal1 & 15.59 & \textbf{22.44} & 10.19\,\scriptsize$\pm$1.72 & \textbf{14.00\,\scriptsize$\pm$0.11} & 16.16\,\scriptsize$\pm$1.34 & \textbf{22.90\,\scriptsize$\pm$2.11} & 16.02\,\scriptsize$\pm$1.02 & \textbf{19.43\,\scriptsize$\pm$0.52} \\
CarGoal2 & 6.00 & 38.49 & 6.33\,\scriptsize$\pm$0.06 & 29.40\,\scriptsize$\pm$1.25 & 7.66\,\scriptsize$\pm$0.22 & 37.66\,\scriptsize$\pm$3.21 & 8.09\,\scriptsize$\pm$0.61 & 35.65\,\scriptsize$\pm$0.46 \\
PointGoal1 & 17.46 & 36.94 & 13.72\,\scriptsize$\pm$1.78 & \textbf{22.11\,\scriptsize$\pm$4.63} & 18.42\,\scriptsize$\pm$0.59 & 34.98\,\scriptsize$\pm$2.24 & 18.11\,\scriptsize$\pm$1.07 & 32.81\,\scriptsize$\pm$3.59 \\
PointGoal2 & 14.97 & 76.58 & 6.66\,\scriptsize$\pm$0.50 & 50.02\,\scriptsize$\pm$4.94 & 15.35\,\scriptsize$\pm$0.88 & 83.69\,\scriptsize$\pm$0.62 & 14.49\,\scriptsize$\pm$0.95 & 73.62\,\scriptsize$\pm$7.18 \\
 
\bottomrule
\end{tabular}}
\end{center}
\end{table}

The full oracle-value ablation is in Table~\ref{tab:oracle}. No cell has an oracle improving on the learned value: the closest is CarGoal1, where the discounted cost-to-go oracle reaches cost $19.43$ against the learned value's $14.00$, both within budget. On Swimmer the learned value is the safest configuration in the table ($1$ against $19$ and $56$ for the two oracles). Table~\ref{tab:extraction} reports the per-transition operator sweep on the nine analysis tasks (three seeds; the exp/$h{=}1$ reference appears in Table~\ref{tab:oracle}). Columns: rank and binary weight transforms; counterfactual scoring with the learned value (per-state-normalized softmax); and the action-conditioned oracle, a value distilled from full ground-truth cost-to-go and scored counterfactually through the dynamics ensemble. On the failing navigation tasks (CarGoal2, PointGoal1, PointGoal2), no weight transform escapes the cost plateau, and neither does
the action-conditioned oracle, which consumes dense cost labels unavailable under preference
supervision: it stays over budget on all three ($39.5$, $31.5$ and $49.6$ against $25$) while
raising reward above every weight transform. Varying the extraction temperature rather than the transform does reach the budget, on PointGoal1 and on neither of the other two (Appendix~\ref{app:t32t26}). On the velocity tasks the transforms trade reward for cost without dominating the exponential reference, and on CarGoal1, the one navigation task where per-transition extraction is cost-safe throughout, the weighted variants attain somewhat higher reward at two to three times the selection route's cost.

\paragraph{Matched-supervision disagreement control.} To separate the Bradley-Terry null space from ordinary seed variance, we compute the same cross-seed advantage statistics for cost-to-go ensembles trained with dense per-step supervision (three seeds, identical architecture, 5000 fixed transitions per task). Disagreement is not specific to preference training. On all nine analysis tasks the supervised ensembles agree less by advantage rank correlation than the preference ensembles on the identical sampled transitions (task-mean $\rho$ of $0.43$ to $0.82$ against $0.79$ to $0.99$), and their top-one-percent overlaps are lower throughout ($0.31$ to $0.67$ against $0.42$ to $0.90$). The null-space argument still explains why preference training cannot identify per-transition values; the control shows that dense supervision does not identify them either at these sample sizes, so the granular scale is unreliable under both supervisions, which is the fact the design responds to. The identification deficit is quantifiable. The 1000 training pairs constrain 1.4 to 4.2 percent of each pool's states, the drawn segments overlapping on 1 to 10 percent of the slots they cover, so roughly 97 percent of visited states receive no direct constraint and, within the covered states, each appears in about one segment sum; the score's behavior everywhere else is generalization, which is exactly what the oracle comparison of Section~\ref{sec:analysis} finds to be the essential signal.

\begin{table}[h]
\caption{Extraction operator sweep on the nine analysis tasks (three seeds, mean $\pm$ 95\% CI; bold marks mean cost within budget). The oracle substitutions and the exponential reference appear in Table~\ref{tab:oracle}.}
\label{tab:extraction}
\begin{center}
\scriptsize
\setlength{\tabcolsep}{3pt}
\resizebox{\textwidth}{!}{%
\begin{tabular}{l rr rr rr rr}
\toprule
& \multicolumn{2}{c}{Rank} & \multicolumn{2}{c}{Binary} & \multicolumn{2}{c}{CF (learned)} & \multicolumn{2}{c}{CF (AC oracle)} \\
\cmidrule(lr){2-3}\cmidrule(lr){4-5}\cmidrule(lr){6-7}\cmidrule(lr){8-9}
Task & $R$ & $C$ & $R$ & $C$ & $R$ & $C$ & $R$ & $C$ \\
\midrule
HalfCheetah & 2197\,\scriptsize$\pm$315 & \textbf{15\,\scriptsize$\pm$19} & 1038\,\scriptsize$\pm$503 & \textbf{0\,\scriptsize$\pm$1} & 1598\,\scriptsize$\pm$371 & \textbf{2\,\scriptsize$\pm$3} & 1889\,\scriptsize$\pm$11 & \textbf{0\,\scriptsize$\pm$0} \\
Walker2d & 2662\,\scriptsize$\pm$44 & 21\,\scriptsize$\pm$10 & 2302\,\scriptsize$\pm$352 & 55\,\scriptsize$\pm$13 & 2715\,\scriptsize$\pm$28 & 21\,\scriptsize$\pm$25 & 2683\,\scriptsize$\pm$23 & \textbf{0\,\scriptsize$\pm$0} \\
Ant & 2316\,\scriptsize$\pm$96 & \textbf{1\,\scriptsize$\pm$0} & 2017\,\scriptsize$\pm$341 & \textbf{0\,\scriptsize$\pm$0} & 2553\,\scriptsize$\pm$133 & \textbf{2\,\scriptsize$\pm$1} & 2881\,\scriptsize$\pm$15 & \textbf{5\,\scriptsize$\pm$2} \\
Hopper & 97\,\scriptsize$\pm$26 & 22\,\scriptsize$\pm$3 & 118\,\scriptsize$\pm$57 & 23\,\scriptsize$\pm$6 & 153\,\scriptsize$\pm$62 & 29\,\scriptsize$\pm$8 & 370\,\scriptsize$\pm$122 & \textbf{14\,\scriptsize$\pm$11} \\
Swimmer & 12\,\scriptsize$\pm$8 & \textbf{1\,\scriptsize$\pm$1} & 11\,\scriptsize$\pm$3 & \textbf{1\,\scriptsize$\pm$1} & 27\,\scriptsize$\pm$7 & \textbf{2\,\scriptsize$\pm$0} & 89\,\scriptsize$\pm$5 & 32\,\scriptsize$\pm$16 \\
CarGoal1 & 10.73\,\scriptsize$\pm$2.27 & \textbf{15.33\,\scriptsize$\pm$0.61} & 12.34\,\scriptsize$\pm$0.27 & \textbf{17.59\,\scriptsize$\pm$0.90} & 16.48\,\scriptsize$\pm$0.79 & \textbf{21.93\,\scriptsize$\pm$1.99} & 17.28\,\scriptsize$\pm$0.36 & \textbf{18.98\,\scriptsize$\pm$1.33} \\
CarGoal2 & 7.09\,\scriptsize$\pm$1.14 & 33.74\,\scriptsize$\pm$5.88 & 6.02\,\scriptsize$\pm$0.54 & 26.85\,\scriptsize$\pm$3.34 & 8.53\,\scriptsize$\pm$0.03 & 39.97\,\scriptsize$\pm$3.45 & 9.70\,\scriptsize$\pm$0.30 & 39.46\,\scriptsize$\pm$0.67 \\
PointGoal1 & 16.42\,\scriptsize$\pm$0.42 & 29.85\,\scriptsize$\pm$3.88 & 15.48\,\scriptsize$\pm$0.33 & 28.06\,\scriptsize$\pm$2.67 & 16.71\,\scriptsize$\pm$1.71 & 31.76\,\scriptsize$\pm$6.54 & 19.08\,\scriptsize$\pm$0.61 & 31.54\,\scriptsize$\pm$1.60 \\
PointGoal2 & 8.47\,\scriptsize$\pm$1.02 & 53.88\,\scriptsize$\pm$4.53 & 6.99\,\scriptsize$\pm$1.09 & 60.75\,\scriptsize$\pm$7.70 & 8.20\,\scriptsize$\pm$0.23 & 58.68\,\scriptsize$\pm$7.08 & 12.15\,\scriptsize$\pm$1.48 & 49.57\,\scriptsize$\pm$5.59 \\
 
\bottomrule
\end{tabular}}
\end{center}
\end{table}

\paragraph{Probes behind the obstacle.} Two probes support the granularity reading in Section~\ref{sec:analysis}. Multi-step advantages, formed over an $h$-step window, degrade safety monotonically in $h$: at $h = 5$ cost worsens on eight of the nine analysis tasks relative to $h = 1$, HalfCheetah moving from $0.3$ to $15.6$. Applying the same exponentiated reweighting on top of the ground-truth-filtered BC-Safe policy destroys its safety on five of the seven tasks where that policy satisfies the budget, so the failure is not a deficiency of the learned value: it survives substituting a policy trained on exactly the safe data. On the failing navigation tasks every value source, learned or oracle, sits in the same cost band well above budget.
\paragraph{The value at two scales.} Table~\ref{tab:obstacle2} gives the ranges behind the identification caveat of Section~\ref{sec:analysis}. Trajectory-level held-out accuracy runs from $0.798$ to $1.000$, while cross-seed advantage rank correlation spans $0.79$ to $0.99$ and top-one-percent overlap $0.39$ to $0.92$: ranking whole trajectories is reliable everywhere, and the tail that exponential weighting concentrates on is not. The diagnostics do not predict the policy, and the two extremes make the point. Swimmer has the strongest cross-seed agreement of the nine ($\rho = 0.99$, overlap $0.90$) and held-out accuracy $0.996$, yet its deployed clone is the worst of the suite at cost $58.9$ against budget $20$; PointGoal2 has the weakest agreement ($\rho = 0.79$) and stays within budget at $20.8$ against $25$, as does CarGoal2, whose held-out accuracy is the lowest of the nine at $0.798$, at $19.3$. Across the nine tasks the deployed filter is safe on six of the seven with $\rho \geq 0.85$ and on both below it, so agreement neither separates the safe tasks from the unsafe ones nor orders them.

\section{Extraction temperature, weight clip, and score aggregation}
\label{app:t32t26}
Two sweeps were registered with committed predictions before they were run. Both
are reported here, whether or not the prediction survived.

\paragraph{Extraction temperature and weight clip.} Table~\ref{tab:t32sweep}
varies the advantage-weighted extraction's temperature $\beta$ over
$\{0.1, 0.3, 3\}$ at clip $20$, and the weight clip over $\{5, 100\}$ at
$\beta = 1$, on the nine analysis tasks at three seeds. All six cells share one
trainer and one seed protocol, so the columns are directly comparable; the
deployed arm elsewhere in the paper uses an ensemble value and five seeds and is
not a controlled comparison against them. The registered
prediction was that no setting reaches the budget on the three failing
navigation tasks, since the weight range the corridor argument requires grows
with the horizon and the sweep's largest ratio is far below it. Sharpening the
weights trades cost against reward monotonically, and the prediction holds on
two of the three. On CarGoal2 no setting falls below $29.0$ and on PointGoal2
none below $41.8$, against a budget of $25$. On PointGoal1 it fails. Cost falls monotonically as the weights sharpen, from
$30.53 \pm 1.67$ at $\beta = 3$ to $22.05 \pm 1.33$ at $\beta = 0.3$ and
$16.40 \pm 0.71$ at $\beta = 0.1$, the last two inside the budget. The safety is
paid for in reward, which falls from $17.53$ to $10.80$ across that range. We therefore scope the obstacle rather than
state it without qualification. Per-transition reweighting reaches the budget at
no swept setting on two of the three failing navigation tasks, and on the third
only by sharpening far enough to give up a third of the reward, where it still
runs at nearly twice the selection route's cost ($16.40$ against $8.80$) though
at higher reward ($10.80$ against $7.47$). This is the behavior the corridor
construction anticipates once real states partially record consequences, which
is where the benchmark departs from the idealized corridor.

\begin{table}[h]
\caption{Extraction temperature and weight clip on the nine analysis tasks
(three seeds, mean $\pm$ 95\% CI; bold marks mean cost within budget). These are
the five registered cells, all from one trainer at one seed protocol.}
\label{tab:t32sweep}
\begin{center}
\scriptsize
\setlength{\tabcolsep}{3pt}
\resizebox{\textwidth}{!}{%
\begin{tabular}{l rr rr rr rr rr}
\toprule
& \multicolumn{2}{c}{$\beta{=}0.1$} & \multicolumn{2}{c}{$\beta{=}0.3$}
& \multicolumn{2}{c}{$\beta{=}3$}
& \multicolumn{2}{c}{clip $5$} & \multicolumn{2}{c}{clip $100$} \\
\cmidrule(lr){2-3}\cmidrule(lr){4-5}\cmidrule(lr){6-7}\cmidrule(lr){8-9}\cmidrule(lr){10-11}
Task & $R$ & $C$ & $R$ & $C$ & $R$ & $C$ & $R$ & $C$ & $R$ & $C$ \\
\midrule
HalfCheetah & 714\,\scriptsize$\pm$95 & \textbf{0\,\scriptsize$\pm$0} & 1514\,\scriptsize$\pm$583 & \textbf{8\,\scriptsize$\pm$12} & 2731\,\scriptsize$\pm$13 & 173\,\scriptsize$\pm$67 & 2645\,\scriptsize$\pm$34 & 177\,\scriptsize$\pm$199 & 2636\,\scriptsize$\pm$40 & 155\,\scriptsize$\pm$175 \\
Walker2d & 2459\,\scriptsize$\pm$333 & 39\,\scriptsize$\pm$45 & 2750\,\scriptsize$\pm$2 & 22\,\scriptsize$\pm$32 & 2637\,\scriptsize$\pm$46 & 30\,\scriptsize$\pm$32 & 2678\,\scriptsize$\pm$69 & \textbf{12\,\scriptsize$\pm$12} & 2665\,\scriptsize$\pm$34 & \textbf{15\,\scriptsize$\pm$16} \\
Ant & 2429\,\scriptsize$\pm$117 & \textbf{1\,\scriptsize$\pm$1} & 2936\,\scriptsize$\pm$18 & 46\,\scriptsize$\pm$24 & 2914\,\scriptsize$\pm$49 & 126\,\scriptsize$\pm$19 & 2931\,\scriptsize$\pm$21 & 91\,\scriptsize$\pm$25 & 2919\,\scriptsize$\pm$16 & 95\,\scriptsize$\pm$28 \\
Hopper & 245\,\scriptsize$\pm$263 & 41\,\scriptsize$\pm$35 & 306\,\scriptsize$\pm$83 & 42\,\scriptsize$\pm$4 & 638\,\scriptsize$\pm$197 & 64\,\scriptsize$\pm$11 & 741\,\scriptsize$\pm$190 & 74\,\scriptsize$\pm$8 & 845\,\scriptsize$\pm$345 & 72\,\scriptsize$\pm$22 \\
Swimmer & 35\,\scriptsize$\pm$12 & \textbf{2\,\scriptsize$\pm$2} & 81\,\scriptsize$\pm$12 & \textbf{1\,\scriptsize$\pm$1} & 113\,\scriptsize$\pm$11 & 43\,\scriptsize$\pm$45 & 111\,\scriptsize$\pm$6 & 24\,\scriptsize$\pm$20 & 115\,\scriptsize$\pm$6 & \textbf{14\,\scriptsize$\pm$9} \\
CarGoal1 & 10.95\,\scriptsize$\pm$0.80 & \textbf{15.39\,\scriptsize$\pm$2.56} & 11.75\,\scriptsize$\pm$0.97 & \textbf{14.61\,\scriptsize$\pm$3.75} & 15.99\,\scriptsize$\pm$0.53 & \textbf{22.90\,\scriptsize$\pm$2.31} & 14.05\,\scriptsize$\pm$0.38 & \textbf{16.51\,\scriptsize$\pm$1.02} & 15.42\,\scriptsize$\pm$0.62 & \textbf{19.59\,\scriptsize$\pm$1.76} \\
CarGoal2 & 7.34\,\scriptsize$\pm$0.53 & 32.18\,\scriptsize$\pm$0.46 & 6.84\,\scriptsize$\pm$0.57 & 29.04\,\scriptsize$\pm$0.73 & 7.02\,\scriptsize$\pm$0.19 & 31.59\,\scriptsize$\pm$3.04 & 7.22\,\scriptsize$\pm$0.78 & 35.02\,\scriptsize$\pm$5.44 & 7.39\,\scriptsize$\pm$0.66 & 32.08\,\scriptsize$\pm$3.07 \\
PointGoal1 & 10.80\,\scriptsize$\pm$1.03 & \textbf{16.40\,\scriptsize$\pm$0.71} & 14.31\,\scriptsize$\pm$0.96 & \textbf{22.05\,\scriptsize$\pm$1.33} & 17.53\,\scriptsize$\pm$0.33 & 30.53\,\scriptsize$\pm$1.67 & 16.79\,\scriptsize$\pm$0.75 & 30.33\,\scriptsize$\pm$2.92 & 17.26\,\scriptsize$\pm$0.98 & 29.67\,\scriptsize$\pm$1.81 \\
PointGoal2 & 7.40\,\scriptsize$\pm$0.26 & 41.84\,\scriptsize$\pm$12.44 & 10.31\,\scriptsize$\pm$0.72 & 49.59\,\scriptsize$\pm$6.69 & 14.43\,\scriptsize$\pm$0.96 & 71.62\,\scriptsize$\pm$9.50 & 13.23\,\scriptsize$\pm$0.98 & 66.82\,\scriptsize$\pm$10.09 & 13.88\,\scriptsize$\pm$0.71 & 71.98\,\scriptsize$\pm$4.04 \\
 
\bottomrule
\end{tabular}}
\end{center}
\end{table}

\paragraph{Score aggregation.} Table~\ref{tab:t26agg} replaces the deployed mean
aggregator with the minimum and the tenth percentile of the per-state value
along a trajectory, filtering at the calibration-estimated fraction. Both
registered predictions hold. The minimum inherits the score's unreliable extreme
tail and is safe on four of the nine tasks against the mean's eight. The tenth
percentile tracks the mean closely everywhere and matches its count of eight, on a
different set: it rescues Hopper, at cost $19.7$ against the mean's $33.0$ on a budget
of $20$, the task the registration named in advance as one where the mean-aggregated
filter fails, and it loses Swimmer.
That is a pre-registered improvement on a single task rather than a sweep-wide
gain, and it softens rather than removes the open problem of tail-robust
scoring.

\begin{table}[h]
\caption{Score aggregators on the nine analysis tasks, V-filter at the
calibration-estimated fraction (three seeds, mean $\pm$ 95\% CI; bold marks mean
cost within budget). The mean column is the deployed aggregator.}
\label{tab:t26agg}
\begin{center}
\scriptsize
\begin{tabular}{l rr rr rr}
\toprule
& \multicolumn{2}{c}{min} & \multicolumn{2}{c}{10th pct} & \multicolumn{2}{c}{mean (deployed)} \\
\cmidrule(lr){2-3}\cmidrule(lr){4-5}\cmidrule(lr){6-7}
Task & $R$ & $C$ & $R$ & $C$ & $R$ & $C$ \\
\midrule
HalfCheetah & 2795\,\scriptsize$\pm$81 & 454\,\scriptsize$\pm$186 & 2188\,\scriptsize$\pm$127 & \textbf{9\,\scriptsize$\pm$8} & 1866\,\scriptsize$\pm$385 & \textbf{1\,\scriptsize$\pm$1} \\
Walker2d & 2462\,\scriptsize$\pm$178 & \textbf{16\,\scriptsize$\pm$10} & 2680\,\scriptsize$\pm$14 & \textbf{0\,\scriptsize$\pm$0} & 2658\,\scriptsize$\pm$51 & \textbf{4\,\scriptsize$\pm$5} \\
Ant & 2857\,\scriptsize$\pm$89 & \textbf{20\,\scriptsize$\pm$13} & 2484\,\scriptsize$\pm$40 & \textbf{1\,\scriptsize$\pm$1} & 2465\,\scriptsize$\pm$38 & \textbf{2\,\scriptsize$\pm$1} \\
Hopper & 1130\,\scriptsize$\pm$382 & 126\,\scriptsize$\pm$65 & 607\,\scriptsize$\pm$136 & \textbf{20\,\scriptsize$\pm$14} & 1238\,\scriptsize$\pm$257 & 33\,\scriptsize$\pm$31 \\
Swimmer & 130\,\scriptsize$\pm$8 & 40\,\scriptsize$\pm$36 & 55\,\scriptsize$\pm$11 & 32\,\scriptsize$\pm$9 & 106\,\scriptsize$\pm$8 & \textbf{15\,\scriptsize$\pm$8} \\
CarGoal1 & 11.13\,\scriptsize$\pm$0.11 & \textbf{12.17\,\scriptsize$\pm$0.89} & 9.33\,\scriptsize$\pm$0.36 & \textbf{6.95\,\scriptsize$\pm$0.65} & 8.64\,\scriptsize$\pm$0.74 & \textbf{8.44\,\scriptsize$\pm$0.85} \\
CarGoal2 & 6.19\,\scriptsize$\pm$0.80 & 27.37\,\scriptsize$\pm$4.07 & 4.04\,\scriptsize$\pm$0.56 & \textbf{18.89\,\scriptsize$\pm$0.79} & 3.77\,\scriptsize$\pm$0.33 & \textbf{17.38\,\scriptsize$\pm$2.17} \\
PointGoal1 & 13.11\,\scriptsize$\pm$1.81 & \textbf{18.72\,\scriptsize$\pm$4.80} & 8.02\,\scriptsize$\pm$0.19 & \textbf{8.67\,\scriptsize$\pm$1.09} & 8.27\,\scriptsize$\pm$0.25 & \textbf{11.55\,\scriptsize$\pm$0.31} \\
PointGoal2 & 11.23\,\scriptsize$\pm$0.75 & 42.30\,\scriptsize$\pm$2.23 & 6.64\,\scriptsize$\pm$0.63 & \textbf{19.80\,\scriptsize$\pm$2.63} & 6.51\,\scriptsize$\pm$0.54 & \textbf{19.73\,\scriptsize$\pm$1.33} \\
 
\bottomrule
\end{tabular}
\end{center}
\end{table}

\section{Value diagnostics: anticipation, traces, and the safety landscape}
\label{app:landscape}
Figure~\ref{fig:interp} reports the two temporal diagnostics behind the inspection paragraph of Section~\ref{sec:analysis}. Binned by steps until the next violation, the ensemble mean falls by two to four within-task standard deviations between the most distant bin and imminent violation on all nine tasks; individual bins reverse by up to 0.9 standard deviations, so the signal is a graded early warning rather than a strictly ordered countdown. In the single-trajectory trace, per-state values swing across several units while their rolling mean moves stably through the episode's cost-event bands, which is Observation~\ref{obs:granularity} at the resolution of one episode. The same network is noisy per state and reliable in aggregate.

\begin{figure}[h]
\begin{center}
\includegraphics[width=0.98\linewidth]{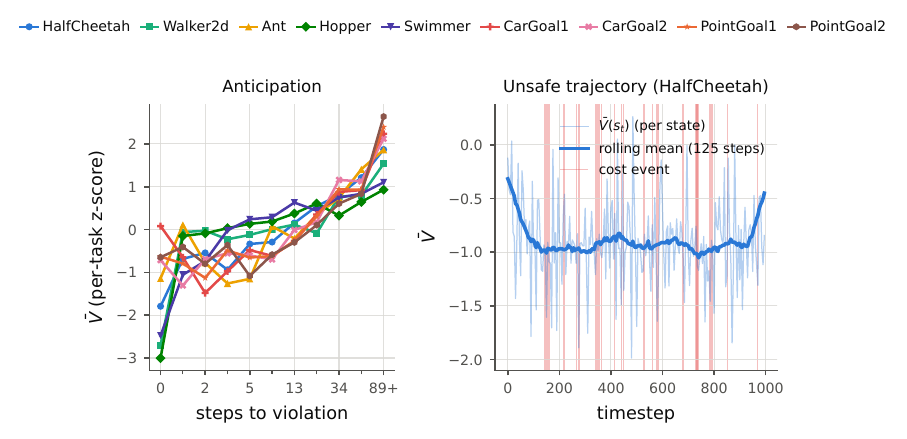}
\end{center}
\caption{Left: $\Vbar$ declines as violations approach (per-task z-scores). Right: per-state values along one unsafe trajectory (episodic cost 40, budget 20) are noisy while their rolling mean is stable.}
\label{fig:interp}
\end{figure}

Figure~\ref{fig:landscape-layouts} renders the learned value over the workspace on three independently sampled PointGoal1 layouts with the same value network (100 rollouts per layout; BC-All and BC-Safe policies with action-noise episodes for coverage). The low-value basin tracks the hazard field in each layout, with state-level Spearman correlations of $0.43$, $0.44$, and $0.20$ between $\Vbar$ and distance to the nearest hazard, the values labeled in the figure. The alignment largely collapses when the same states are frozen at zero velocity ($\rho$ of $0.09$, $0.27$, and $0.15$ on layouts 7, 11 and 23), so the value encodes proximity and motion jointly on the visited distribution rather than position alone. Correlations computed on cell means rather than states are weaker ($0.17$, $0.29$, $0.22$): binning discards the visit density that concentrates states in the region where the value varies most, so each cell counts equally regardless of how well the behavior data supports it. Two secondary structures are visible. Hazard clusters produce deeper basins than isolated hazards, which the behavior data rarely visits. And in the second layout the goal itself lies inside the hazard field, since reaching it requires transiting low-value states, the geometric signature of tasks where any safety filter must either sacrifice reward or refuse.

\begin{figure}[h]
\begin{center}
\includegraphics[width=0.98\linewidth]{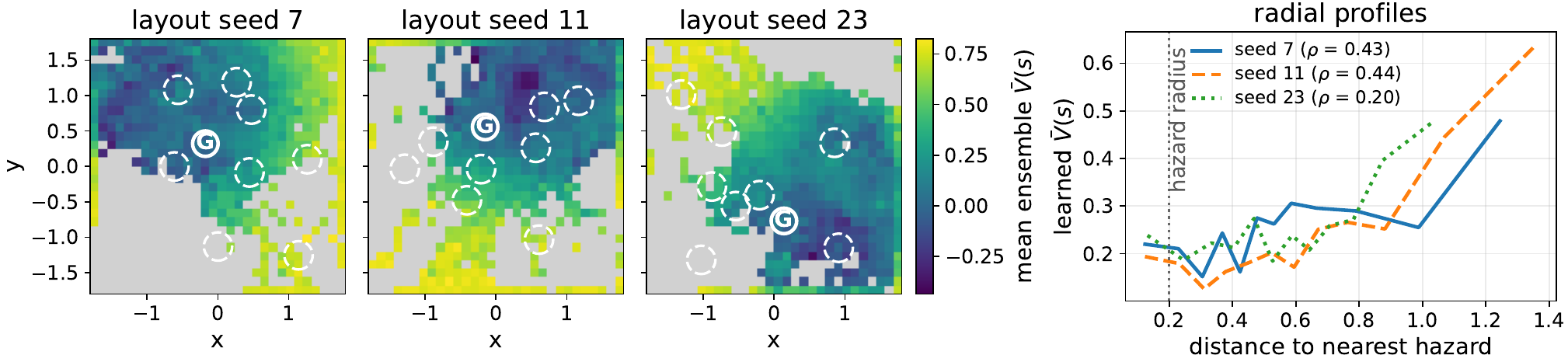}
\end{center}
\caption{Safety landscape on three independently sampled PointGoal1 layouts (reset seeds 7, 11, 23), same value network. Mean ensemble $\Vbar$ over the states visited by 100 rollouts per layout, alternating the BC-All and BC-Safe behavior policies with action noise so that coverage spans both the hazard field and the safe region; cells with fewer than 25 visits are left gray. Dashed circles are ground-truth hazards, which the model never sees, and G marks the goal. Right: $\Vbar$ in distance bins, labeled with the state-level Spearman correlation per layout.}
\label{fig:landscape-layouts}
\end{figure}

\section{External baselines and labeling cost}
\label{app:baselines}

This appendix reports the external baselines per task, in normalized and in raw units,
and places every method by the supervision it consumes.

\begin{table}[t]
\caption{Normalized cost ($C_n = C/\text{budget}$; safe when $C_n \le 1$, bold) across all methods in the study, with CDT at its budget-matched cost target (its best target per task, two further safe tasks, is in Appendix~\ref{app:cdtsweep}), for at-a-glance comparison in the benchmark's standard units; rewards and intervals appear in Tables~\ref{tab:main} and~\ref{tab:baselines}.}
\label{tab:normcost}
\begin{center}
\scriptsize
\setlength{\tabcolsep}{3pt}
\resizebox{\textwidth}{!}{%
\begin{tabular}{l rrrrr rrr r}
\toprule
Task & BC-All & BC-Safe & Safe-Seg & V-filter & Calib. & CDT & CPQ & COpt. & CPL \\
\midrule
HalfCheetah & 5.31 & \textbf{0.17} & \textbf{0.56} & \textbf{0.05} & \textbf{0.13} & \textbf{0.27} & 2.08 & \textbf{0.00} & \textbf{0.02} \\
Walker2d & \textbf{0.50} & \textbf{0.26} & 1.64 & \textbf{0.21} & \textbf{0.05} & \textbf{0.05} & \textbf{0.31} & 2.09 & \textbf{0.87} \\
Ant & 8.73 & \textbf{0.29} & 1.29 & \textbf{0.08} & \textbf{0.07} & \textbf{0.61} & \textbf{0.00} & 5.48 & \textbf{0.28} \\
Hopper & 6.54 & \textbf{0.89} & \textbf{0.73} & 1.65 & \textbf{0.17} & \textbf{0.70} & 4.66 & 4.13 & 1.89 \\
Swimmer & 4.60 & 1.84 & 1.33 & \textbf{0.73} & 2.94 & 1.16 & \textbf{0.01} & 10.50 & 1.69 \\
CarGoal1 & \textbf{0.90} & \textbf{0.36} & 1.23 & \textbf{0.34} & \textbf{0.31} & 1.47 & 1.44 & \textbf{0.64} & \textbf{0.96} \\
CarGoal2 & 1.54 & \textbf{0.66} & 1.85 & \textbf{0.70} & \textbf{0.77} & 2.61 & 3.70 & 1.29 & 2.05 \\
PointGoal1 & 1.48 & \textbf{0.50} & 1.55 & \textbf{0.46} & \textbf{0.35} & 1.47 & \textbf{0.46} & 2.70 & 1.63 \\
PointGoal2 & 3.06 & \textbf{0.85} & 3.23 & \textbf{0.79} & \textbf{0.83} & 2.40 & 1.25 & 2.50 & 3.08 \\
PointButton1 & 2.04 & \textbf{0.80} & 2.12 & \textbf{0.75} & \textbf{0.57} & 4.66 & 4.60 & 1.40 & 1.97 \\
PointButton2 & 2.66 & 1.32 & 2.91 & 1.21 & 1.13 & 3.70 & 5.20 & 1.81 & 2.57 \\
CarButton1 & 1.74 & \textbf{0.89} & 3.20 & \textbf{0.54} & \textbf{0.58} & 4.19 & 15.60 & 1.13 & 2.47 \\
CarButton2 & 1.81 & \textbf{0.89} & 2.54 & \textbf{0.74} & \textbf{0.76} & 3.91 & 18.41 & 1.85 & 2.38 \\
PointCircle1 & 6.24 & \textbf{0.19} & 7.19 & \textbf{0.41} & \textbf{0.70} & \textbf{0.24} & 1.22 & 7.69 & 6.27 \\
PointCircle2 & 6.35 & 1.77 & 6.51 & 3.30 & 3.49 & 1.12 & 4.45 & 11.65 & 8.99 \\
 
\bottomrule
\end{tabular}}
\end{center}
\end{table}
Table~\ref{tab:baselines} reports the full-label baselines and CPL per task; Figure~\ref{fig:pareto} summarizes the study by the kind of supervisor each method assumes against tasks kept within budget. Two details support the main-text reading. CDT's navigation costs run from roughly 1.5 times budget on the Goal tasks to nearly 5 times on the Button tasks, and Appendix~\ref{app:cdtsweep} shows they stay there at every conditioning target. CPQ's safety is largely degenerate, pairing near-zero cost with collapsed reward on three of its four safe cells (Walker2d, Ant, Swimmer); PointGoal1 is its one safe cell with intact reward. COptiDICE's two safe cells, HalfCheetah and CarGoal1, give up 23 to 36 percent of BC-All's reward. In the figure, full-label methods are placed at their best configuration, CDT at its safest target per task; no external baseline exceeds the calibrated filter's twelve safe tasks, CDT reaching seven of fifteen, while BC-Safe reaches twelve on ground-truth labels. BC-Safe-Seg, which clones the ground-truth-safe \emph{segments} rather than trajectories, is safe on two of fifteen despite full labels. Selection at segment scale fragments the multi-step avoidance behaviors that carry safety, a selection-level counterpart of the granularity principle at the scale between per-transition reweighting and whole-trajectory selection. Table~\ref{tab:normcost} gives the normalized-cost comparison across all methods; normalized reward $R_n$ uses dataset return extrema and is recoverable from the raw values reported here.

\begin{table}[h]
\caption{External baselines: full-label offline safe RL trained in-codebase (three seeds; CDT is reported at its budget-matched cost target, and its full cost-target sweep, including its safest target per task, is in Appendix~\ref{app:cdtsweep}), CPL on our exact preference data, and BC-Safe-Seg, the segment-level full-label clone (five seeds). Bold cost cells are within budget.}
\label{tab:baselines}
\begin{center}
\scriptsize
\setlength{\tabcolsep}{3.5pt}
\resizebox{\textwidth}{!}{%
\begin{tabular}{l rr rr rr rr rr}
\toprule
& \multicolumn{2}{c}{CDT} & \multicolumn{2}{c}{CPQ} & \multicolumn{2}{c}{COptiDICE} & \multicolumn{2}{c}{CPL} & \multicolumn{2}{c}{BC-Safe-Seg} \\
\cmidrule(lr){2-3}\cmidrule(lr){4-5}\cmidrule(lr){6-7}\cmidrule(lr){8-9}\cmidrule(lr){10-11}
Task & $R$ & $C$ & $R$ & $C$ & $R$ & $C$ & $R$ & $C$ & $R$ & $C$ \\
\midrule
HalfCheetah & 2756\,\scriptsize$\pm$19 & \textbf{5\,\scriptsize$\pm$3} & 1037\,\scriptsize$\pm$663 & 42\,\scriptsize$\pm$45 & 1716\,\scriptsize$\pm$75 & \textbf{0\,\scriptsize$\pm$0} & 1991\,\scriptsize$\pm$74 & \textbf{0\,\scriptsize$\pm$0} & 2588\,\scriptsize$\pm$20 & \textbf{11\,\scriptsize$\pm$9} \\
Walker2d & 2712\,\scriptsize$\pm$13 & \textbf{1\,\scriptsize$\pm$1} & 33\,\scriptsize$\pm$94 & \textbf{6\,\scriptsize$\pm$9} & 518\,\scriptsize$\pm$98 & 42\,\scriptsize$\pm$15 & 2722\,\scriptsize$\pm$27 & \textbf{17\,\scriptsize$\pm$16} & 2103\,\scriptsize$\pm$251 & 33\,\scriptsize$\pm$13 \\
Ant & 2946\,\scriptsize$\pm$10 & \textbf{12\,\scriptsize$\pm$3} & -3000\,\scriptsize$\pm$0 & \textbf{0\,\scriptsize$\pm$0} & 2959\,\scriptsize$\pm$22 & 110\,\scriptsize$\pm$37 & 2673\,\scriptsize$\pm$61 & \textbf{6\,\scriptsize$\pm$4} & 2931\,\scriptsize$\pm$24 & 26\,\scriptsize$\pm$8 \\
Hopper & 1457\,\scriptsize$\pm$194 & \textbf{14\,\scriptsize$\pm$12} & 640\,\scriptsize$\pm$346 & 93\,\scriptsize$\pm$75 & 386\,\scriptsize$\pm$177 & 83\,\scriptsize$\pm$56 & 596\,\scriptsize$\pm$440 & 38\,\scriptsize$\pm$18 & 978\,\scriptsize$\pm$180 & \textbf{15\,\scriptsize$\pm$6} \\
Swimmer & 162\,\scriptsize$\pm$1 & 23\,\scriptsize$\pm$6 & 4\,\scriptsize$\pm$7 & \textbf{0\,\scriptsize$\pm$0} & 130\,\scriptsize$\pm$33 & 210\,\scriptsize$\pm$168 & 62\,\scriptsize$\pm$16 & 34\,\scriptsize$\pm$24 & 97\,\scriptsize$\pm$5 & 27\,\scriptsize$\pm$20 \\
CarGoal1 & 26.06\,\scriptsize$\pm$1.56 & 36.87\,\scriptsize$\pm$10.00 & 31.23\,\scriptsize$\pm$2.61 & 36.00\,\scriptsize$\pm$0.85 & 15.86\,\scriptsize$\pm$4.68 & \textbf{16.00\,\scriptsize$\pm$8.10} & 18.33\,\scriptsize$\pm$0.76 & \textbf{24.01\,\scriptsize$\pm$2.73} & 20.94\,\scriptsize$\pm$0.53 & 30.82\,\scriptsize$\pm$2.60 \\
CarGoal2 & 14.92\,\scriptsize$\pm$3.14 & 65.37\,\scriptsize$\pm$13.80 & 14.21\,\scriptsize$\pm$6.25 & 92.57\,\scriptsize$\pm$26.90 & 7.37\,\scriptsize$\pm$1.18 & 32.23\,\scriptsize$\pm$4.70 & 10.57\,\scriptsize$\pm$0.54 & 51.27\,\scriptsize$\pm$4.45 & 10.75\,\scriptsize$\pm$0.45 & 46.24\,\scriptsize$\pm$1.19 \\
PointGoal1 & 20.17\,\scriptsize$\pm$1.56 & 36.77\,\scriptsize$\pm$3.75 & 18.68\,\scriptsize$\pm$2.46 & \textbf{11.60\,\scriptsize$\pm$7.60} & 13.83\,\scriptsize$\pm$2.88 & 67.50\,\scriptsize$\pm$29.90 & 14.19\,\scriptsize$\pm$0.49 & 40.65\,\scriptsize$\pm$1.96 & 21.31\,\scriptsize$\pm$0.09 & 38.75\,\scriptsize$\pm$2.29 \\
PointGoal2 & 13.78\,\scriptsize$\pm$2.82 & 59.97\,\scriptsize$\pm$16.05 & 11.85\,\scriptsize$\pm$5.77 & 31.33\,\scriptsize$\pm$12.60 & 10.22\,\scriptsize$\pm$0.55 & 62.40\,\scriptsize$\pm$9.90 & 9.87\,\scriptsize$\pm$0.77 & 76.95\,\scriptsize$\pm$7.06 & 15.38\,\scriptsize$\pm$0.44 & 80.64\,\scriptsize$\pm$5.80 \\
PointButton1 & 23.13\,\scriptsize$\pm$1.02 & 116.50\,\scriptsize$\pm$23.20 & 27.93\,\scriptsize$\pm$0.37 & 114.93\,\scriptsize$\pm$14.05 & 2.63\,\scriptsize$\pm$1.16 & 34.90\,\scriptsize$\pm$11.15 & 4.06\,\scriptsize$\pm$1.11 & 49.23\,\scriptsize$\pm$10.05 & 8.23\,\scriptsize$\pm$0.83 & 52.97\,\scriptsize$\pm$2.04 \\
PointButton2 & 17.84\,\scriptsize$\pm$3.79 & 92.40\,\scriptsize$\pm$26.45 & 20.86\,\scriptsize$\pm$1.11 & 129.97\,\scriptsize$\pm$20.75 & 5.37\,\scriptsize$\pm$3.69 & 45.17\,\scriptsize$\pm$15.30 & 6.46\,\scriptsize$\pm$1.77 & 64.29\,\scriptsize$\pm$8.44 & 14.05\,\scriptsize$\pm$0.54 & 72.81\,\scriptsize$\pm$5.06 \\
CarButton1 & 7.69\,\scriptsize$\pm$1.26 & 104.63\,\scriptsize$\pm$44.35 & 15.20\,\scriptsize$\pm$4.80 & 389.97\,\scriptsize$\pm$68.65 & -7.65\,\scriptsize$\pm$4.26 & 28.17\,\scriptsize$\pm$13.55 & -2.64\,\scriptsize$\pm$1.12 & 61.65\,\scriptsize$\pm$1.91 & 4.11\,\scriptsize$\pm$2.66 & 80.08\,\scriptsize$\pm$8.99 \\
CarButton2 & 5.08\,\scriptsize$\pm$3.48 & 97.70\,\scriptsize$\pm$17.35 & 9.01\,\scriptsize$\pm$2.90 & 460.30\,\scriptsize$\pm$67.20 & -6.35\,\scriptsize$\pm$3.79 & 46.30\,\scriptsize$\pm$13.90 & -3.27\,\scriptsize$\pm$1.50 & 59.60\,\scriptsize$\pm$3.86 & -0.71\,\scriptsize$\pm$2.22 & 63.62\,\scriptsize$\pm$6.41 \\
PointCircle1 & 42.7\,\scriptsize$\pm$0.1 & \textbf{6.1\,\scriptsize$\pm$2.3} & 36.9\,\scriptsize$\pm$5.4 & 30.4\,\scriptsize$\pm$45.2 & 55.2\,\scriptsize$\pm$1.4 & 192.1\,\scriptsize$\pm$9.0 & 33.0\,\scriptsize$\pm$13.0 & 156.6\,\scriptsize$\pm$115.4 & 53.7\,\scriptsize$\pm$1.7 & 179.7\,\scriptsize$\pm$17.3 \\
PointCircle2 & 40.2\,\scriptsize$\pm$0.4 & 27.9\,\scriptsize$\pm$3.7 & 30.2\,\scriptsize$\pm$6.0 & 111.3\,\scriptsize$\pm$103.4 & 48.6\,\scriptsize$\pm$1.5 & 291.3\,\scriptsize$\pm$16.5 & 32.0\,\scriptsize$\pm$4.7 & 224.8\,\scriptsize$\pm$178.2 & 42.9\,\scriptsize$\pm$1.9 & 162.8\,\scriptsize$\pm$31.6 \\
 
\bottomrule
\end{tabular}}
\end{center}
\end{table}

\begin{figure}[h]
\begin{center}
\includegraphics[width=0.62\linewidth]{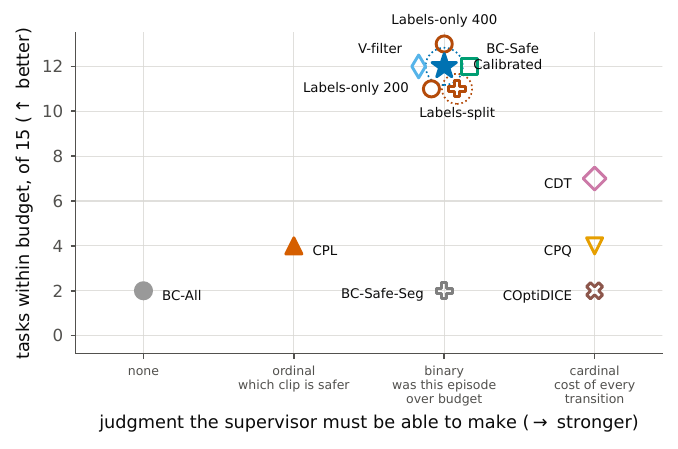}
\end{center}
\caption{Supervision against safety coverage. The horizontal axis is the strongest judgment a method's supervisor must be able to make, given by the tick labels, rather than how much of it is consumed; methods sit on their tier and spread sideways only to separate ties. The vertical axis counts the fifteen tasks kept within budget. \emph{Calibrated} is the procedure this paper proposes, V-filter its uncertified ablation, and labels-only and labels-split the controls that spend the 200 labels on training a classifier rather than on auditing one. The two ringed methods are the only ones shown that issue a certificate or an explicit refusal. CDT, CPQ and COptiDICE are external baselines at their best configuration, CDT at its safest target per task (Appendix~\ref{app:cdtsweep}); rewards appear in Tables~\ref{tab:main} and \ref{tab:baselines}.}
\label{fig:pareto}
\end{figure}

\section{Composability and contamination tolerance}
\label{app:composea40}
On the two navigation tasks CDT on the certified selection trails our own clone (7.5 against 8.5 on CarGoal1; 5.2 against 7.5 on PointGoal1), so the certified data costs it reward there, and at $\alpha = 0.40$ PointGoal1 moves from cost 5.4 to 41.5 as realized contamination rises. Table~\ref{tab:compose} reports the composability comparison at the deployed level $\alpha = 0.25$ that Section~\ref{sec:results} summarizes: full-data CDT with and without augmentation, CDT retrained on the certified selection, our calibrated clone, certified return-weighted cloning, and CPL on the same certified selection.

\begin{table}[t]
\caption{Composability at $\alpha = 0.25$: CDT and our certified operator on the deployed certified selection per task, three learner seeds each, with the two further distinct selections per task in Table~\ref{tab:a25draws}. The clone column is the ungated behavior clone of our deployed pipeline end to end over its five seeds, so it is scored on the selection each seed produces rather than on the single selection the CDT and operator columns share; it differs from Table~\ref{tab:main}'s calibrated column, which applies the certification-gated sub-selection on the certified tasks. CDT's frontier-fitting augmentation presumes a full cost spectrum and cannot run on curated data, so certified runs disable it; the no-augmentation column controls for that on full data. The operator column is certified return-weighted cloning at $\kappa = 2$ on each task's most
selective distinct certified selection, the $0.85$ quantile except on Walker2d, whose deepest
is $0.75$. CPL is given the same certified selection as CDT, with its preference budget redrawn inside it, so both external learners are scored on the same data. Bold cost is within budget.}
\label{tab:compose}
\begin{center}
\scriptsize
\setlength{\tabcolsep}{3.5pt}
\resizebox{\textwidth}{!}{%
\begin{tabular}{l rr rr rr rr rr rr}
\toprule
& \multicolumn{2}{c}{CDT (full data)} & \multicolumn{2}{c}{CDT (full, no aug.)} & \multicolumn{2}{c}{CDT (certified)} & \multicolumn{2}{c}{Ours (clone)} & \multicolumn{2}{c}{Ours (operator)} & \multicolumn{2}{c}{CPL (certified)} \\
\cmidrule(lr){2-3}\cmidrule(lr){4-5}\cmidrule(lr){6-7}\cmidrule(lr){8-9}\cmidrule(lr){10-11}\cmidrule(lr){12-13}
Task & $R$ & $C$ & $R$ & $C$ & $R$ & $C$ & $R$ & $C$ & $R$ & $C$ & $R$ & $C$ \\
\midrule
HalfCheetah & 2756\,\scriptsize$\pm$19 & \textbf{5\,\scriptsize$\pm$3} & 2794\,\scriptsize$\pm$5 & \textbf{0\,\scriptsize$\pm$0} & 2048\,\scriptsize$\pm$256 & \textbf{1\,\scriptsize$\pm$0} & 2104\,\scriptsize$\pm$69 & \textbf{3\,\scriptsize$\pm$3} & 2271\,\scriptsize$\pm$21 & \textbf{0\,\scriptsize$\pm$0} & 1537\,\scriptsize$\pm$220 & \textbf{0\,\scriptsize$\pm$0} \\
Walker2d & 2712\,\scriptsize$\pm$13 & \textbf{1\,\scriptsize$\pm$1} & 2635\,\scriptsize$\pm$118 & \textbf{1\,\scriptsize$\pm$2} & 2686\,\scriptsize$\pm$4 & \textbf{3\,\scriptsize$\pm$1} & 2652\,\scriptsize$\pm$51 & \textbf{2\,\scriptsize$\pm$2} & 2682\,\scriptsize$\pm$10 & \textbf{1\,\scriptsize$\pm$2} & 2037\,\scriptsize$\pm$286 & 51\,\scriptsize$\pm$21 \\
CarGoal1 & 26.06\,\scriptsize$\pm$1.56 & 36.87\,\scriptsize$\pm$10.00 & 28.12\,\scriptsize$\pm$0.78 & 52.10\,\scriptsize$\pm$19.55 & 7.46\,\scriptsize$\pm$0.88 & \textbf{6.77\,\scriptsize$\pm$3.30} & 8.54\,\scriptsize$\pm$0.49 & \textbf{8.41\,\scriptsize$\pm$0.73} & 8.73\,\scriptsize$\pm$0.32 & \textbf{7.40\,\scriptsize$\pm$1.03} & 9.51\,\scriptsize$\pm$0.52 & \textbf{6.84\,\scriptsize$\pm$1.76} \\
PointGoal1 & 20.17\,\scriptsize$\pm$1.56 & 36.77\,\scriptsize$\pm$3.75 & 22.68\,\scriptsize$\pm$0.39 & 47.03\,\scriptsize$\pm$7.65 & 5.22\,\scriptsize$\pm$2.04 & \textbf{5.40\,\scriptsize$\pm$3.90} & 7.47\,\scriptsize$\pm$0.68 & \textbf{8.80\,\scriptsize$\pm$1.74} & 5.31\,\scriptsize$\pm$0.66 & \textbf{4.82\,\scriptsize$\pm$1.98} & 7.35\,\scriptsize$\pm$0.17 & \textbf{9.78\,\scriptsize$\pm$2.31} \\
 
\bottomrule
\end{tabular}}
\end{center}
\end{table}

Table~\ref{tab:composea40} reports CDT, our calibrated clone and CPL at $\alpha = 0.40$ for the eight tasks that certify at that level (three seeds each; the pipeline's deployed certified selection per task, with the full three-selection grid in Table~\ref{tab:a40draws}; Hopper never certifies), alongside full-data CDT with and without augmentation for reference. Our clone is safe on seven of the eight, failing on Swimmer at cost $24$;
CDT is safe on six and violates on PointGoal1 and PointGoal2, the tasks whose selections carry the largest usable unsafe tails. The no-augmentation control seals the attribution across the table: full-data CDT without augmentation is safe on the velocity tasks but severely unsafe on all four navigation tasks (costs 47 to 109), so wherever certified-selection CDT is safe on navigation, the safety is due to the certified data and not to the disabled augmentation. Against full-data CDT, the $\alpha = 0.40$ selection lowers cost on six of the eight tasks and leaves Walker2d unchanged near zero; the exception is PointGoal1, where the loosely curated pool is worse than no curation at all (41.5 against 36.8), consistent with the selection concentrating high-return marginal violators that return conditioning then exploits. Read against full-data CDT, the two levels differ in what they achieve on the four shared tasks. Where the full pool already violates, the tighter level rescues both navigation tasks ($36.9$ to $6.8$ on CarGoal1, $36.8$ to $5.4$ on PointGoal1) while the looser one rescues only CarGoal1 and leaves PointGoal1 violating at $41.5$. Where the full pool is already safe, the tighter level keeps it so ($5.3$ to $0.9$ on HalfCheetah) while the looser one doubles its cost to $11.3$; on Walker2d both stay far within budget ($0.9$ on the pool, $2.7$ and $2.1$). The tighter certificate therefore rescues where the looser one cannot and takes no task outside its budget, which is the dose-response the main text summarizes.

\paragraph{Selection robustness across distinct certified selections.} Independent calibration draws often select the identical threshold. Across the twelve task-level settings the 500-draw resample covers, the single most probable distinct selection accounts for $21$ to $83$ percent of certified draws, and the three reported selections together for at least $53$ percent and on eight of the twelve for more than nine tenths. The concentration is itself a deployment-stability property, so the grids are built on the three most probable \emph{distinct} certified selections per task, ordered by realized contamination $\hat\alpha$ (Table~\ref{tab:a40draws}); the distribution's rare tail contains the certificate's worst outputs, and the grids deliberately include them. At $\alpha = 0.25$, all twelve distinct selections across the four tasks are CDT-safe, 36 of 36 runs (Table~\ref{tab:a25draws}), including three out-of-specification selections (HalfCheetah at realized $0.263$ and $0.337$, costs $2$ and $0$; CarGoal1 at $0.251$, cost $12.0$), with per-task mean costs spanning at most $6.8$. CPL is within budget on 28 of the same 36 runs, and every one of its eight failures is on Walker2d, the task whose full-data CPL already sits at the budget. At $\alpha = 0.40$ every learner degrades with realized contamination, at very different thresholds. CDT is safe on 14 of the 24 selections and violates on every navigation task on at least one, CarGoal1 moving from safe to unsafe as contamination rises ($0.29$, $0.32$, $0.36$ giving $17.5$, $18.9$, $36.2$). CPL, safe on 28 of 36 runs at $\alpha = 0.25$, is safe on 17 of 24 selections here, failing on all three Walker2d selections, Swimmer's most contaminated one and three navigation selections, so the dose-response that justifies the tighter level holds for the second external learner as well as the first. Our clone, an averaging learner, absorbs far more: it is safe on 22 of 24 task-selections (Table~\ref{tab:a40drawsbc}); its two failures are Swimmer's tail selections at realized $0.33$ and $0.45$, the latter above even the certified level, so the averaging learner's tolerance has a measured boundary rather than being unbounded, and on the $0.33$ selection the roles reverse, CDT safe at $18$ while the clone violates at $96$. One exception is honest to note. On PointGoal2 the ordering is not monotone in contamination ($27.0$, $14.2$, $33.3$ at contamination $0.26$, $0.32$, $0.37$), so selection size interacts with composition and the contamination reading is a strong tendency, not a law. The composite picture stands. At the tight level realized composition is pinned and every distinct selection composes for the return-conditioned learner, while at the loose level the certified band is wide enough that consumer safety depends on which selection the certificate outputs.

\begin{table}[h]
\caption{Distinct-selection grid at $\alpha = 0.25$: the three distinct certified selections per task for the four tasks that certify at this level, with the reward and mean cost of CDT and of CPL on each identical selection (three seeds each; bold marks mean cost within budget). All 36 CDT runs are within budget, including the three selections whose realized
contamination $\hat\alpha$ exceeds the certified level; CPL is within budget on 28 of its 36,
its eight failures falling on Walker2d's three selections.}
\label{tab:a25draws}
\begin{center}
\scriptsize
\setlength{\tabcolsep}{3pt}
\resizebox{\textwidth}{!}{%
\begin{tabular}{l rrrrr rrrrr rrrrr}
\toprule
& \multicolumn{5}{c}{selection 1} & \multicolumn{5}{c}{selection 2} & \multicolumn{5}{c}{selection 3} \\
\cmidrule(lr){2-6}\cmidrule(lr){7-11}\cmidrule(lr){12-16}
& & \multicolumn{2}{c}{CDT} & \multicolumn{2}{c}{CPL} & & \multicolumn{2}{c}{CDT} & \multicolumn{2}{c}{CPL} & & \multicolumn{2}{c}{CDT} & \multicolumn{2}{c}{CPL} \\
\cmidrule(lr){3-4}\cmidrule(lr){5-6}\cmidrule(lr){8-9}\cmidrule(lr){10-11}\cmidrule(lr){13-14}\cmidrule(lr){15-16}
Task & $\hat\alpha$ & $R$ & $C$ & $R$ & $C$ & $\hat\alpha$ & $R$ & $C$ & $R$ & $C$ & $\hat\alpha$ & $R$ & $C$ & $R$ & $C$ \\
\midrule
HalfCheetah & 0.192 & 2048\,\scriptsize$\pm$256 & \textbf{1\,\scriptsize$\pm$0} & 1537\,\scriptsize$\pm$220 & \textbf{0\,\scriptsize$\pm$0} & 0.263 & 2176\,\scriptsize$\pm$172 & \textbf{2\,\scriptsize$\pm$2} & 1764\,\scriptsize$\pm$159 & \textbf{0\,\scriptsize$\pm$0} & 0.337 & 2774\,\scriptsize$\pm$18 & \textbf{0\,\scriptsize$\pm$0} & 1551\,\scriptsize$\pm$192 & \textbf{0\,\scriptsize$\pm$0} \\
Walker2d & 0.139 & 2686\,\scriptsize$\pm$4 & \textbf{3\,\scriptsize$\pm$1} & 2037\,\scriptsize$\pm$286 & 51\,\scriptsize$\pm$21 & 0.177 & 2691\,\scriptsize$\pm$25 & \textbf{5\,\scriptsize$\pm$2} & 1827\,\scriptsize$\pm$151 & 63\,\scriptsize$\pm$5 & 0.248 & 2604\,\scriptsize$\pm$172 & \textbf{6\,\scriptsize$\pm$8} & 2212\,\scriptsize$\pm$146 & 35\,\scriptsize$\pm$24 \\
CarGoal1 & 0.175 & 7.5\,\scriptsize$\pm$0.9 & \textbf{6.77\,\scriptsize$\pm$3.30} & 9.5\,\scriptsize$\pm$0.5 & \textbf{6.84\,\scriptsize$\pm$1.76} & 0.206 & 10.5\,\scriptsize$\pm$1.6 & \textbf{5.23\,\scriptsize$\pm$1.80} & 9.5\,\scriptsize$\pm$0.6 & \textbf{8.44\,\scriptsize$\pm$1.19} & 0.251 & 9.3\,\scriptsize$\pm$2.3 & \textbf{12.03\,\scriptsize$\pm$3.30} & 11.1\,\scriptsize$\pm$0.5 & \textbf{10.36\,\scriptsize$\pm$2.54} \\
PointGoal1 & 0.122 & 5.2\,\scriptsize$\pm$2.0 & \textbf{5.40\,\scriptsize$\pm$3.90} & 7.3\,\scriptsize$\pm$0.2 & \textbf{9.78\,\scriptsize$\pm$2.31} & 0.141 & 4.1\,\scriptsize$\pm$0.9 & \textbf{7.33\,\scriptsize$\pm$4.15} & 6.7\,\scriptsize$\pm$0.9 & \textbf{8.73\,\scriptsize$\pm$1.05} & 0.238 & 2.6\,\scriptsize$\pm$1.8 & \textbf{6.80\,\scriptsize$\pm$2.40} & 9.0\,\scriptsize$\pm$2.1 & \textbf{12.30\,\scriptsize$\pm$1.71} \\
 
\bottomrule
\end{tabular}}
\end{center}
\end{table}

\begin{table}[h]
\caption{Composability at $\alpha = 0.40$ on the deployed certified selection per task (three seeds, mean $\pm$ 95\% CI; bold marks mean cost within budget). Column naming follows Table~\ref{tab:compose}: the clone column is our deployed pipeline end to end, so it is scored on the selection each seed produces rather than on the single selection the CDT and CPL columns share. The operator column of Table~\ref{tab:compose} has no counterpart here because certified return-weighted cloning is characterized at $\alpha = 0.25$ (Table~\ref{tab:operator}). Both external learners, CDT and CPL, are scored on the same selection as our own clone, with CPL's preference budget redrawn inside it.}
\label{tab:composea40}
\begin{center}
\scriptsize
\setlength{\tabcolsep}{3.5pt}
\resizebox{\textwidth}{!}{%
\begin{tabular}{l rr rr rr rr rr}
\toprule
& \multicolumn{2}{c}{CDT (full data)} & \multicolumn{2}{c}{CDT (full, no aug.)} & \multicolumn{2}{c}{CDT (certified)} & \multicolumn{2}{c}{Ours (calibrated)} & \multicolumn{2}{c}{CPL (certified)} \\
\cmidrule(lr){2-3}\cmidrule(lr){4-5}\cmidrule(lr){6-7}\cmidrule(lr){8-9}\cmidrule(lr){10-11}
Task & $R$ & $C$ & $R$ & $C$ & $R$ & $C$ & $R$ & $C$ & $R$ & $C$ \\
\midrule
HalfCheetah & 2756\,\scriptsize$\pm$19 & \textbf{5\,\scriptsize$\pm$3} & 2794\,\scriptsize$\pm$5 & \textbf{0\,\scriptsize$\pm$0} & 2069\,\scriptsize$\pm$75 & \textbf{11\,\scriptsize$\pm$3} & 2176\,\scriptsize$\pm$273 & \textbf{6\,\scriptsize$\pm$8} & 1618\,\scriptsize$\pm$174 & \textbf{0\,\scriptsize$\pm$0} \\
Walker2d & 2712\,\scriptsize$\pm$13 & \textbf{1\,\scriptsize$\pm$1} & 2635\,\scriptsize$\pm$118 & \textbf{1\,\scriptsize$\pm$2} & 2706\,\scriptsize$\pm$15 & \textbf{2\,\scriptsize$\pm$1} & 2674\,\scriptsize$\pm$54 & \textbf{1\,\scriptsize$\pm$1} & 1680\,\scriptsize$\pm$301 & 50\,\scriptsize$\pm$15 \\
Ant & 2946\,\scriptsize$\pm$10 & \textbf{12\,\scriptsize$\pm$3} & 2930\,\scriptsize$\pm$23 & \textbf{9\,\scriptsize$\pm$5} & 2923\,\scriptsize$\pm$4 & \textbf{7\,\scriptsize$\pm$2} & 2705\,\scriptsize$\pm$108 & \textbf{2\,\scriptsize$\pm$1} & 1226\,\scriptsize$\pm$217 & \textbf{0\,\scriptsize$\pm$0} \\
Swimmer & 162\,\scriptsize$\pm$1 & 23\,\scriptsize$\pm$6 & 161\,\scriptsize$\pm$1 & \textbf{20\,\scriptsize$\pm$2} & 156\,\scriptsize$\pm$1 & \textbf{9\,\scriptsize$\pm$5} & 121\,\scriptsize$\pm$13 & 24\,\scriptsize$\pm$14 & 32\,\scriptsize$\pm$24 & \textbf{0\,\scriptsize$\pm$0} \\
CarGoal1 & 26.06\,\scriptsize$\pm$1.56 & 36.87\,\scriptsize$\pm$10.00 & 28.12\,\scriptsize$\pm$0.78 & 52.10\,\scriptsize$\pm$19.55 & 8.80\,\scriptsize$\pm$4.18 & \textbf{3.60\,\scriptsize$\pm$3.10} & 8.76\,\scriptsize$\pm$1.62 & \textbf{7.47\,\scriptsize$\pm$1.83} & 9.32\,\scriptsize$\pm$0.76 & \textbf{7.81\,\scriptsize$\pm$1.68} \\
CarGoal2 & 14.92\,\scriptsize$\pm$3.14 & 65.37\,\scriptsize$\pm$13.80 & 16.32\,\scriptsize$\pm$1.18 & 108.93\,\scriptsize$\pm$22.65 & 4.94\,\scriptsize$\pm$0.56 & \textbf{20.63\,\scriptsize$\pm$4.25} & 3.15\,\scriptsize$\pm$0.24 & \textbf{17.83\,\scriptsize$\pm$1.52} & 4.95\,\scriptsize$\pm$0.31 & \textbf{23.93\,\scriptsize$\pm$2.07} \\
PointGoal1 & 20.17\,\scriptsize$\pm$1.56 & 36.77\,\scriptsize$\pm$3.75 & 22.68\,\scriptsize$\pm$0.39 & 47.03\,\scriptsize$\pm$7.65 & 18.60\,\scriptsize$\pm$1.62 & 41.53\,\scriptsize$\pm$19.70 & 9.85\,\scriptsize$\pm$2.39 & \textbf{12.28\,\scriptsize$\pm$1.75} & 12.61\,\scriptsize$\pm$1.62 & \textbf{19.99\,\scriptsize$\pm$3.44} \\
PointGoal2 & 13.78\,\scriptsize$\pm$2.82 & 59.97\,\scriptsize$\pm$16.05 & 16.88\,\scriptsize$\pm$2.15 & 88.67\,\scriptsize$\pm$14.20 & 7.47\,\scriptsize$\pm$1.14 & 26.53\,\scriptsize$\pm$12.30 & 6.42\,\scriptsize$\pm$0.80 & \textbf{20.98\,\scriptsize$\pm$1.28} & 7.32\,\scriptsize$\pm$2.50 & 27.34\,\scriptsize$\pm$13.58 \\
 
\bottomrule
\end{tabular}}
\end{center}
\end{table}

\begin{table}[h]
\caption{Distinct-selection grid at $\alpha = 0.40$: the three most probable distinct certified selections per task, ordered by increasing realized contamination $\hat\alpha$, with the reward and mean cost of CDT and of CPL on the identical selection, matching Table~\ref{tab:a25draws} (three seeds each; bold marks mean cost within budget). The pipeline's deployed selection per task, whose position in the ordering varies, is the one reported in Table~\ref{tab:composea40}. PointGoal2's selections are drawn at pipeline seed 1 because seed 0 does not certify at this level, and a certified selection cannot be reported from a draw that refused; the refused draw's clone would itself have been safe (cost $21.0$ against budget 25), so the substitution is conservative rather than favorable.}
\label{tab:a40draws}
\begin{center}
\scriptsize
\setlength{\tabcolsep}{3pt}
\resizebox{\textwidth}{!}{%
\begin{tabular}{l rrrrr rrrrr rrrrr}
\toprule
& \multicolumn{5}{c}{selection 1} & \multicolumn{5}{c}{selection 2} & \multicolumn{5}{c}{selection 3} \\
\cmidrule(lr){2-6}\cmidrule(lr){7-11}\cmidrule(lr){12-16}
& & \multicolumn{2}{c}{CDT} & \multicolumn{2}{c}{CPL} & & \multicolumn{2}{c}{CDT} & \multicolumn{2}{c}{CPL} & & \multicolumn{2}{c}{CDT} & \multicolumn{2}{c}{CPL} \\
\cmidrule(lr){3-4}\cmidrule(lr){5-6}\cmidrule(lr){8-9}\cmidrule(lr){10-11}\cmidrule(lr){13-14}\cmidrule(lr){15-16}
Task & $\hat\alpha$ & $R$ & $C$ & $R$ & $C$ & $\hat\alpha$ & $R$ & $C$ & $R$ & $C$ & $\hat\alpha$ & $R$ & $C$ & $R$ & $C$ \\
\midrule
HalfCheetah & 0.19 & 2069\,\scriptsize$\pm$75 & \textbf{11\,\scriptsize$\pm$3} & 1618\,\scriptsize$\pm$174 & \textbf{0\,\scriptsize$\pm$0} & 0.26 & 1831\,\scriptsize$\pm$196 & \textbf{0\,\scriptsize$\pm$0} & 1400\,\scriptsize$\pm$81 & \textbf{1\,\scriptsize$\pm$0} & 0.34 & 2766\,\scriptsize$\pm$20 & \textbf{0\,\scriptsize$\pm$1} & 1736\,\scriptsize$\pm$37 & \textbf{0\,\scriptsize$\pm$0} \\
Walker2d & 0.25 & 2706\,\scriptsize$\pm$15 & \textbf{2\,\scriptsize$\pm$1} & 1680\,\scriptsize$\pm$301 & 50\,\scriptsize$\pm$15 & 0.33 & 2647\,\scriptsize$\pm$99 & \textbf{2\,\scriptsize$\pm$3} & 1531\,\scriptsize$\pm$151 & 38\,\scriptsize$\pm$8 & 0.39 & 2730\,\scriptsize$\pm$30 & \textbf{1\,\scriptsize$\pm$1} & 1574\,\scriptsize$\pm$577 & 34\,\scriptsize$\pm$8 \\
Ant & 0.35 & 2564\,\scriptsize$\pm$19 & \textbf{1\,\scriptsize$\pm$0} & 1316\,\scriptsize$\pm$109 & \textbf{0\,\scriptsize$\pm$0} & 0.36 & 2617\,\scriptsize$\pm$101 & \textbf{5\,\scriptsize$\pm$4} & 1364\,\scriptsize$\pm$163 & \textbf{0\,\scriptsize$\pm$0} & 0.40 & 2923\,\scriptsize$\pm$4 & \textbf{7\,\scriptsize$\pm$2} & 1226\,\scriptsize$\pm$217 & \textbf{0\,\scriptsize$\pm$0} \\
Swimmer & 0.23 & 156\,\scriptsize$\pm$1 & \textbf{9\,\scriptsize$\pm$5} & 32\,\scriptsize$\pm$24 & \textbf{0\,\scriptsize$\pm$0} & 0.33 & 162\,\scriptsize$\pm$1 & \textbf{18\,\scriptsize$\pm$11} & 38\,\scriptsize$\pm$12 & \textbf{2\,\scriptsize$\pm$3} & 0.45 & 162\,\scriptsize$\pm$1 & 20\,\scriptsize$\pm$1 & 48\,\scriptsize$\pm$8 & 56\,\scriptsize$\pm$52 \\
CarGoal1 & 0.29 & 14.6\,\scriptsize$\pm$3.6 & \textbf{17.47\,\scriptsize$\pm$3.80} & 11.8\,\scriptsize$\pm$0.9 & \textbf{13.77\,\scriptsize$\pm$1.26} & 0.32 & 15.5\,\scriptsize$\pm$1.9 & \textbf{18.87\,\scriptsize$\pm$6.90} & 11.7\,\scriptsize$\pm$0.3 & \textbf{10.80\,\scriptsize$\pm$1.34} & 0.36 & 23.2\,\scriptsize$\pm$2.2 & 36.23\,\scriptsize$\pm$5.05 & 13.4\,\scriptsize$\pm$0.5 & \textbf{14.11\,\scriptsize$\pm$1.92} \\
CarGoal2 & 0.28 & 7.5\,\scriptsize$\pm$3.7 & 35.50\,\scriptsize$\pm$14.65 & 5.0\,\scriptsize$\pm$0.3 & \textbf{23.93\,\scriptsize$\pm$2.07} & 0.31 & 10.5\,\scriptsize$\pm$1.1 & 47.87\,\scriptsize$\pm$10.70 & 5.5\,\scriptsize$\pm$0.7 & \textbf{23.79\,\scriptsize$\pm$4.05} & 0.35 & 10.5\,\scriptsize$\pm$1.8 & 49.57\,\scriptsize$\pm$16.10 & 6.0\,\scriptsize$\pm$0.3 & \textbf{23.21\,\scriptsize$\pm$3.04} \\
PointGoal1 & 0.28 & 19.3\,\scriptsize$\pm$1.4 & 28.73\,\scriptsize$\pm$9.55 & 10.6\,\scriptsize$\pm$1.4 & \textbf{17.64\,\scriptsize$\pm$3.40} & 0.30 & 20.2\,\scriptsize$\pm$1.3 & 53.50\,\scriptsize$\pm$3.15 & 13.9\,\scriptsize$\pm$3.0 & 27.70\,\scriptsize$\pm$9.49 & 0.32 & 21.0\,\scriptsize$\pm$0.6 & 45.73\,\scriptsize$\pm$14.00 & 12.6\,\scriptsize$\pm$1.6 & \textbf{19.99\,\scriptsize$\pm$3.44} \\
PointGoal2 & 0.26 & 9.0\,\scriptsize$\pm$1.8 & 27.00\,\scriptsize$\pm$14.65 & 7.6\,\scriptsize$\pm$1.6 & \textbf{24.53\,\scriptsize$\pm$3.63} & 0.32 & 7.5\,\scriptsize$\pm$1.4 & \textbf{14.20\,\scriptsize$\pm$6.75} & 7.3\,\scriptsize$\pm$2.5 & 27.34\,\scriptsize$\pm$13.58 & 0.37 & 9.2\,\scriptsize$\pm$0.7 & 33.30\,\scriptsize$\pm$11.25 & 7.2\,\scriptsize$\pm$3.7 & 28.60\,\scriptsize$\pm$11.47 \\
 
\bottomrule
\end{tabular}}
\end{center}
\end{table}

\paragraph{Contamination tolerance of the clone.} The composability tables (Tables~\ref{tab:compose}, \ref{tab:a25draws}, \ref{tab:composea40} and~\ref{tab:a40draws}) report external learners on certified data. The clone is our own consumer, not an external one, so its behavior across the same selections is a learner-sensitivity control rather than a composability result, reported separately in Table~\ref{tab:a40drawsbc}: it measures how much contamination an averaging learner absorbs, which is what sets the level the external learners need. The comparison is against the tight level, where the clone is within budget on every certifying task (Table~\ref{tab:compose}); what Table~\ref{tab:a40drawsbc} measures is how far that tolerance extends once the certified band is widened. That table repeats neither CDT nor CPL, which Table~\ref{tab:a40draws} already gives on the identical selections, and its clone differs from the calibrated column of Table~\ref{tab:composea40}, which is the pipeline end to end on the selection each seed produces rather than cloning on a fixed given selection.

\begin{table}[h]
\caption{Contamination tolerance of the clone: behavior cloning on the identical $\alpha = 0.40$ selections of Table~\ref{tab:a40draws}, ordered by increasing realized contamination $\hat\alpha$ (three seeds each; bold marks mean cost within budget).}
\label{tab:a40drawsbc}
\begin{center}
\scriptsize
\setlength{\tabcolsep}{4pt}
\begin{tabular}{l rrr rrr rrr}
\toprule
& \multicolumn{3}{c}{selection 1} & \multicolumn{3}{c}{selection 2} & \multicolumn{3}{c}{selection 3} \\
\cmidrule(lr){2-4}\cmidrule(lr){5-7}\cmidrule(lr){8-10}
Task & $\hat\alpha$ & $R$ & $C$ & $\hat\alpha$ & $R$ & $C$ & $\hat\alpha$ & $R$ & $C$ \\
\midrule
HalfCheetah & 0.19 & 2118\,\scriptsize$\pm$23 & \textbf{0\,\scriptsize$\pm$0} & 0.26 & 2153\,\scriptsize$\pm$215 & \textbf{15\,\scriptsize$\pm$21} & 0.34 & 2291\,\scriptsize$\pm$55 & \textbf{5\,\scriptsize$\pm$7} \\
Walker2d & 0.25 & 2696\,\scriptsize$\pm$10 & \textbf{1\,\scriptsize$\pm$1} & 0.33 & 2717\,\scriptsize$\pm$7 & \textbf{1\,\scriptsize$\pm$0} & 0.39 & 2656\,\scriptsize$\pm$46 & \textbf{9\,\scriptsize$\pm$12} \\
Ant & 0.35 & 2423\,\scriptsize$\pm$9 & \textbf{1\,\scriptsize$\pm$0} & 0.36 & 2568\,\scriptsize$\pm$5 & \textbf{2\,\scriptsize$\pm$1} & 0.40 & 2811\,\scriptsize$\pm$21 & \textbf{5\,\scriptsize$\pm$1} \\
Swimmer & 0.23 & 108\,\scriptsize$\pm$8 & \textbf{12\,\scriptsize$\pm$6} & 0.33 & 128\,\scriptsize$\pm$21 & 96\,\scriptsize$\pm$111 & 0.45 & 125\,\scriptsize$\pm$10 & 66\,\scriptsize$\pm$61 \\
CarGoal1 & 0.29 & 9.4\,\scriptsize$\pm$0.7 & \textbf{9.54\,\scriptsize$\pm$0.57} & 0.32 & 9.8\,\scriptsize$\pm$0.7 & \textbf{8.73\,\scriptsize$\pm$1.47} & 0.36 & 10.5\,\scriptsize$\pm$0.5 & \textbf{8.71\,\scriptsize$\pm$1.99} \\
CarGoal2 & 0.28 & 3.6\,\scriptsize$\pm$0.7 & \textbf{19.69\,\scriptsize$\pm$3.77} & 0.31 & 3.8\,\scriptsize$\pm$0.1 & \textbf{16.00\,\scriptsize$\pm$2.30} & 0.35 & 3.9\,\scriptsize$\pm$0.3 & \textbf{18.75\,\scriptsize$\pm$2.47} \\
PointGoal1 & 0.28 & 9.4\,\scriptsize$\pm$0.7 & \textbf{11.10\,\scriptsize$\pm$2.98} & 0.30 & 10.5\,\scriptsize$\pm$0.4 & \textbf{13.83\,\scriptsize$\pm$0.43} & 0.32 & 11.3\,\scriptsize$\pm$0.8 & \textbf{16.10\,\scriptsize$\pm$0.50} \\
PointGoal2 & 0.26 & 5.7\,\scriptsize$\pm$0.4 & \textbf{15.63\,\scriptsize$\pm$1.73} & 0.32 & 6.0\,\scriptsize$\pm$0.2 & \textbf{16.37\,\scriptsize$\pm$2.98} & 0.37 & 7.1\,\scriptsize$\pm$1.0 & \textbf{21.96\,\scriptsize$\pm$1.08} \\
 
\bottomrule
\end{tabular}
\end{center}
\end{table}

\paragraph{The composition-to-cost mapping, measured.} Figure~\ref{fig:contamcost} plots clone cost over budget against realized contamination for 235 archived selection-clone pairs spanning certified selections, ground-truth and control selections, and both V-filter fractions across the fifteen main tasks, with a single archived CarRun pair included. Task-demeaned, contamination alone explains $R^2 = 0.41$ of clone cost, so the mapping the paper calls empirically favorable is a measured, monotone tendency rather than an assertion; the remaining variance is carried partly by the selection's score margin (adding margin statistics raises $R^2$ to $0.49$) and partly by learner effects the composability grids characterize.

\begin{figure}[h]
\begin{center}
\includegraphics[width=0.6\linewidth]{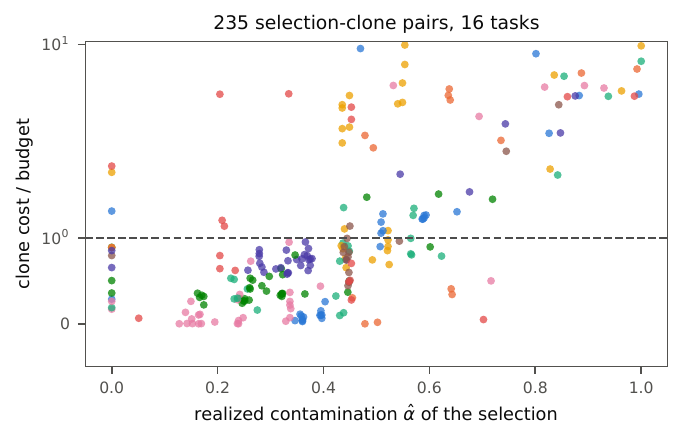}
\end{center}
\caption{Clone cost against realized selection contamination $\hat\alpha$, 235 selection-clone pairs, all but one from the fifteen DSRL tasks (colors: tasks; dashed line: budget; symlog cost axis).}
\label{fig:contamcost}
\end{figure}

\paragraph{The return-weighted operator across intensities.} Table~\ref{tab:operator} reports certified return-weighted cloning (Section~\ref{sec:gating}) on the distinct $\alpha = 0.25$ selections of Table~\ref{tab:a25draws}, at clips $\kappa \in \{1, 2, 3\}$ and the hard top-half rule, plus the CarRun certified selection from the transfer suite. The grid substantiates the two claims made in Section~\ref{sec:results}. Of the 40 within-specification cells, 39 are safe at every intensity, with reward on HalfCheetah rising from $\kappa = 1$ to $\kappa = 2$ on all three of its selections; the one exception is CarRun's hard top-half rule, which Appendix~\ref{app:bullet} reads as locating the boundary at truncation rather than at return-seeking. Of the twelve out-of-specification cells, 11 are safe: the two milder such selections ($0.251$ and $0.263$) remain safe in every arm, and the most contaminated one (HalfCheetah at $0.337$) is safe at $\kappa = 1$, at $\kappa = 2$ and under the hard rule, violating only at $\kappa = 3$ ($25.1$ against budget 20), with the strongest setting mining the contaminated return tail exactly as the gating rationale predicts. The failure therefore requires both a contaminated selection and the most aggressive intensity; the certificate removes the first factor and thereby makes the second far safer to tune.

\begin{table}[h]
\caption{Certified return-weighted cloning across intensities on the distinct $\alpha = 0.25$ certified selections and the CarRun echo selection (three seeds, mean $\pm$ 95\% CI; bold marks mean cost within budget; $\hat\alpha$ is the selection's realized contamination, computed from the selection's own membership as in Table~\ref{tab:a25draws}). The selections are those of Table~\ref{tab:a25draws} on HalfCheetah, Walker2d and CarGoal1. On PointGoal1, which lies outside the regenerated cohort, the operator sweep predates that grid and was run on the $q85$, $q70$ and $q65$ thresholds, of which the first is shared; no operator run exists at the grid's third threshold.}
\label{tab:operator}
\begin{center}
\scriptsize
\setlength{\tabcolsep}{3pt}
\resizebox{\textwidth}{!}{%
\begin{tabular}{l c rr rr rr rr}
\toprule
& & \multicolumn{2}{c}{$\kappa = 1$} & \multicolumn{2}{c}{$\kappa = 2$} & \multicolumn{2}{c}{$\kappa = 3$} & \multicolumn{2}{c}{Top half} \\
\cmidrule(lr){3-4}\cmidrule(lr){5-6}\cmidrule(lr){7-8}\cmidrule(lr){9-10}
Task & $\hat\alpha$ & $R$ & $C$ & $R$ & $C$ & $R$ & $C$ & $R$ & $C$ \\
\midrule
HalfCheetah & 0.192 & 2178\,\scriptsize$\pm$122 & \textbf{0.1\,\scriptsize$\pm$0.2} & 2271\,\scriptsize$\pm$21 & \textbf{0.0\,\scriptsize$\pm$0.0} & 2266\,\scriptsize$\pm$38 & \textbf{0.0\,\scriptsize$\pm$0.0} & 2230\,\scriptsize$\pm$31 & \textbf{0.2\,\scriptsize$\pm$0.2} \\
HalfCheetah & 0.263 & 2370\,\scriptsize$\pm$81 & \textbf{0.8\,\scriptsize$\pm$0.8} & 2396\,\scriptsize$\pm$10 & \textbf{0.6\,\scriptsize$\pm$0.8} & 2454\,\scriptsize$\pm$51 & \textbf{2.0\,\scriptsize$\pm$2.5} & 2399\,\scriptsize$\pm$17 & \textbf{2.5\,\scriptsize$\pm$1.9} \\
HalfCheetah & 0.337 & 2497\,\scriptsize$\pm$30 & \textbf{7.2\,\scriptsize$\pm$5.1} & 2545\,\scriptsize$\pm$20 & \textbf{16.1\,\scriptsize$\pm$11.8} & 2509\,\scriptsize$\pm$77 & 25.1\,\scriptsize$\pm$15.3 & 2566\,\scriptsize$\pm$37 & \textbf{7.2\,\scriptsize$\pm$1.3} \\
Walker2d & 0.139 & 2698\,\scriptsize$\pm$6 & \textbf{0.0\,\scriptsize$\pm$0.1} & 2682\,\scriptsize$\pm$10 & \textbf{1.3\,\scriptsize$\pm$1.9} & 2683\,\scriptsize$\pm$32 & \textbf{0.3\,\scriptsize$\pm$0.2} & 2558\,\scriptsize$\pm$203 & \textbf{18.7\,\scriptsize$\pm$27.5} \\
Walker2d & 0.177 & 2695\,\scriptsize$\pm$11 & \textbf{1.1\,\scriptsize$\pm$1.0} & 2702\,\scriptsize$\pm$24 & \textbf{0.8\,\scriptsize$\pm$0.8} & 2719\,\scriptsize$\pm$10 & \textbf{0.2\,\scriptsize$\pm$0.2} & 2721\,\scriptsize$\pm$21 & \textbf{1.3\,\scriptsize$\pm$1.3} \\
Walker2d & 0.248 & 2720\,\scriptsize$\pm$20 & \textbf{0.9\,\scriptsize$\pm$1.0} & 2741\,\scriptsize$\pm$20 & \textbf{2.6\,\scriptsize$\pm$2.9} & 2711\,\scriptsize$\pm$18 & \textbf{0.2\,\scriptsize$\pm$0.1} & 2712\,\scriptsize$\pm$34 & \textbf{1.4\,\scriptsize$\pm$1.5} \\
CarGoal1 & 0.175 & 7.4\,\scriptsize$\pm$0.6 & \textbf{7.8\,\scriptsize$\pm$0.8} & 8.7\,\scriptsize$\pm$0.3 & \textbf{7.4\,\scriptsize$\pm$1.0} & 7.7\,\scriptsize$\pm$0.3 & \textbf{6.9\,\scriptsize$\pm$1.9} & 8.1\,\scriptsize$\pm$0.3 & \textbf{6.6\,\scriptsize$\pm$1.3} \\
CarGoal1 & 0.206 & 8.8\,\scriptsize$\pm$0.6 & \textbf{6.1\,\scriptsize$\pm$1.1} & 8.5\,\scriptsize$\pm$1.0 & \textbf{7.2\,\scriptsize$\pm$0.9} & 7.9\,\scriptsize$\pm$0.4 & \textbf{6.1\,\scriptsize$\pm$0.5} & 9.0\,\scriptsize$\pm$1.0 & \textbf{8.1\,\scriptsize$\pm$1.0} \\
CarGoal1 & 0.251 & 9.1\,\scriptsize$\pm$0.7 & \textbf{8.4\,\scriptsize$\pm$0.6} & 8.8\,\scriptsize$\pm$0.6 & \textbf{8.1\,\scriptsize$\pm$2.3} & 8.7\,\scriptsize$\pm$0.6 & \textbf{8.2\,\scriptsize$\pm$1.6} & 9.3\,\scriptsize$\pm$0.3 & \textbf{9.4\,\scriptsize$\pm$0.4} \\
PointGoal1 & 0.122 & 4.8\,\scriptsize$\pm$0.3 & \textbf{6.0\,\scriptsize$\pm$1.7} & 5.3\,\scriptsize$\pm$0.7 & \textbf{4.8\,\scriptsize$\pm$2.0} & 4.8\,\scriptsize$\pm$0.2 & \textbf{4.0\,\scriptsize$\pm$0.8} & 5.8\,\scriptsize$\pm$0.5 & \textbf{5.1\,\scriptsize$\pm$1.3} \\
PointGoal1 & 0.183 & 8.5\,\scriptsize$\pm$0.2 & \textbf{10.4\,\scriptsize$\pm$0.5} & 7.8\,\scriptsize$\pm$0.4 & \textbf{9.0\,\scriptsize$\pm$1.6} & 8.1\,\scriptsize$\pm$0.6 & \textbf{9.5\,\scriptsize$\pm$1.1} & 8.1\,\scriptsize$\pm$0.5 & \textbf{9.3\,\scriptsize$\pm$1.3} \\
PointGoal1 & 0.208 & 7.3\,\scriptsize$\pm$0.1 & \textbf{10.9\,\scriptsize$\pm$1.6} & 7.6\,\scriptsize$\pm$0.2 & \textbf{8.8\,\scriptsize$\pm$0.3} & 8.1\,\scriptsize$\pm$0.5 & \textbf{10.2\,\scriptsize$\pm$1.4} & 8.9\,\scriptsize$\pm$0.4 & \textbf{10.7\,\scriptsize$\pm$1.6} \\
CarRun (echo) & 0.051 & 480\,\scriptsize$\pm$58 & \textbf{3.7\,\scriptsize$\pm$4.4} & 513\,\scriptsize$\pm$36 & \textbf{0.7\,\scriptsize$\pm$0.5} & 510\,\scriptsize$\pm$37 & \textbf{0.1\,\scriptsize$\pm$0.1} & 528\,\scriptsize$\pm$26 & 10.7\,\scriptsize$\pm$14.5 \\
 
\bottomrule
\end{tabular}}
\end{center}
\end{table}

\section{CDT cost-target sweep}
\label{app:cdtsweep}
CDT conditions on a target cost at deployment, so reporting it at a single target risks understating it. We retrain CDT on full data at each task's budget (three seeds, identical protocol) and evaluate every checkpoint across cost targets down to a fifth of the budget, 100 episodes per cell (Tables~\ref{tab:cdttargets} and~\ref{tab:cdttargetsb}).

The sweep separates two regimes. Where the pool admits it, conditioning below budget delivers real safety at high reward. All five velocity tasks, both Circle tasks, and, at target 2, all five BulletSafetyGym tasks, in each case with reward above every filter in the study. Where safety requires selectivity, the knob does not work at all: on the eight Goal and Button tasks CDT violates at every target, and realized cost is nearly flat in the target, moving from $40.9$ to $37.8$ on CarGoal1 and sitting near $100$ on PointButton1 across the whole range against budget 25. Figure~\ref{fig:paretotargets} shows the two families side by side. The entire CDT curve lies right of the budget on the Goal tasks, whichever target is chosen, while our $\alpha$ sweep traverses a frontier left of it. The gap between a conditioning target and realized cost is itself the diagnosis: CDT fits a return-cost frontier that these datasets do not contain, so asking for a lower cost cannot produce trajectories the data never demonstrated. This is the same reading the granularity analysis gives from the other direction, and it is why retraining CDT on a certified selection succeeds on exactly the tasks where its own conditioning fails (Section~\ref{sec:results}).

\begin{figure}[h]
\begin{center}
\includegraphics[width=0.98\linewidth]{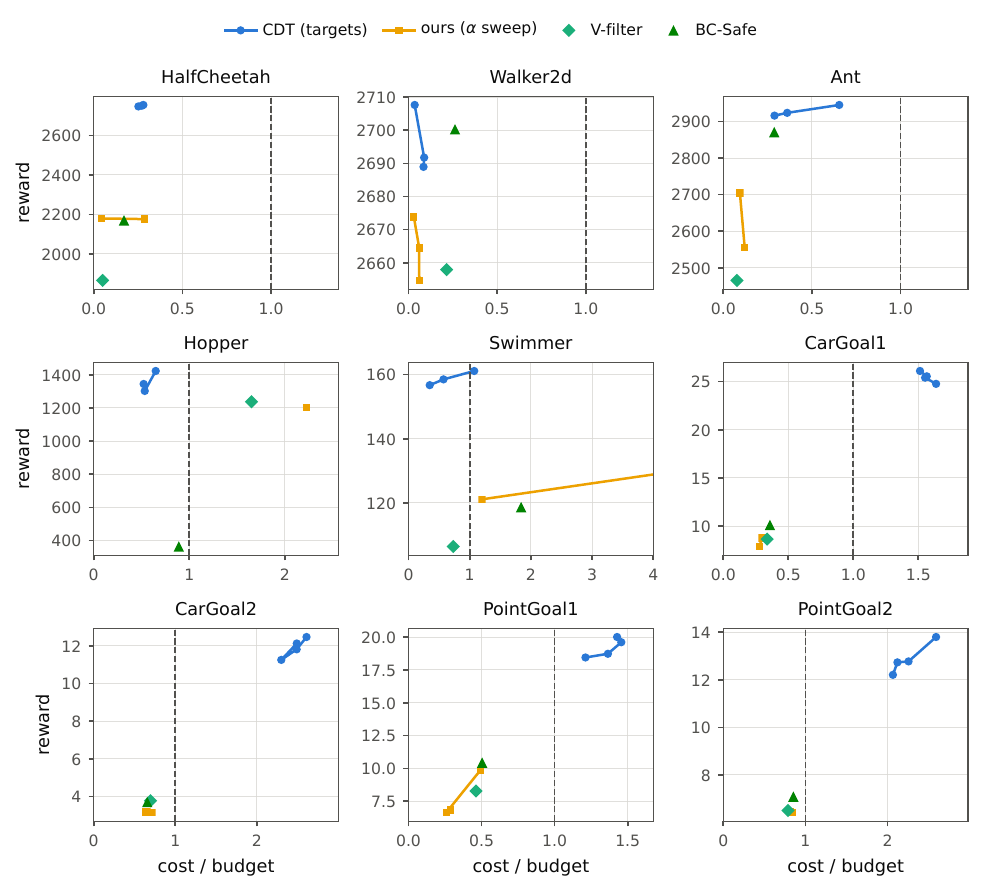}
\end{center}
\caption{Reward against cost over budget on the nine analysis tasks (dashed line: the budget). Blue: CDT on full data across its cost-target sweep, one point per target (values in Table~\ref{tab:cdttargets}). Orange: our calibrated filter across $\alpha \in \{0.10, 0.25, 0.40\}$, tighter $\alpha$ moving down-left. Teal: the V-filter at the calibration-estimated fraction. Green: BC-Safe with full labels. On the Goal tasks the entire CDT curve sits right of the budget line, so no target choice reaches the constraint, while the $\alpha$ sweep
traverses a genuine frontier left of it, within budget on every navigation task and at or inside BC-Safe's cost on all of them but CarGoal2 at $\alpha = 0.40$.}
\label{fig:paretotargets}
\end{figure}

\begin{table}[h]
\caption{CDT on full data across cost targets, fifteen DSRL tasks (reward/cost, three seeds, 100 episodes per cell; bold marks mean cost within the task budget of 20 or 25). The final column is the budget-matched target; velocity tasks have no target 15.}
\label{tab:cdttargets}
\begin{center}
\scriptsize
\setlength{\tabcolsep}{4pt}
\begin{tabular}{l rrrr}
\toprule
Task & target 5 & target 10 & target 15 & target = budget \\
\midrule
HalfCheetah & \textbf{2746/5.0} & \textbf{2750/5.4} & -- & \textbf{2754/5.6} \\
Walker2d & \textbf{2708/0.7} & \textbf{2692/1.8} & -- & \textbf{2689/1.7} \\
Ant & \textbf{2916/5.8} & \textbf{2924/7.2} & -- & \textbf{2945/13.1} \\
Hopper & \textbf{1344/10.5} & \textbf{1302/10.7} & -- & \textbf{1423/13.0} \\
Swimmer & \textbf{157/6.9} & \textbf{158/11.4} & -- & 161/21.5 \\
CarGoal1 & 24.76/40.9 & 25.40/38.8 & 25.54/39.1 & 26.09/37.8 \\
CarGoal2 & 12.12/62.2 & 11.25/57.4 & 11.81/62.2 & 12.46/65.2 \\
PointGoal1 & 18.45/30.3 & 18.73/34.1 & 19.61/36.4 & 20.01/35.7 \\
PointGoal2 & 12.21/51.7 & 12.73/53.0 & 12.78/56.4 & 13.80/64.8 \\
PointButton1 & 20.93/100.4 & 21.75/101.0 & 22.11/104.4 & 21.49/101.2 \\
PointButton2 & 19.22/103.3 & 19.21/101.4 & 18.93/97.6 & 18.65/97.0 \\
CarButton1 & 8.45/88.2 & 8.66/90.6 & 8.94/94.8 & 9.84/96.0 \\
CarButton2 & 7.07/114.9 & 6.83/103.8 & 8.03/100.2 & 7.52/112.9 \\
PointCircle1 & \textbf{42.0/4.2} & \textbf{42.2/5.4} & \textbf{42.5/5.1} & \textbf{42.7/13.2} \\
PointCircle2 & \textbf{40.2/10.2} & \textbf{40.1/14.9} & \textbf{40.4/18.8} & 40.9/32.2 \\
 
\bottomrule
\end{tabular}
\end{center}
\end{table}

\begin{table}[h]
\caption{CDT on full data across cost targets, five BulletSafetyGym tasks (reward/cost, three seeds; bold marks mean cost within budget 10).}
\label{tab:cdttargetsb}
\begin{center}
\scriptsize
\setlength{\tabcolsep}{4pt}
\begin{tabular}{l rrr}
\toprule
Task & target 2 & target 5 & target 10 \\
\midrule
BallRun & \textbf{446.9/7.4} & 447.7/13.3 & 439.1/14.3 \\
BallCircle & \textbf{598.2/9.4} & 611.4/11.0 & 632.2/14.1 \\
CarCircle & \textbf{376.8/6.6} & \textbf{376.8/7.3} & \textbf{377.8/8.9} \\
CarRun & \textbf{570.1/7.7} & \textbf{570.5/8.9} & 571.0/11.7 \\
DroneRun & \textbf{399.0/0.1} & \textbf{399.7/0.7} & \textbf{401.7/2.8} \\
 
\bottomrule
\end{tabular}
\end{center}
\end{table}

\section{Transfer to BulletSafetyGym}
\label{app:bullet}

Table~\ref{tab:bullet} reports the pipeline on five BulletSafetyGym tasks (BallRun, BallCircle, CarCircle, CarRun, DroneRun; 651 to 1990 trajectories; episode lengths 100 to 300; DroneRun is the one dataset in the study with true terminal states) at budget 10, the strictest of the DSRL protocol's three evaluation thresholds (10, 20, 40), with every hyperparameter unchanged from the main study and no per-task tuning. The slice is hard: ground-truth safe fractions are 9.0, 9.7, 11.4, 46.2, and 28.0 percent, and the 200 calibration labels are 10 to 31 percent of these smaller pools, extending the main study's 5-to-22-percent range. Short episodes are a different regime rather than a harder one for segment supervision, since a length-30 segment covers up to 30 percent of an episode; episode lengths are fixed except on DroneRun, where the score is essentially uncorrelated with length ($\rho = 0.13$). The score transfers. Its Spearman correlation with episodic cost is $-0.81$ to $-0.94$ on every task, and it is doing the work. Random selection at the identical matched fractions is unsafe on four of five tasks (costs 32 to 79), safe only on CarRun's 46-percent-safe pool. The strongest full-label baseline behaves differently here than on the main suite: full-data CDT at the budget-matched target 10 is unsafe on four of five tasks (costs 11.1 to 14.5), but conditioning its target down to 2 makes it safe on all five (costs 0.1 to 9.4) at reward far above every filter (Appendix~\ref{app:cdtsweep}). Unlike the Goal and Button tasks, where the same sweep leaves cost flat and far above budget, these pools are conditioning-tractable, so the transfer suite tests the pipeline's portability rather than exhibiting a baseline failure. Cloning everything is unsafe on four of five tasks (costs 24 to 65); the calibrated procedure satisfies the budget on all five, one of them (DroneRun) by the reward collapse just described, and the estimated-fraction V-filter on all five as well. Certification concentrates where safe mass supports it. CarRun certifies on five of five seeds, DroneRun on one, and the remaining three refuse on every seed, with the 2000-draw validation holding in every cell (worst unconditional false-certification rate $0.06$, 95 percent Clopper-Pearson $0.050$ to $0.071$). The operating curve transfers as well (Figure~\ref{fig:alphacurve}): resampling the calibration at looser levels, certification rates rise in safe-mass order, DroneRun from $0.20$ at $\alpha = 0.25$ to $0.87$ at $0.40$, BallRun from $0.00$ to $0.06$ over the same range, while CarRun certifies almost always ($0.995$ at $\alpha = 0.25$) and BallCircle and CarCircle never, so the $\alpha$ lever behaves on the new domain exactly as on the main suite.

One cell carries the section's real limitation. On DroneRun every filter, ours and the full-label one, drives cost to zero, but reward falls from $383$ under unfiltered cloning to $160$ for BC-Safe and $42$ for the calibrated filter. Safety here is bought by collapse rather than by selection, and the pipeline has no mechanism for noticing that its selection has become behaviorally degenerate while still satisfying the budget, a limitation this suite exposes for the first time. It is not caught by the machinery we have: a pre-registered check ran the certification machinery on the trivial selection $S = \Doff$, in both its composition and mean-cost forms, and it fires nowhere, because DroneRun's pool has mean cost 50.5 against budget 10 and 72 percent unsafe trajectories while its clone is safe by policy averaging. Neither composition-level test can detect that filtering is unwarranted; the detection appears to require a policy-level signal, which is the composition-to-policy gap of Section~\ref{sec:guarantee-validation} in its converse direction.

\paragraph{Composability is regime-dependent.} A pre-registered rule ran the composability echo on CarRun once full-data CDT violated there. The original certified selection for this echo was lost, so it was re-derived by the same certification procedure from the current value ensemble, which yields a purer and smaller selection than the one first used (realized contamination $0.051$ over 98 trajectories, $15$ percent of the pool, against $0.138$ over roughly half). On it, behavior cloning is safe at cost $0.65$ across three seeds. CDT was unsafe on all three seeds of the original selection ($11.4$ to $19.6$ against budget 10). On the regenerated selection it violates on one seed of three ($0.0$, $7.0$ and $11.1$), so the purer selection helps without taming it (Table~\ref{tab:operator}). Unlike the main suite, where certified selections tamed CDT in all 36 runs, curation is not sufficient here, and the reason is diagnostic: on CarRun even cloning the whole pool is safe (BC-All at cost $0.0$), so CDT's violation is caused by its operator rather than by the data, and curation cannot remove a violation the data did not cause. The certified return-weighted operator completes the diagnosis. On the same selection the three return-weighted intensities are safe on every seed (reward 480 to 513, cost at most $3.7$, Table~\ref{tab:operator}), so return-seeking as such is not what violates on CarRun; extrapolative return conditioning is. The hard top-half rule is the exception, violating on one seed of three (cost 30.0), which locates the boundary at truncation rather than at return-seeking: resampling preserves the selection's composition where hard truncation to the 49 highest-return trajectories does not. Across the five composability cases the pattern is consistent. Certified curation cures the full-label learner where the data drives the violation, on both
navigation tasks, where full-data CDT runs at nearly $1.5$ times budget; on HalfCheetah and Walker2d, where
full-data CDT is already within budget, there is nothing to cure and curation keeps it there; and it cannot help
on CarRun, where the pool is benign (BC-All at cost $0.0$) and the operator alone violates, the
boundary the granularity analysis predicts.

\begin{table}[h]
\caption{BulletSafetyGym transfer at budget 10 (mean $\pm$ 95\% CI where multiple seeds; bold marks mean cost within budget; BC-All deterministic; CDT trained on the full data and evaluated at the budget-matched target of 10, with its tighter targets in Table~\ref{tab:cdttargetsb}). CarRun certifies on five of five seeds and DroneRun on one of five; the other three refuse on every seed.}
\label{tab:bullet}
\begin{center}
\scriptsize
\setlength{\tabcolsep}{3pt}
\resizebox{\textwidth}{!}{%
\begin{tabular}{l rr rr rr rr rr}
\toprule
& \multicolumn{2}{c}{BC-All} & \multicolumn{2}{c}{BC-Safe} & \multicolumn{2}{c}{V-filter} & \multicolumn{2}{c}{Calibrated} & \multicolumn{2}{c}{CDT} \\
\cmidrule(lr){2-3}\cmidrule(lr){4-5}\cmidrule(lr){6-7}\cmidrule(lr){8-9}\cmidrule(lr){10-11}
Task & $R$ & $C$ & $R$ & $C$ & $R$ & $C$ & $R$ & $C$ & $R$ & $C$ \\
\midrule
BallRun & 825 & 52.4 & 276\,\scriptsize$\pm$66 & \textbf{7.6\,\scriptsize$\pm$8.5} & 287\,\scriptsize$\pm$65 & \textbf{5.3\,\scriptsize$\pm$6.8} & 329\,\scriptsize$\pm$48 & \textbf{7.0\,\scriptsize$\pm$9.6} & 439\,\scriptsize$\pm$8 & 14.5\,\scriptsize$\pm$7.4 \\
BallCircle & 601 & 24.3 & 455\,\scriptsize$\pm$21 & \textbf{5.3\,\scriptsize$\pm$2.7} & 298\,\scriptsize$\pm$13 & \textbf{0.0\,\scriptsize$\pm$0.0} & 307\,\scriptsize$\pm$14 & \textbf{0.2\,\scriptsize$\pm$0.3} & 632\,\scriptsize$\pm$18 & 13.2\,\scriptsize$\pm$1.3 \\
CarCircle & 364 & 64.6 & 210\,\scriptsize$\pm$32 & \textbf{7.2\,\scriptsize$\pm$6.7} & 121\,\scriptsize$\pm$17 & \textbf{2.4\,\scriptsize$\pm$1.4} & 152\,\scriptsize$\pm$17 & \textbf{2.9\,\scriptsize$\pm$1.6} & 384\,\scriptsize$\pm$1 & 11.1\,\scriptsize$\pm$1.5 \\
CarRun & 565 & \textbf{0.7} & 559\,\scriptsize$\pm$3 & \textbf{0.1\,\scriptsize$\pm$0.2} & 525\,\scriptsize$\pm$11 & \textbf{7.7\,\scriptsize$\pm$7.6} & 472\,\scriptsize$\pm$40 & \textbf{5.5\,\scriptsize$\pm$8.2} & 570\,\scriptsize$\pm$1 & 11.8\,\scriptsize$\pm$3.3 \\
DroneRun & 383 & 36.5 & 160\,\scriptsize$\pm$103 & \textbf{0.0\,\scriptsize$\pm$0.0} & 37\,\scriptsize$\pm$39 & \textbf{0.0\,\scriptsize$\pm$0.0} & 42\,\scriptsize$\pm$40 & \textbf{0.0\,\scriptsize$\pm$0.0} & 402\,\scriptsize$\pm$2 & \textbf{2.9\,\scriptsize$\pm$1.2} \\
 
\bottomrule
\end{tabular}}
\end{center}
\end{table}

\section{Properties of the certificate}
\label{app:certprops}

\begin{figure}[t]
\begin{center}
\includegraphics[width=0.8\linewidth]{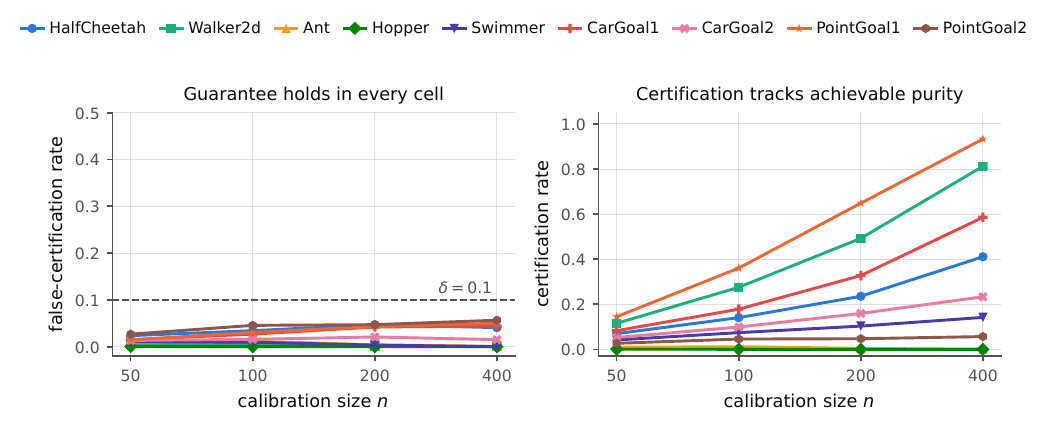}
\end{center}
\caption{Guarantee validation over 2000 calibration draws per cell; both panels draw the nine analysis tasks and plot means over the three value-ensemble seeds. Left: unconditional false-certification rate against the level $\delta = 0.1$ (dashed). Every individual task, seed and calibration-size cell is below $\delta$, the worst $0.078$, which Appendix~\ref{app:certprops} reports per cell rather than as the means drawn here. Right: certification rate as a function of calibration size $n$; tasks whose selections cannot meet $\alpha$ certify rarely, and the false-certification rate Proposition~\ref{prop:guarantee} bounds stays far below $\delta$ for them (at $n = 200$: Ant $0.004$, PointGoal2 $0.047$, Hopper $0.000$).}
\label{fig:coverage}
\end{figure}

\paragraph{Conditional false-certification.} Table~\ref{tab:condviol} reports, per task at $n = 200$ and $\alpha = 0.25$ from the 2000-draw resample, the certification rate, the unconditional false-certification rate the theorem bounds, and the conditional violation rate a certificate-holder faces. The unconditional rate is below $\delta$ everywhere, and over all tasks, seeds and calibration sizes the worst cell is CarGoal1 at seed 1 and $n = 400$, at $0.078$ with Clopper-Pearson interval $0.067$ to $0.091$, so the bound holds across the interval and not only at the point estimate; the conditional rate is governed by the certification rate and ranges from $0.07$ (PointGoal1) to near one on tasks certifying in under five percent of draws, where the handful of certificates that do fire are disproportionately the optimistic draws. The practical reading is in Section~\ref{sec:guarantee-validation}. Certification rate is itself a trust signal.

\begin{table}[h]
\caption{Certification rate, unconditional false-certification, and conditional violation rate at $n = 200$, $\alpha = 0.25$ (2000 draws per seed, seed means; final column: certified draws pooled over seeds).}
\label{tab:condviol}
\begin{center}
\small
\begin{tabular}{l rrrr}
\toprule
Task & cert.\ rate & $\Pr[\text{cert} \wedge \text{viol}]$ & $\Pr[\text{viol} \mid \text{cert}]$ & certified draws \\
\midrule
HalfCheetah & 0.24 & 0.047 & 0.20 & 1413 \\
Walker2d & 0.49 & 0.002 & 0.00 & 2955 \\
Ant & 0.00 & 0.004 & 1.00 & 25 \\
Hopper & 0.00 & 0.000 & -- & 0 \\
Swimmer & 0.10 & 0.004 & 0.03 & 619 \\
CarGoal1 & 0.33 & 0.042 & 0.13 & 1964 \\
CarGoal2 & 0.16 & 0.021 & 0.13 & 956 \\
PointGoal1 & 0.65 & 0.043 & 0.07 & 3891 \\
PointGoal2 & 0.05 & 0.047 & 1.00 & 283 \\
BallRun & 0.00 & 0.000 & 1.00 & 2 \\
BallCircle & 0.00 & 0.000 & -- & 0 \\
CarCircle & 0.00 & 0.000 & -- & 0 \\
CarRun & 1.00 & 0.003 & 0.00 & 5981 \\
DroneRun & 0.17 & 0.027 & 0.16 & 1029 \\
 
\bottomrule
\end{tabular}
\end{center}
\end{table}

\paragraph{The certificate as a triple.} The rate in Table~\ref{tab:condviol} is measured by resampling, which a deployer holding one calibration sample cannot do. Proposition~\ref{prop:rate} makes it estimable instead: the walk certifies exactly when it rejects at $\lambda_1$, so the rate is a hypergeometric function of $U_1$, and the $(m_1, k_1)$ counts that fell in $S(\lambda_1)$ give $U_1$ a posterior (Jeffreys $\mathrm{Beta}(k + \tfrac12, m - k + \tfrac12)$ on $u_1$, $200$ draws). Its mean is the estimated rate $\hat r$, and its lower decile $r_{\mathrm{lo}}$ converts the unconditional guarantee into a deployer-facing bound, since $\Pr[\text{certify and out of spec}] \le \delta$ gives 
\[
  \Pr[\text{out of spec} \mid \text{certify}] \;\le\; \delta / \Pr[\text{certify}].
\]
The procedure therefore returns a threshold, $\hat r$, and $\min(1, \delta / r_{\mathrm{lo}})$, all three from the labels already spent. Table~\ref{tab:certtriple} validates them against the resampled truth at $n = 200$ over 500 draws per task and seed. The estimate is close where certification is common and biased upward where it is rare (Walker2d $0.51$ against a realized $0.52$, PointGoal1 $0.59$ against $0.65$, but Ant $0.06$ against $0.01$), because a posterior over $U_1$ cannot concentrate on a pool whose top quantile almost never passes. That bias is exactly where the bound fails: across the $40$ cells with a defined conditional rate it covers the realized value in $25$, and it covers in \emph{all} $24$ cells that certify in at least a tenth of draws, failing only on cells certifying in at most $7$ percent, where $\hat r$ overstates the rate by three- to thirteenfold and $\delta / r_{\mathrm{lo}}$ is correspondingly too small. The honest reading is that the triple is informative in the regime where a practitioner would act on a certificate at all, and that the certification rate is the signal telling them whether they are in it. The last column of Table~\ref{tab:certtriple} shows why the obvious alternative is worse: a plug-in posterior for the returned selection's out-of-spec probability averages $0.04$ against a realized $0.43$, because a certificate fires precisely when the sample looks cleaner than the pool, so the plug-in is optimistic under selection and is reported only to be ruled out.

\begin{table}[h]
\caption{The certificate as a triple, validated at $n = 200$, $\alpha = 0.25$ over 500 fresh calibration draws per task and value-ensemble seed (seed means). Columns: realized certification rate; the estimated rate $\hat r$ the procedure returns; the realized conditional violation rate among certified draws; the returned bound $\min(1, \delta / r_{\mathrm{lo}})$ on it; the plug-in posterior reported only to show it is optimistic under selection; and the number of certified draws the conditional columns are computed over. Dashes mark tasks that never certify, where no conditional quantity is defined.}
\label{tab:certtriple}
\begin{center}
\small
\begin{tabular}{l rrrrrr}
\toprule
Task & cert.\ rate & $\hat r$ & $\Pr[\text{viol} \mid \text{cert}]$ & bound & plug-in & $n_{\text{cert}}$ \\
\midrule
HalfCheetah & 0.25 & 0.34 & 0.21 & 0.52 & 0.03 & 376 \\
Walker2d & 0.52 & 0.51 & 0.00 & 0.46 & 0.03 & 777 \\
Ant & 0.01 & 0.06 & 1.00 & 0.73 & 0.05 & 8 \\
Hopper & 0.00 & 0.00 & -- & -- & -- & 0 \\
Swimmer & 0.10 & 0.23 & 0.05 & 0.57 & 0.04 & 147 \\
CarGoal1 & 0.36 & 0.42 & 0.13 & 0.51 & 0.04 & 546 \\
CarGoal2 & 0.15 & 0.27 & 0.14 & 0.55 & 0.04 & 220 \\
PointGoal1 & 0.65 & 0.59 & 0.06 & 0.41 & 0.05 & 982 \\
PointGoal2 & 0.05 & 0.16 & 1.00 & 0.65 & 0.04 & 68 \\
PointButton1 & 0.01 & 0.09 & 1.00 & 0.62 & 0.05 & 19 \\
PointButton2 & 0.00 & 0.01 & -- & -- & -- & 0 \\
CarButton1 & 0.00 & 0.02 & -- & -- & -- & 0 \\
CarButton2 & 0.00 & 0.03 & 1.00 & 0.40 & 0.04 & 3 \\
PointCircle1 & 0.08 & 0.20 & 0.67 & 0.62 & 0.05 & 127 \\
PointCircle2 & 0.00 & 0.02 & -- & -- & -- & 0 \\
BallRun & 0.00 & 0.03 & 1.00 & 0.68 & 0.05 & 1 \\
BallCircle & 0.00 & 0.01 & -- & -- & -- & 0 \\
CarCircle & 0.00 & 0.00 & -- & -- & -- & 0 \\
CarRun & 1.00 & 0.92 & 0.00 & 0.15 & 0.02 & 1496 \\
DroneRun & 0.20 & 0.30 & 0.14 & 0.58 & 0.04 & 295 \\
 
\bottomrule
\end{tabular}
\end{center}
\end{table}

\paragraph{Multiple-testing procedures over the grid.} Fixed-sequence testing could in principle be beaten by procedures that reach deep thresholds without passing shallow ones. On the archived scores (2000 draws per cell), Bonferroni and Holm at level $\delta$ over the twelve-point grid certify far less everywhere, PointGoal1 at $0.295$ against fixed-sequence's $0.654$ and Walker2d at $0.295$ against $0.511$, with validity intact for all three (worst false-certification $0.063$ against $\delta = 0.1$). The $\delta/J$ penalty costs more than deep-threshold access returns at $n = 200$. Certification yield is limited by per-test power at this calibration size, not by the choice of procedure, and the deployed fixed-sequence walk is the power-optimal option among those tested.

\paragraph{Label complexity of certification.} Table~\ref{tab:labelcomplexity} reports the calibration budget at which each task first certifies in half of draws, obtained by extending the resampling sweep to $n \in \{800, 1600, 3200\}$ on the tasks with positive purity margin. Regressing the crossing point on the margin over the seven tasks that cross gives $\log n_{50} = -1.61 \log(\text{margin}) + 1.66$ with $R^2 = 0.997$, and a paired bootstrap over tasks puts the exponent at $1.51$ to $1.67$, below the $1/\text{margin}^2$ rate in $99.98$ percent of resamples. The one task with a small positive margin that does not reach half certification even at $n = 3200$ is Swimmer at $0.014$, consistent with the fit's prediction of a budget in the thousands; every task with a non-positive margin certifies at essentially zero for every budget, as Proposition~\ref{prop:dichotomy} requires.

\begin{table}[h]
\caption{Calibration budget $n_{50}$ at which certification reaches half of draws, against the purity margin (nine analysis tasks and five transfer tasks; resampling at $n$ up to $3200$).}
\label{tab:labelcomplexity}
\begin{center}
\small
\begin{tabular}{l rr}
\toprule
Task & purity margin & $n_{50}$ \\
\midrule
CarRun & $0.209$ & $64$ \\
PointGoal1 & $0.127$ & $140$ \\
Walker2d & $0.110$ & $203$ \\
CarGoal1 & $0.079$ & $318$ \\
HalfCheetah & $0.061$ & $501$ \\
DroneRun & $0.046$ & $728$ \\
CarGoal2 & $0.037$ & $1064$ \\
Swimmer & $0.014$ & $> 3200$ \\
\midrule
6 tasks with margin $\le 0$ & $-0.011$ to $-0.380$ & never \\
 
\bottomrule
\end{tabular}
\end{center}
\end{table}

\paragraph{The fallback's conservatism.} A lower confidence bound sits below the estimate it bounds, so on an uncertified task the fallback always keeps less than the V-filter at the calibration-estimated fraction: the same scores, a smaller and purer selection. Both consequences are visible in Table~\ref{tab:main}. On Hopper, whose pool is contaminated enough that the estimated fraction admits unsafe trajectories, the smaller selection is what brings the clone from $33$ to $3$. On Swimmer, whose pool is already clean at either fraction, the same conservatism removes data the clone needed and costs it the budget. The rule is therefore not uniformly better than the estimate; it is the one that keeps the guarantee's level, which is why it was fixed in advance.

\paragraph{A two-tier fallback.} When no threshold certifies, the procedure returns the fraction given by a Clopper-Pearson lower bound on the pool's safe mass. A natural alternative reruns the identical fixed-sequence test at even-odds evidence ($\delta = 0.5$) and takes the deepest passing threshold. We reject it on principle. Relaxing the evidence standard buys a deeper selection by abandoning the level at which the guarantee is stated, and the result is still returned as uncertified, so the procedure acquires the appearance of a test without its protection; the $\alpha$ lever relaxes the target instead and keeps a certificate. Its scores do not argue against it. Across the nine tasks it was run on, it changes two safety verdicts, converting Swimmer ($58.9$ to $18.0$) but breaking Hopper ($3.4$ to $43.8$), and it raises mean cost on five of the nine, including PointCircle2 ($87.2$ to $112.5$).

\paragraph{Mean-cost certificate.} The same machinery certifies a budget-native risk directly, by replacing the unsafe-fraction risk with the mean episodic cost of the selection (Hoeffding-without-replacement p-values, target, namely selection mean cost at or below the budget) yields a strictly more conservative variant that certifies only on PointGoal1 (5 to 8 percent of draws at $n = 200$) and records zero false certifications anywhere. The unsafe-fraction form remains the practical operating certificate; the mean-cost form documents that the framework extends to CMDP-native risks without modification. It also admits a loose bridge to policy cost: writing $\pi_S$ for the behavior policy of the selection and $\epsilon$ for the cloning error in average total variation, standard imitation arguments give $J_c(\pi) \le J_c(\pi_S) + 2T^2 c_{\max}\, \epsilon$, and $J_c(\pi_S)$ concentrates around the selection mean cost that this certificate bounds. The $T^2$ compounding makes the bound far too loose to certify in practice, but it separates
the remaining gap into exactly two quantities: the certified selection mean and the cloning error.

\section{Certificates on the deployed policy}
\label{app:policycert}

The certificate of Section~\ref{sec:calibration-method} bounds the composition of the
training set, not the cost of the policy learned from it, and Section~\ref{sec:limits}
states that boundary. The gap can be closed from the other side, without any
concentrability argument \citep{rashidinejad2022bridging,xie2021policy,zhan2022offline} and without
strengthening the supervisor. Once training ends
the clone is a fixed function, so its evaluation episodes are independent draws from the
task's initial-state distribution and the number exceeding the budget is binomial. Both
certificates below therefore consume exactly the supervision the pipeline already
assumes: one bit per episode, whether it went over budget. Each bound holds per cell,
with no joint claim across the seventy-five; counts of certifying tasks aggregate
separate bounds rather than asserting a simultaneous guarantee. They differ from the
composition certificate in one respect that matters. That certificate is computed from
the dataset alone and is available before the policy is trained or run; these are
computed from executed episodes, so they are deployment-time guarantees and require
the interaction an offline setting may not permit. We report them as a complement to
the offline certificate, not a replacement for it.

\paragraph{Violation probability.} Counting the episodes over budget among the $100$ evaluation episodes already
reported per cell and inverting the exact binomial gives a Clopper-Pearson upper limit on
$\Pr[\text{cost} > \text{budget}]$ at level $\delta$, reported in
Table~\ref{tab:policycert}, whose pooled values Section~\ref{sec:guarantee-validation}
quotes: curation cuts the deployment violation probability by roughly two thirds and
lands within two points of the full-label oracle, on a criterion stricter than the one the
benchmark scores. The same table records
a property of that criterion rather than of any method here: a quarter of episodes
exceed budget even for the oracle, because mean-cost safety constrains an average and
leaves the tail free.

\paragraph{Expected cost.} The tail bound is not the paper's safety criterion, which is
$\mathbb{E}[\text{cost}] \le$ budget, and bits carry no magnitudes. Bounding the mean
needs the cost values on the evaluation episodes, and it needs more of them: with
episodic cost in $[0, T]$ the range term of an empirical-Bernstein bound dominates at
$n = 100$ however favorable the observed costs are. We therefore rolled out $2000$
fresh episodes per cell on the same reported policies, with no retraining, and applied Maurer-Pontil with the range fixed
a priori at $T = 1000$ rather than taken from the observed
maximum. Table~\ref{tab:meancostcert} reports the result: $\mathbb{E}[\text{cost}]$ is
certified within budget on ten of the fifteen tasks, on every seed, at $\delta = 0.1$.
Two tasks are safe by point estimate but do not certify: CarButton2 misses by $1.1$ on
the bound while every one of its seeds is within budget, and PointCircle1 carries two
broken seeds at $59.1$ and $27.6$ against $4.7$ to $7.3$ for the other three. Such seeds
are not visible before deployment, and the selection statistics barely help: across the
$75$ calibrated cells, task-demeaned, the Spearman correlation of the $2000$-episode cost
with selection size, kept fraction and realized contamination is $0.32$, $0.36$ and
$0.30$. Each is significant at $p < 0.01$, so the association is real, but each accounts
for about a tenth of the variance, far too little to screen a seed before it is run.
Most failures are broad rather than rare: on eight of the fifteen failing cells the
median episode itself exceeds the budget, so the clone has learned an unsafe behavior
rather than an occasionally unlucky one. PointCircle1's two show both modes, medians of
$54$ and $14$ against a budget of $25$. That is the case for a
certificate on the deployed policy: the composition certificate audits the data, and
the seed that misuses it can only be caught by running it.

The larger evaluation also re-tests the headline. Comparing the $2000$-episode estimates
against the $100$-episode ones, all fifteen safe-or-unsafe verdicts are
unchanged and the safe-task count is twelve under both, so
Table~\ref{tab:main}'s result replicates at twenty times the evaluation budget.

\begin{table}[h]
\caption{Deployment violation probability: exact Clopper-Pearson upper limit on
$\Pr[\text{cost} > \text{budget}]$ at $\delta = 0.1$, from the $100$-episode evaluations
of Table~\ref{tab:main} (five seeds pooled per task; BC-All is a single training).
Pooling makes the bound a statement about a policy drawn uniformly from the five
training seeds, not about any one deployed clone; per-seed bounds are wider.}
\label{tab:policycert}
\begin{center}
\scriptsize
\begin{tabular}{l rrrr}
\toprule
Task & BC-All & BC-Safe & V-filter & Calibrated \\
\midrule
HalfCheetah & 0.449 & 0.076 & 0.008 & 0.060 \\
Walker2d & 0.138 & 0.091 & 0.104 & 0.026 \\
Ant & 1.000 & 0.056 & 0.005 & 0.005 \\
Hopper & 0.961 & 0.246 & 0.397 & 0.045 \\
Swimmer & 0.815 & 0.486 & 0.262 & 0.427 \\
CarGoal1 & 0.408 & 0.147 & 0.137 & 0.117 \\
CarGoal2 & 0.519 & 0.252 & 0.266 & 0.262 \\
PointGoal1 & 0.637 & 0.221 & 0.210 & 0.175 \\
PointGoal2 & 0.860 & 0.350 & 0.301 & 0.318 \\
PointButton1 & 0.588 & 0.270 & 0.227 & 0.166 \\
PointButton2 & 0.742 & 0.443 & 0.427 & 0.375 \\
CarButton1 & 0.479 & 0.272 & 0.191 & 0.206 \\
CarButton2 & 0.428 & 0.260 & 0.235 & 0.248 \\
PointCircle1 & 0.999 & 0.085 & 0.154 & 0.289 \\
PointCircle2 & 0.912 & 0.648 & 0.846 & 0.867 \\
 
\bottomrule
\end{tabular}
\end{center}
\end{table}

\begin{table}[h]
\caption{Expected-cost certificate from $2000$ fresh episodes per seed on the deployed
policies: mean cost, the Maurer-Pontil upper bound at $\delta = 0.1$ taken over the
worst seed, and the budget. Bold marks a bound within budget; the final column requires
every seed to certify.}
\label{tab:meancostcert}
\begin{center}
\scriptsize
\begin{tabular}{l rrrc}
\toprule
Task & mean cost & $\mathbb{E}[\text{cost}] \le$ & budget & certified \\
\midrule
HalfCheetah & 2.31 & \textbf{12.2} & 20 & yes \\
Walker2d & 1.11 & \textbf{8.4} & 20 & yes \\
Ant & 1.50 & \textbf{5.8} & 20 & yes \\
Hopper & 3.88 & \textbf{13.1} & 20 & yes \\
Swimmer & 59.69 & 221.3 & 20 & no \\
CarGoal1 & 8.07 & \textbf{12.9} & 25 & yes \\
CarGoal2 & 17.33 & \textbf{24.1} & 25 & yes \\
PointGoal1 & 9.36 & \textbf{16.2} & 25 & yes \\
PointGoal2 & 18.47 & \textbf{24.0} & 25 & yes \\
PointButton1 & 13.61 & \textbf{21.3} & 25 & yes \\
PointButton2 & 28.93 & 39.4 & 25 & no \\
CarButton1 & 14.81 & \textbf{22.3} & 25 & yes \\
CarButton2 & 18.40 & 26.1 & 25 & no \\
PointCircle1 & 20.96 & 65.8 & 25 & no \\
PointCircle2 & 89.03 & 130.8 & 25 & no \\
 
\bottomrule
\end{tabular}
\end{center}
\end{table}

\section{Extended ablation discussions}
\label{app:extended}

\subsection{Controls on the score and the selection}

\paragraph{Calibration-source and score controls.} Replacing the labeled calibration entirely with held-out preferences, thresholding at the $0.9$ quantile of dispreferred-parent scores, is safe on four of fifteen tasks and unsafe on the rest, including every low-safe-rate velocity task, so labeled calibration earns its keep exactly where safety is scarce. Flipping up to 30 percent of labels, a Boltzmann labeler that errs on borderline pairs, or a budget of 100 pairs degrades filter precision only mildly and never produces a false certification: validity flows from the clean calibration sample, not from score quality. The same machinery certifies the budget-native risk, the selection's mean episodic cost, in a strictly more conservative variant that certifies only on PointGoal1 with zero false certifications. Finally, $Q(s, a)$ under the identical Bradley-Terry loss leaves four of five representative tasks unchanged and is suggestively worse on Hopper while forfeiting the state-only value's policy-agnostic scoring, and oracle reward-to-go through the identical extraction raises reward while cost explodes, so the pipeline transmits whatever signal the value carries.

This appendix carries the selection-signal control table, the full versions of the ablation discussions summarized in Section~\ref{sec:ablations}, and per-cell detail for the gated variant of Section~\ref{sec:results}.

\paragraph{Selection-signal controls.} Table~\ref{tab:controls} reports both matched-size controls in full. Return selection buys reward on all fifteen tasks and pays for it with higher cost than random selection on twelve
of the fifteen, so ranking by return is not merely uninformative about safety but anti-aligned with it in these datasets; random selection's cost, by contrast, tracks each pool's base contamination and reads as a no-information reference for the selection operator. Table~\ref{tab:scorecorr} reports the score's Spearman correlations with episodic cost and return per task, and the cost-return correlation that makes some anti-alignment with return unavoidable on the navigation tasks. The V-filter's second fraction completes a small sweep over selection size: at a fixed top 25 percent it is safe on ten of fifteen, close to the estimated fraction everywhere except Hopper, whose ground-truth safe pool is 11 percent of its dataset, so a 25 percent selection necessarily dilutes it (cost 60 against 20 at the ground-truth-matched fraction of Table~\ref{tab:controls}). Safety tracks whether the fraction stays within the task's safe mass, which is exactly the quantity the calibrated threshold adapts to per task. The bottom-return column is the sharpest of the three controls. Its selected data is genuinely low-cost (on HalfCheetah, kept-set mean trajectory cost 39.9 against the pool's 115.4), yet its clone runs at cost 109: low-return trajectories are behaviorally incoherent, and cloning them reproduces neither their returns nor their costs, a second demonstration that training-set composition alone does not determine policy safety. Where bottom-return is safe, it is safe by collapse (Swimmer, reward 4 at cost 1) or with thinner margins than the V-filter at the same selection size (PointGoal1 cost 20.1 against 10.7), and it breaks on PointGoal2 (43.7) where the V-filter is safe at 19.8. On PointCircle1, where cost and return are nearly collinear ($+0.89$), the score itself approaches an anti-ranking of return ($-0.95$) and anti-return selection is accordingly safe there; the velocity tasks are where the two signals separate.

\begin{table}[h]
\caption{Spearman correlations of the trajectory score $g$ with episodic cost and episodic return (mean over V-ensemble seeds), and between cost and return, per task.}
\label{tab:scorecorr}
\begin{center}
\small
\begin{tabular}{l rrr}
\toprule
Task & $\rho(g, \text{cost})$ & $\rho(g, \text{return})$ & $\rho(\text{cost}, \text{return})$ \\
\midrule
HalfCheetah & -0.90 & -0.31 & +0.35 \\
Walker2d & -0.83 & +0.22 & +0.11 \\
Ant & -0.89 & -0.34 & +0.50 \\
Hopper & -0.72 & +0.20 & +0.29 \\
Swimmer & -0.90 & -0.36 & +0.35 \\
CarGoal1 & -0.65 & -0.65 & +0.31 \\
CarGoal2 & -0.67 & -0.56 & +0.42 \\
PointGoal1 & -0.66 & -0.79 & +0.40 \\
PointGoal2 & -0.75 & -0.59 & +0.36 \\
PointButton1 & -0.55 & -0.44 & +0.31 \\
PointButton2 & -0.61 & -0.47 & +0.28 \\
CarButton1 & -0.63 & -0.44 & +0.27 \\
CarButton2 & -0.64 & -0.53 & +0.44 \\
PointCircle1 & -0.93 & -0.95 & +0.89 \\
PointCircle2 & -0.70 & -0.90 & +0.84 \\
 
\bottomrule
\end{tabular}
\end{center}
\end{table}

\begin{table}[h]
\caption{Selection-signal controls at matched selection sizes (five seeds, mean $\pm$ 95\% CI; bold marks mean cost within budget). The V-score column is the V-filter at the ground-truth safe fraction, the labeled diagnostic of Section~\ref{sec:experiments}, not the calibration-estimated V-filter of Table~\ref{tab:main}. Ranking by return, the percentile-BC analogue, is unsafe on every task; random selection is unsafe on fourteen of fifteen; bottom-return selection is safe on eight of fifteen and catastrophically unsafe on the velocity tasks. The safety value is doing the work.}
\label{tab:controls}
\begin{center}
\scriptsize
\setlength{\tabcolsep}{3.5pt}
\resizebox{\textwidth}{!}{%
\begin{tabular}{l rr rr rr rr}
\toprule
& \multicolumn{2}{c}{Random selection} & \multicolumn{2}{c}{Return selection} & \multicolumn{2}{c}{Bottom-return} & \multicolumn{2}{c}{V-score (ours)} \\
\cmidrule(lr){2-3}\cmidrule(lr){4-5}\cmidrule(lr){6-7}\cmidrule(lr){8-9}
Task & $R$ & $C$ & $R$ & $C$ & $R$ & $C$ & $R$ & $C$ \\
\midrule
HalfCheetah & 2646\,\scriptsize$\pm$41 & 84\,\scriptsize$\pm$37 & 2805\,\scriptsize$\pm$18 & 105\,\scriptsize$\pm$40 & 1739\,\scriptsize$\pm$37 & 109\,\scriptsize$\pm$8 & 2103\,\scriptsize$\pm$190 & \textbf{3\,\scriptsize$\pm$3} \\
Walker2d & 2703\,\scriptsize$\pm$81 & 73\,\scriptsize$\pm$42 & 2849\,\scriptsize$\pm$80 & 69\,\scriptsize$\pm$21 & 1981\,\scriptsize$\pm$198 & 109\,\scriptsize$\pm$3 & 2696\,\scriptsize$\pm$12 & \textbf{1\,\scriptsize$\pm$1} \\
Ant & 2924\,\scriptsize$\pm$8 & 193\,\scriptsize$\pm$39 & 2984\,\scriptsize$\pm$14 & 96\,\scriptsize$\pm$19 & 2847\,\scriptsize$\pm$33 & 189\,\scriptsize$\pm$52 & 2540\,\scriptsize$\pm$41 & \textbf{1\,\scriptsize$\pm$1} \\
Hopper & 613\,\scriptsize$\pm$170 & 85\,\scriptsize$\pm$46 & 1404\,\scriptsize$\pm$283 & 139\,\scriptsize$\pm$37 & 218\,\scriptsize$\pm$10 & 48\,\scriptsize$\pm$7 & 1326\,\scriptsize$\pm$51 & 20\,\scriptsize$\pm$19 \\
Swimmer & 128\,\scriptsize$\pm$24 & 108\,\scriptsize$\pm$60 & 174\,\scriptsize$\pm$2 & 93\,\scriptsize$\pm$8 & 4\,\scriptsize$\pm$1 & \textbf{1\,\scriptsize$\pm$1} & 122\,\scriptsize$\pm$8 & 34\,\scriptsize$\pm$25 \\
CarGoal1 & 15.35\,\scriptsize$\pm$0.89 & \textbf{20.19\,\scriptsize$\pm$2.31} & 24.31\,\scriptsize$\pm$0.53 & 36.34\,\scriptsize$\pm$1.95 & 9.71\,\scriptsize$\pm$0.89 & \textbf{9.22\,\scriptsize$\pm$1.42} & 9.36\,\scriptsize$\pm$0.39 & \textbf{9.66\,\scriptsize$\pm$1.90} \\
CarGoal2 & 8.50\,\scriptsize$\pm$0.44 & 39.92\,\scriptsize$\pm$2.59 & 13.78\,\scriptsize$\pm$0.41 & 66.92\,\scriptsize$\pm$2.90 & 4.11\,\scriptsize$\pm$0.24 & \textbf{18.69\,\scriptsize$\pm$1.76} & 3.84\,\scriptsize$\pm$0.23 & \textbf{16.83\,\scriptsize$\pm$2.42} \\
PointGoal1 & 18.72\,\scriptsize$\pm$0.45 & 36.54\,\scriptsize$\pm$3.85 & 21.16\,\scriptsize$\pm$0.25 & 37.85\,\scriptsize$\pm$1.17 & 9.44\,\scriptsize$\pm$0.47 & \textbf{20.07\,\scriptsize$\pm$2.60} & 8.61\,\scriptsize$\pm$0.28 & \textbf{10.72\,\scriptsize$\pm$1.89} \\
PointGoal2 & 14.17\,\scriptsize$\pm$0.56 & 73.73\,\scriptsize$\pm$4.96 & 19.82\,\scriptsize$\pm$0.25 & 116.21\,\scriptsize$\pm$1.21 & 7.74\,\scriptsize$\pm$0.72 & 43.65\,\scriptsize$\pm$6.70 & 6.84\,\scriptsize$\pm$0.35 & \textbf{19.75\,\scriptsize$\pm$1.96} \\
PointButton1 & 5.70\,\scriptsize$\pm$0.59 & 43.43\,\scriptsize$\pm$3.48 & 17.09\,\scriptsize$\pm$0.59 & 101.90\,\scriptsize$\pm$5.92 & -0.00\,\scriptsize$\pm$0.75 & \textbf{24.12\,\scriptsize$\pm$2.15} & 0.81\,\scriptsize$\pm$0.28 & \textbf{20.52\,\scriptsize$\pm$4.81} \\
PointButton2 & 10.62\,\scriptsize$\pm$0.58 & 59.62\,\scriptsize$\pm$3.35 & 22.58\,\scriptsize$\pm$0.15 & 116.88\,\scriptsize$\pm$5.17 & 3.80\,\scriptsize$\pm$0.35 & 32.69\,\scriptsize$\pm$2.73 & 4.41\,\scriptsize$\pm$0.26 & 27.57\,\scriptsize$\pm$2.64 \\
CarButton1 & 2.84\,\scriptsize$\pm$1.58 & 54.80\,\scriptsize$\pm$7.30 & 3.76\,\scriptsize$\pm$1.25 & 115.58\,\scriptsize$\pm$9.41 & -1.78\,\scriptsize$\pm$0.95 & \textbf{19.75\,\scriptsize$\pm$1.34} & 1.06\,\scriptsize$\pm$0.52 & \textbf{18.50\,\scriptsize$\pm$4.25} \\
CarButton2 & -1.02\,\scriptsize$\pm$1.32 & 52.40\,\scriptsize$\pm$4.27 & -0.78\,\scriptsize$\pm$0.86 & 125.26\,\scriptsize$\pm$10.11 & -1.95\,\scriptsize$\pm$0.31 & \textbf{18.65\,\scriptsize$\pm$2.43} & -0.82\,\scriptsize$\pm$0.54 & \textbf{21.56\,\scriptsize$\pm$3.14} \\
PointCircle1 & 47.9\,\scriptsize$\pm$4.5 & 122.9\,\scriptsize$\pm$36.0 & 57.6\,\scriptsize$\pm$0.9 & 195.1\,\scriptsize$\pm$6.0 & 32.3\,\scriptsize$\pm$3.9 & \textbf{4.0\,\scriptsize$\pm$1.2} & 37.7\,\scriptsize$\pm$1.5 & \textbf{10.2\,\scriptsize$\pm$2.3} \\
PointCircle2 & 44.8\,\scriptsize$\pm$2.5 & 214.6\,\scriptsize$\pm$47.2 & 48.1\,\scriptsize$\pm$0.7 & 245.5\,\scriptsize$\pm$6.2 & 36.1\,\scriptsize$\pm$1.0 & 71.3\,\scriptsize$\pm$34.1 & 36.5\,\scriptsize$\pm$1.0 & 89.3\,\scriptsize$\pm$18.6 \\
 
\bottomrule
\end{tabular}}
\end{center}
\end{table}

\paragraph{Gated reward-aware variant, per cell.} On the four certified tasks the gated sub-selection keeps cost within budget in every cell of the distinct-selection grid (three learner seeds per selection, so these differ from the five-seed deployed means of Table~\ref{tab:main}): HalfCheetah $0.2$ to $7.2$, Walker2d $1.3$ to $18.7$, CarGoal1 $6.6$ to $9.4$ and PointGoal1 $5.1$ to $6.6$, with reward held within confidence intervals throughout. Walker2d's widest cell is the one to read with care: at $18.7$ against budget $20$ it is within specification but with little margin, on the selection whose realized contamination is lowest, so the intensity rather than the composition is what consumes the headroom there. Ungated, the identical top-half sub-selection applied to Hopper's contaminated selection multiplies mean cost to 82, reaching 289 on one seed; the certificate's refusal is what prevents this, the decision-signal role of Section~\ref{sec:gating} in action.

\paragraph{Pipeline sanity and the action-conditioned oracle.} Two further controls close the analysis. Feeding oracle reward-to-go through the identical extraction machinery, a single-task control on HalfCheetah over three seeds, raises reward above behavior cloning on every seed while cost explodes, confirming the pipeline transmits whatever signal the value carries; its gentleness, a few percent of movement, is the compositional mechanism of the obstacle. And granting the per-transition route everything it lacks at once, an action-conditioned, long-horizon, generalizing value distilled from full ground-truth cost labels, scored counterfactually through the dynamics ensemble, still does not reach safety at its documented configuration (PointGoal1 cost $31.5$, CarGoal2 $39.5$, PointGoal2 $49.6$, against budget 25), and it buys reward rather than safety (on PointGoal2, reward $12.2$ against the selection route's $6.2$). Even the maximally endowed per-transition operator does not produce trajectory-level safety on these tasks, which is the obstacle in its strongest form.

\paragraph{Action-conditioned value.} Replacing $V(s)$ with $Q(s, a)$ trained by the identical Bradley-Terry segment-sum loss, changing only the input to the concatenation of state and action, leaves four of five representative tasks unchanged (HalfCheetah, PointGoal1, PointGoal2, CarButton1: safe, comparable reward). On Hopper, the task with the scarcest safe fraction, $Q$ exceeds the budget on two of three seeds (costs 35.8, 37.2, 14.2; mean 29.1) where $V$ is safe at $18 \pm 7$ over five seeds; given Hopper's seed variance this is suggestive rather than conclusive, but the direction is consistent with the mechanism we would expect, since conditioning on actions ties the score to the behavior policy's action choices at exactly the states where selection is hardest. Action-conditioning also costs the properties that the state-only value provides elsewhere in the paper. The safety landscape and constraint-saliency analyses of Section~\ref{sec:analysis} read $V$ directly as a function of the state, while $Q$ admits no such reading without first marginalizing over actions under some policy. And although $V$ is shaped by the behavior distribution it was trained on, it does not condition on actions, so the same scorer applies to any trajectory regardless of which policy produced it, whereas $Q$'s scores are meaningful only on the support of the behavior policy's actions.

\paragraph{Segment length.} Rerunning the entire pipeline at $H \in \{10, 50\}$ on the nine analysis tasks (re-segmentation, preference regeneration, value ensembles, calibrated filter, three seeds each, against the main arm's first three seeds) moves the deployed filter's safe count only at the shortest length: 8 of 9 at $H = 50$ and at $H = 30$, 7 of 9 at $H = 10$. Swimmer fails at every length and Hopper only at $H = 10$. Certification is the more sensitive quantity, but three seeds estimate it far too coarsely to compare arms, so we resample $2000$ calibration draws per task and seed and report rates. Mean certification rate is $0.204$ at $H = 50$, $0.224$ at $H = 30$ and $0.102$ at $H = 10$, tracking the mean purity margin over the same arms ($-0.002$, $+0.003$, $-0.072$) rather than the segment length itself: the two longer arms leave almost the same purity attainable and certify at almost the same rate, while the shortest leaves materially less and certifies at half. It is worth stating that the realized three-seed counts suggest a sharper collapse than there is: PointGoal1 happened to certify on none of its three $H = 10$ seeds despite a true rate of $0.69$, which is the kind of artifact a seed count produces and a resampled rate does not. Across all $27$ task-length cells the margin predicts the rate almost exactly (Spearman $0.994$, $p < 10^{-24}$), an independent replication of the law of Section~\ref{sec:guarantee-validation} under a design change rather than across tasks: cells with negative margin certify at mean rate $0.015$ and never above $0.057$, while positive-margin cells average $0.327$. The guarantee holds in every arm, with worst unconditional false-certification $0.044$, $0.061$, and $0.054$ against $\delta = 0.1$. Segment length is therefore not a load-bearing choice for whether the resulting policy is safe, and it moves certification only through the purity it leaves attainable.

\begin{figure}[h]
\begin{center}
\includegraphics[width=0.98\linewidth]{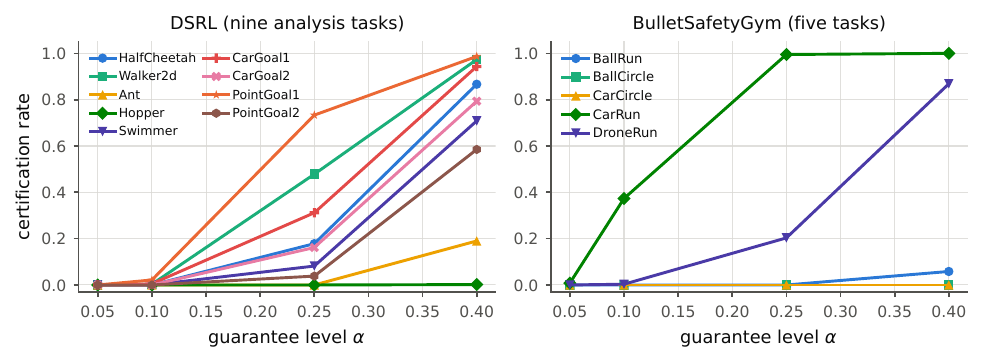}
\end{center}
\caption{The certification operating curve on both suites. Certification rate against the guarantee level $\alpha$ ($n = 200$; resampled calibration draws; mean over V-ensemble seeds). Rates rise with $\alpha$ in the order of each pool's attainable purity, the margin of Proposition~\ref{prop:rate}; where a pool cannot meet a level, the rate there stays at or near zero and within the $\delta$ the guarantee prices.}
\label{fig:alphacurve}
\end{figure}

\subsection{What the supervision buys}

\paragraph{Preference noise and budget.} Flipping preference labels degrades the score and never breaks the certificate. Across noise levels up to 30 percent and budgets down to 100 pairs the worst false-certification rate never exceeds $\delta$, reaching it exactly once at 30 percent noise on PointGoal1, because validity flows from the clean calibration sample rather than from score quality. Precision itself does fall: at 30 percent noise the navigation tasks lose $0.05$ to $0.14$ and Swimmer loses $0.38$, so the score degrades while the guarantee does not, which is the separation the design is for. Halving the preference budget to 100 pairs costs at most $0.12$ of precision (Hopper) and nothing measurable on several tasks. One effect deserves note: on Walker2d, mild label noise \emph{improves} the score (precision $0.834$ to $0.855$, certification rate $0.46$ to $0.77$ at five percent noise), acting as a regularizer on precisely the score tail whose pathology the threshold grid works around. A Boltzmann labeler with temperature $T_{\mathrm{lab}}$ (distinct from the horizon $T$), which errs preferentially on borderline pairs ($P(\text{prefer safer}) = \mathrm{logistic}(\text{cost gap}/T_{\mathrm{lab}})$, $T_{\mathrm{lab}} = 3$ on every task, realized error rates $5.3$ to $14.6$ percent), leaves navigation-task precision unchanged within $\pm 0.02$ and costs the velocity tasks more, at worst $-0.33$ on Swimmer, with false certification still inside $\delta$ at $0.08$: structured, boundary-concentrated labeler error behaves no worse for the certificate than uniform flips.

\paragraph{Labels-only filtering.} This control puts the calibration sample to that alternative use, training a supervised state-level safety classifier on only the $n$ labeled trajectories, using the same architecture, trajectory scoring rule, and matched selection fractions as the preference route, and clones its selection. Across label budgets $n \in \{50, 100, 200, 400\}$ it is safe on 11, 12, 11, and 13 of fifteen tasks, competitive on average, and evidence that filter-then-clone, not the preference machinery alone, carries much of the safety on this benchmark. Two properties separate the routes. First, which tasks come out safe under labels-only depends on the label draw. Its safe-task set overlaps across independent draws at a mean Jaccard of 0.85, dropping to 0.61 at $n = 100$, with Swimmer's cost ranging from 21 to 174 across budgets, Hopper safe only at $n = 400$, and PointButton2 and PointCircle2 alternating without pattern. The variation is attributable to the draw rather than the learner, retrained with three initializations on one fixed draw, the labels-only safe-task set overlaps at 0.95, above the preference route's 0.85 across value-ensemble seeds, so safe-set stability is not an advantage of the preference route. What the draw sensitivity does mark is that the labels-only deliverable depends on which $n$ trajectories were labeled, and a practitioner holding one draw cannot tell which tasks failed; the preference route's score does not consume the labels at all. Second, at every budget the labels-only route consumes its labels in training, so no valid certificate or refusal can be issued from the same sample; matching our deliverable would require a second labeled sample of equal size. The preference route separates the two supervision roles by construction. Preferences carry the scoring, and the labels remain held out to certify or refuse. Splitting the label budget between the two roles completes the comparison, with our exact total budget divided 100 for training and 100 for calibration, the labels-only route certifies at a rate comparable to ours (9 against 11 percent of seed-runs) and is equally safe, and at double the budget it certifies more often (24 percent) and again matches on safety. The certifications concentrate on disjoint task sets, ours on HalfCheetah, CarGoal1, and PointGoal1 and the split's on Walker2d and PointCircle1, so the two scorers are complementary rather than redundant. Each concentrates certifiable purity where the other's score has a weak tail. At matched label budget, preferences therefore buy one further safe task and no additional certification on this benchmark; their measured advantage is the graded selection on Hopper, and their structural advantage is that the budget judgments remain held out, so the sample that scores can also certify. Three kinds of judgment appear in this study: \emph{ordinal}, which of two clips is safer; \emph{binary}, whether one episode exceeded its budget; and \emph{cardinal}, the numeric cost of a transition. Labels-only and our calibration rest on the identical binary judgment, and the routes differ in how each spends it: labels-only consumes its labels in training, so a certificate needs a second sample or a split of the first, while the preference route carries scoring on ordinal comparisons and leaves every budget judgment held out. BC-Safe needs that same binary judgment on every trajectory rather than on 200; the offline safe-RL baselines go further and presume a cardinal cost on every transition. Which route is available is a property of the application, not of the budget. Appendix~\ref{app:labelsonly} reports the full sweep.

\paragraph{Cardinal supervision at matched budget.} The three judgments are ordered, and
the labels-only route occupies the middle rung, consuming the same binary over-budget
bit the calibration sample does and leaving the cardinal rung untested. This control
tests it. Everything is held fixed against the labels-only arm, the same script, the
same $200$ trajectories, the same per-task selection fraction, architecture, seeds,
clone and evaluation; only the target changes, from the bit to the standardised numeric
episodic cost, with a regression loss in place of the classification one. The registered
prediction, that magnitudes buy nothing at matched budget if safety is decodable from
the ordering alone, holds. Table~\ref{tab:cardinalctrl} reports both arms: the cardinal
selection is \emph{less} pure on eleven of the fifteen tasks, by $0.024$ on average,
and its clone is safe on twelve tasks against the bit route's eleven, a difference of
one task with flips in both directions and well inside the draw-to-draw variation of
Appendix~\ref{app:labelsonly}. Supervision richness is therefore not the binding
choice either, which is what makes the readout scale of Section~\ref{sec:analysis} the
one that is.

\begin{table}[h]
\caption{Binary against cardinal supervision at matched budget: the same filter trained
on the over-budget bit and on the numeric episodic cost of the same $200$ trajectories
(three seeds; kept-unsafe is the true unsafe fraction of the selection, lower better;
bold marks clone cost within budget).}
\label{tab:cardinalctrl}
\begin{center}
\scriptsize
\begin{tabular}{l rr rr r}
\toprule
& \multicolumn{2}{c}{binary (bit)} & \multicolumn{2}{c}{cardinal (cost)} & \\
\cmidrule(lr){2-3}\cmidrule(lr){4-5}
Task & kept-unsafe & clone $C$ & kept-unsafe & clone $C$ & budget \\
\midrule
HalfCheetah & 0.190 & \textbf{6.6} & 0.238 & 31.0 & 20 \\
Walker2d & 0.123 & \textbf{0.5} & 0.119 & \textbf{0.0} & 20 \\
Ant & 0.358 & \textbf{1.3} & 0.348 & \textbf{1.9} & 20 \\
Hopper & 0.360 & 49.5 & 0.398 & \textbf{7.4} & 20 \\
Swimmer & 0.150 & 21.0 & 0.224 & \textbf{19.6} & 20 \\
CarGoal1 & 0.288 & \textbf{7.3} & 0.281 & \textbf{7.5} & 25 \\
CarGoal2 & 0.325 & \textbf{18.7} & 0.341 & \textbf{18.7} & 25 \\
PointGoal1 & 0.195 & \textbf{10.9} & 0.197 & \textbf{9.9} & 25 \\
PointGoal2 & 0.358 & \textbf{18.8} & 0.356 & \textbf{17.6} & 25 \\
PointButton1 & 0.360 & \textbf{16.0} & 0.375 & \textbf{19.0} & 25 \\
PointButton2 & 0.430 & 27.1 & 0.463 & 30.1 & 25 \\
CarButton1 & 0.422 & \textbf{16.9} & 0.461 & \textbf{20.6} & 25 \\
CarButton2 & 0.409 & \textbf{19.3} & 0.414 & \textbf{22.8} & 25 \\
PointCircle1 & 0.137 & \textbf{1.2} & 0.162 & \textbf{2.8} & 25 \\
PointCircle2 & 0.208 & 64.5 & 0.299 & 48.7 & 25 \\
 
\bottomrule
\end{tabular}
\end{center}
\end{table}

\paragraph{Why labels-only works.} An episodic label is not one bit. It weakly labels every state of a thousand-step trajectory, so 200 trajectory labels yield roughly 200{,}000 labeled states. A subsampling test shows this multiplication is not the mechanism. Shrinking the states seen per labeled trajectory a hundredfold, to roughly 2{,}000 states in total, degrades selection purity only from 0.29 to 0.35 kept-unsafe and leaves the safe-task count essentially unchanged (13 and 10 of fifteen at tenfold and hundredfold reduction, against 11 at full). Labels-only is strong because trajectory-level safety in these environments is carried by low-dimensional, strongly separable state features, the same structure the saliency and landscape analyses of Section~\ref{sec:analysis} identify. This answers the paper's opening question in its sharpest form. Safety in these tasks is decodable from state alone, easily enough that the discriminating questions are not whether a scorer can be learned but whether its selections can be certified and how little supervision, of what kind, the certification requires.

\section{Labels-only control across label budgets}
\label{app:labelsonly}
Table~\ref{tab:labelsonly} reports the labels-only filtering control of Section~\ref{sec:ablations} at every label budget (three seeds per cell; the classifier, scoring rule, and matched selection fractions are identical to the preference route). Safe-task counts are stable in aggregate while the membership of the safe set shifts across budgets, which is the instability the main text describes. In the split variants the certified population is the dataset minus the classifier's training trajectories, over which the calibration half is exactly a uniform without-replacement sample, so Proposition~\ref{prop:guarantee} applies unchanged; training trajectories that are individually labeled safe are then added to the cloned selection, which can only lower its unsafe fraction below the certified level.

\begin{table}[h]
\caption{Labels-only filtering control across calibration budgets $n$ (three seeds, mean $\pm$ 95\% CI; bold marks mean cost within budget).}
\label{tab:labelsonly}
\begin{center}
\scriptsize
\setlength{\tabcolsep}{2.5pt}
\resizebox{\textwidth}{!}{%
\begin{tabular}{l rr rr rr rr}
\toprule
& \multicolumn{2}{c}{$n = 50$} & \multicolumn{2}{c}{$n = 100$} & \multicolumn{2}{c}{$n = 200$} & \multicolumn{2}{c}{$n = 400$} \\
\cmidrule(lr){2-3}\cmidrule(lr){4-5}\cmidrule(lr){6-7}\cmidrule(lr){8-9}
Task & $R$ & $C$ & $R$ & $C$ & $R$ & $C$ & $R$ & $C$ \\
\midrule
HalfCheetah & 2362\,\scriptsize$\pm$107 & \textbf{7\,\scriptsize$\pm$10} & 2158\,\scriptsize$\pm$194 & \textbf{0\,\scriptsize$\pm$0} & 2337\,\scriptsize$\pm$81 & \textbf{7\,\scriptsize$\pm$9} & 2299\,\scriptsize$\pm$46 & \textbf{2\,\scriptsize$\pm$2} \\
Walker2d & 2722\,\scriptsize$\pm$17 & \textbf{4\,\scriptsize$\pm$2} & 2694\,\scriptsize$\pm$26 & \textbf{0\,\scriptsize$\pm$0} & 2689\,\scriptsize$\pm$16 & \textbf{1\,\scriptsize$\pm$1} & 2702\,\scriptsize$\pm$9 & \textbf{0\,\scriptsize$\pm$0} \\
Ant & 2535\,\scriptsize$\pm$64 & \textbf{1\,\scriptsize$\pm$0} & 2498\,\scriptsize$\pm$25 & \textbf{1\,\scriptsize$\pm$0} & 2585\,\scriptsize$\pm$55 & \textbf{1\,\scriptsize$\pm$0} & 2577\,\scriptsize$\pm$108 & \textbf{1\,\scriptsize$\pm$1} \\
Hopper & 787\,\scriptsize$\pm$519 & 30\,\scriptsize$\pm$36 & 611\,\scriptsize$\pm$654 & 27\,\scriptsize$\pm$25 & 841\,\scriptsize$\pm$175 & 50\,\scriptsize$\pm$43 & 686\,\scriptsize$\pm$88 & \textbf{11\,\scriptsize$\pm$5} \\
Swimmer & 105\,\scriptsize$\pm$6 & 28\,\scriptsize$\pm$23 & 137\,\scriptsize$\pm$26 & 174\,\scriptsize$\pm$137 & 120\,\scriptsize$\pm$13 & 21\,\scriptsize$\pm$23 & 129\,\scriptsize$\pm$12 & 82\,\scriptsize$\pm$108 \\
CarGoal1 & 11.09\,\scriptsize$\pm$0.57 & \textbf{12.20\,\scriptsize$\pm$2.96} & 10.65\,\scriptsize$\pm$0.42 & \textbf{8.78\,\scriptsize$\pm$1.88} & 9.55\,\scriptsize$\pm$1.83 & \textbf{7.30\,\scriptsize$\pm$2.36} & 10.57\,\scriptsize$\pm$0.36 & \textbf{8.57\,\scriptsize$\pm$0.96} \\
CarGoal2 & 4.51\,\scriptsize$\pm$0.72 & \textbf{22.55\,\scriptsize$\pm$6.01} & 4.44\,\scriptsize$\pm$0.83 & \textbf{18.47\,\scriptsize$\pm$3.68} & 4.35\,\scriptsize$\pm$0.51 & \textbf{18.70\,\scriptsize$\pm$0.19} & 4.29\,\scriptsize$\pm$0.30 & \textbf{17.46\,\scriptsize$\pm$2.96} \\
PointGoal1 & 11.86\,\scriptsize$\pm$1.97 & \textbf{17.84\,\scriptsize$\pm$5.62} & 9.28\,\scriptsize$\pm$0.80 & \textbf{12.32\,\scriptsize$\pm$2.00} & 8.96\,\scriptsize$\pm$1.23 & \textbf{10.86\,\scriptsize$\pm$2.06} & 7.65\,\scriptsize$\pm$0.36 & \textbf{8.18\,\scriptsize$\pm$0.65} \\
PointGoal2 & 6.76\,\scriptsize$\pm$0.48 & \textbf{20.88\,\scriptsize$\pm$2.56} & 6.68\,\scriptsize$\pm$1.07 & \textbf{23.81\,\scriptsize$\pm$1.45} & 6.27\,\scriptsize$\pm$0.62 & \textbf{18.82\,\scriptsize$\pm$0.13} & 6.10\,\scriptsize$\pm$0.72 & \textbf{17.76\,\scriptsize$\pm$1.67} \\
PointButton1 & 1.69\,\scriptsize$\pm$1.23 & \textbf{19.33\,\scriptsize$\pm$4.79} & 2.01\,\scriptsize$\pm$1.45 & 27.79\,\scriptsize$\pm$5.82 & 0.77\,\scriptsize$\pm$0.83 & \textbf{16.02\,\scriptsize$\pm$2.41} & 0.75\,\scriptsize$\pm$1.03 & \textbf{14.30\,\scriptsize$\pm$3.19} \\
PointButton2 & 4.87\,\scriptsize$\pm$0.76 & 30.93\,\scriptsize$\pm$2.02 & 4.21\,\scriptsize$\pm$0.31 & \textbf{24.51\,\scriptsize$\pm$3.13} & 4.29\,\scriptsize$\pm$0.52 & 27.15\,\scriptsize$\pm$3.11 & 4.36\,\scriptsize$\pm$0.53 & 27.61\,\scriptsize$\pm$2.12 \\
CarButton1 & 0.76\,\scriptsize$\pm$0.71 & \textbf{18.72\,\scriptsize$\pm$2.22} & 1.21\,\scriptsize$\pm$0.72 & \textbf{21.98\,\scriptsize$\pm$6.18} & 1.39\,\scriptsize$\pm$0.20 & \textbf{16.86\,\scriptsize$\pm$1.29} & 1.27\,\scriptsize$\pm$0.18 & \textbf{16.36\,\scriptsize$\pm$1.67} \\
CarButton2 & -0.88\,\scriptsize$\pm$0.79 & \textbf{16.49\,\scriptsize$\pm$4.56} & -1.31\,\scriptsize$\pm$0.55 & \textbf{20.34\,\scriptsize$\pm$5.00} & -1.68\,\scriptsize$\pm$1.04 & \textbf{19.28\,\scriptsize$\pm$4.92} & -1.14\,\scriptsize$\pm$0.35 & \textbf{21.06\,\scriptsize$\pm$5.46} \\
PointCircle1 & 33.9\,\scriptsize$\pm$0.9 & \textbf{2.3\,\scriptsize$\pm$1.2} & 35.0\,\scriptsize$\pm$3.3 & \textbf{5.6\,\scriptsize$\pm$4.0} & 34.8\,\scriptsize$\pm$3.8 & \textbf{1.2\,\scriptsize$\pm$0.5} & 33.6\,\scriptsize$\pm$2.2 & \textbf{1.9\,\scriptsize$\pm$1.0} \\
PointCircle2 & 35.3\,\scriptsize$\pm$0.6 & 41.3\,\scriptsize$\pm$27.9 & 33.0\,\scriptsize$\pm$1.8 & \textbf{21.5\,\scriptsize$\pm$17.5} & 36.1\,\scriptsize$\pm$0.9 & 64.5\,\scriptsize$\pm$27.4 & 32.6\,\scriptsize$\pm$2.7 & \textbf{12.8\,\scriptsize$\pm$10.8} \\
 
\bottomrule
\end{tabular}}
\end{center}
\end{table}

\section{Hyperparameters}
\label{app:hyper}
Value ensembles: $K = 3$, two-layer MLPs of width 256, 300 epochs, batch 512, Adam $3 \times 10^{-4}$, 1000 preference pairs of length 30 per task. Preference pairs are sampled from parent trajectories in the bottom and top quartiles of episodic cost, and labeled by the two segments' own summed costs, ties skipped. Behavior cloning: 100 epochs, batch 512. CPL baseline and CPL composability arms: behavior-cloning pretraining for 50 epochs then 300 epochs of CPL, batch 512, Adam $3 \times 10^{-4}$, BC regularization weight $0.1$, temperature $1$, conservative bias $0.5$, on the same 1000 preference pairs per task the value ensemble consumes. On a certified selection the same budget of 1000 pairs is redrawn from within the selection, since pairs are sampled to contrast episodic cost and almost none of the pool's pairs have both endpoints inside a selection. Calibration: $\alpha = 0.25$, $\delta = 0.1$, $n = 200$, threshold grid at score quantiles $\{0.85, 0.80, \dots, 0.30\}$, uncertified fallback fraction from the one-sided Clopper-Pearson lower bound (level $\delta$) on pool safe mass, minimum selection 50 trajectories. The grid starts at $0.85$ rather than deeper in the tail because above it too few calibration trajectories clear the threshold for the exact test to reject at $\delta$. All $200$ labels are spent before the threshold is known, and the uncertified fallback uses every one, but the accepted test rests on the $m$ that land above the chosen threshold: $m = 28$, $42$, $31$ and $28$ on the deployed certified draws of HalfCheetah, Walker2d, CarGoal1 and PointGoal1, giving $p = 0.014$, $0.027$, $0.068$ and $0.044$ against $\delta = 0.1$. Walker2d's walk is the one that does not stop at the first grid threshold, accepting at the third. Reward-aware sub-selection: top 50 percent by episodic return, applied only under certification. Certified return-weighted cloning, meaning trajectory resampling with probability proportional to $\exp(\mathrm{clip}(z(R), \pm \kappa))$, return standardized within the selection, selection size preserved, $\kappa \in \{1, 2, 3\}$ reported with $\kappa = 2$ in Table~\ref{tab:compose}, applied only under certification. Advantage-weighted extraction (analysis section): weights $\exp(A/\beta)$ with $\beta = 0.1$, weight clip 20, 50 epochs at batch 1024, every extraction arm starting from the same BC-All policy trained on the active segments (Table~\ref{tab:main}'s BC-All row instead clones whole trajectories at the protocol above), advantage window $h = 1$ unless stated. Evaluation: 100 episodes, deterministic policy mean. Sensitivity: averaged over its members, a single network matches the $K = 3$ filter precision within $0.006$ on all nine analysis tasks, though individual members vary by up to $0.075$, so the ensemble size is not load-bearing for selection and serves the aggregation diagnostics; a segment-length sweep appears in Appendix~\ref{app:extended}.

\end{document}